\title{Bayesian Optimization with High-Dimensional Outputs}
\DeclareMathOperator*{\argmax}{arg\,max}
\newtheorem{proposition}{Proposition}
\newtheorem{lemma}{Lemma}
\newcommand{\Ktt}{\mathcal{K}_{\text{joint}}}
\author{
  Wesley J. Maddox \\
  New York University \\
  \texttt{wjm363@nyu.edu}
  \And 
  Maximilian Balandat \\
  Facebook \\
  \texttt{balandat@fb.com}
  \And
  Andrew Gordon Wilson \\
  New York University \\
  \texttt{andrewgw@cims.nyu.edu}
  \And Eytan Bakshy \\
  Facebook \\
    \texttt{eytan@fb.com}
}
\begin{document}

\maketitle

\begin{abstract}
    Bayesian Optimization is a sample-efficient black-box optimization procedure that is typically applied to problems with a small number of independent objectives. However, in practice we often wish to optimize objectives defined over many correlated outcomes (or “tasks”). For example, network operators may want to optimize the coverage of a cell tower network across a dense grid of locations. Similarly, engineers may seek to balance the performance of a robot across dozens of different environments via constrained or robust optimization. However, the Gaussian Process (GP) models typically used as probabilistic surrogates for multi-task Bayesian Optimization scale poorly with the number of outcomes, which greatly limitis their applicability. We devise an efficient technique for exact multi-task GP sampling that combines exploiting Kronecker structure in the covariance matrices with Matheron’s identity, allowing us to perform Bayesian Optimization using exact multi-task GP models with tens of thousands of correlated outputs. In doing so, we achieve substantial improvements in sample efficiency compared to existing approaches that only model aggregate functions of the outcomes. We demonstrate how this unlocks a new class of applications for Bayesian Optimization across a range of tasks in science and engineering, including optimizing interference patterns of an optical interferometer with more than 65,000 outputs. 
\end{abstract}

\section{Introduction}
Many problems in science and engineering involve reasoning about multiple, correlated outputs.
For example, cell towers broadcast signal across an area, and thus signal strength is spatially correlated. In randomized experiments, treatment effects on multiple outcomes are naturally correlated due to shared causal mechanisms. 
Without further knowledge of the internal mechanisms (i.e., in a ``black-box'' setting), Multi-task Gaussian processes (MTGPs) are a natural model for these types of problems as they model the relationship between each output (or ``task'') while maintaining the gold standard predictive capability and uncertainty quantification of Gaussian processes (GPs). 
Many downstream analyses require more of the model than just prediction; they also involve sampling from the posterior distribution to estimate quantities of interest.
For instance, we may be interested in the performance of a complex stock trading strategy that requires modeling different stock prices jointly, and want to characterize its conditional value at risk (CVaR) \citep{rockafellar2000optimization}, which generally requires Monte Carlo (MC) estimation strategies \citep{cakmak2020risk}.
Or, we want to use MTGPs in Bayesian Optimization (BO), a method for sample-efficient optimization of black-box functions. 
Many state of the art BO approaches use MC acquisition functions \citep{wilson2018maxbo,astudillo_bayesian_2019,balandat_botorch_2020}, which require sampling from the posterior distribution over new candidate data points.

Drawing posterior samples from MTGPs means sampling over all tasks and all new data points, which typically scales \textit{multiplicatively} in the number of tasks ($t$) and test data points ($n$), e.g. like $\mathcal{O}(n^3 t^3)$ \citep{shah16correlated,bonilla_multi-task_2007}.
For problems with more than a few tasks,
posterior sampling thus quickly becomes intractable due to the size of the posterior covariance matrix. 
This is especially problematic in the case of many real-world problems that can have hundreds or thousands of correlated outputs that should be modelled jointly in order to achieve the best performance. 

\begin{wrapfigure}{r}{0.5\textwidth}
    \vspace{-0.5cm}
	\centering
	\includegraphics[width=\linewidth,clip,clip,trim=0cm 12.75cm 0cm 0cm]{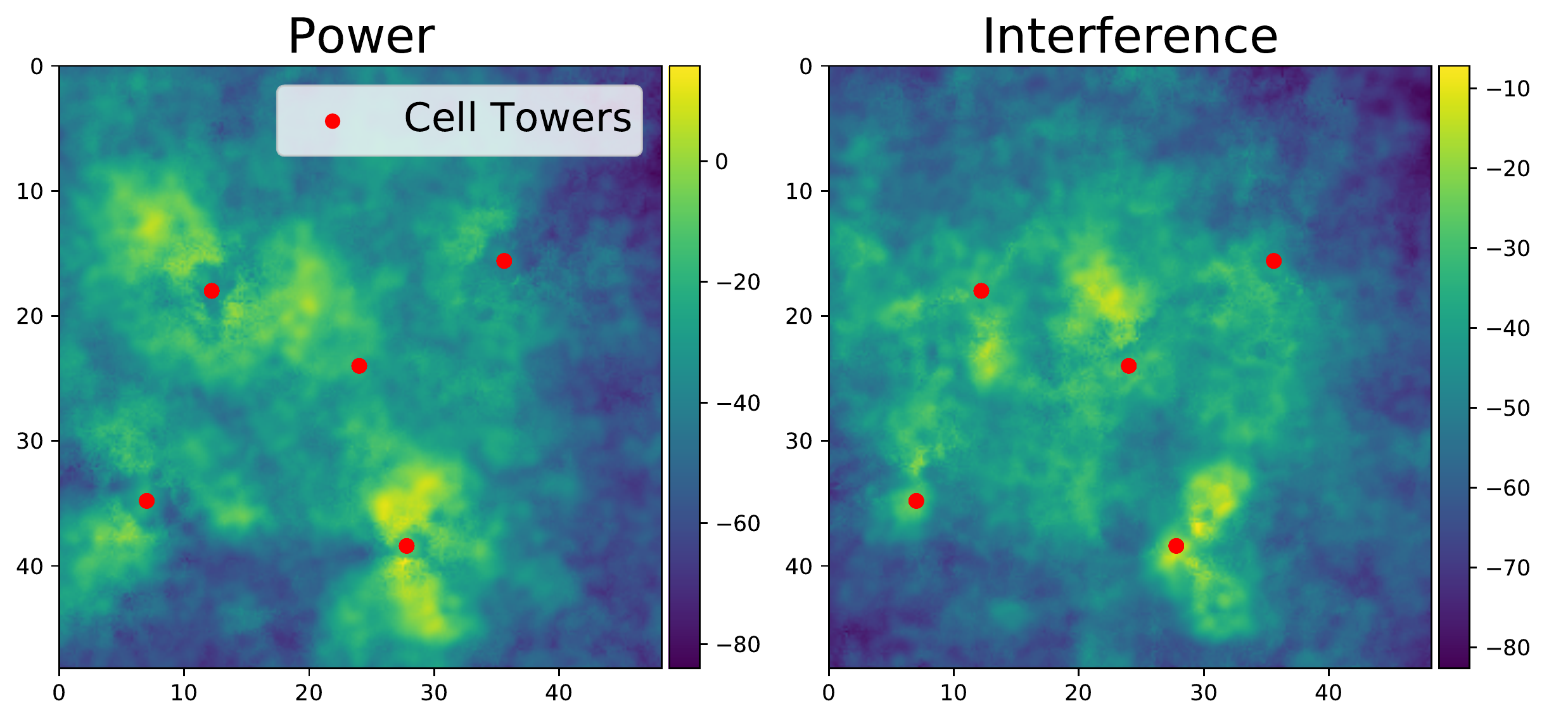}
	\caption{Map of radio signal power and interference for fixed locations of cell towers (red dots). Outcomes vary smoothly with respect to the towers' down-tilt angle and transmission power. Our goal is to optimize statistics of these maps as to maximize the overall signal coverage across an area while minimizing interference. 
	}
	\label{fig:celltower_schematic}
	\vspace{-0.25cm}
\end{wrapfigure}

For instance, the cell tower signal maps in Figure \ref{fig:celltower_schematic} each contain 2,500 outputs (pixels). In this problem, we aim to jointly tune the down-tilt angle and transmission power of the antennas on each cell tower (locations shown in red) to optimize a global coverage quality metric, which is a known function of power and interference at each location \citep{dreifuerst2020optimizing}. Since simulating the power and interference maps given a parameterization is computationally costly, traditionally one might apply BO to optimize the aggregate metric. At its core, this problem is a composite BO problem \citep{astudillo_bayesian_2019,balandat_botorch_2020}, so we expect an approach that models the constituent outcomes at each pixel individually to achieve higher sample efficiency. However, modelling each pixel using existing approaches used for BO is completely intractable in this setting, as we would have to train and sample from a MTGP with 5,000 tasks.

To remedy the poor computational scaling with the number of tasks, we exploit Matheron's rule for sampling from GP posterior distributions \citep{chiles2009geostatistics,wilson_efficiently_2020}.
We derive an efficient method for MTGP sampling that exploits Kronecker structure inherent to the posterior covariance matrices, 
thereby reducing the complexity of sampling from the posterior to become effectively \textit{additive} in the combination of tasks of data points, i.e. $\mathcal{O}(n^3 + t^3),$ as compared to $\mathcal{O}(n^3 t^3).$
Our implementation of Matheron's rule draws from the \emph{exact} posterior distribution and does not require random features or inducing points, unlike decoupled sampling \citep{wilson_efficiently_2020}.
More specifically, our contributions are as follows:
\begin{itemize}[itemsep=3pt,topsep=1pt,leftmargin=15pt]
	\item We propose an \emph{exact} sampling method for multi-task Gaussian processes that has additive time costs in the combination of tasks and data points, rather than multiplicative (Section \ref{sec:matheron}).
	\item We demonstrate empirically how large-scale sampling from MTGPs can aid in challenging multi-objective, constrained, and contextual Bayesian Optimization problems (Section \ref{sec:apps}).
	\item We introduce a method for efficient posterior sampling for the High-Order Gaussian Process (HOGP) model \citep{zhe_scalable_2019}, allowing it to be used for Bayesian Optimization (Section \ref{sec:hogp}). 
	This advance allows us to more efficiently perform BO on high-dimensional outputs such as images --- including optimizing PDEs, optimizing placements of cell towers for cell coverage, and tuning the mirrors of an optical interferometer which optimizes over 65,000 tasks jointly (Section \ref{sec:comp_bo_app}).
\end{itemize}

The rest of the paper is organized as follows: First, in Section \ref{sec:background} we review GPs, MTGPs, and sampling procedures from the posterior in both GPs and MTGPs. In Section \ref{sec:matheron}, we review Matheron's rule for sampling from GP posteriors and explain how to employ it for efficient sampling from MTGP models including the HOGP model. In Section \ref{sec:apps}, we illustrate the utility of our method on a wide suite of problems ranging from constrained BO to the first demonstration of large scale composite BO with the HOGP. 
Please see Appendix \ref{app:limitations} for discussion of the limitations and broader impacts of our work.
Our code is fully integrated into BoTorch, see \url{https://botorch.org/tutorials/composite_bo_with_hogp} and \url{https://botorch.org/tutorials/composite_mtbo} for tutorials.

\section{Background}\label{sec:background}

\subsection{Bayesian Optimization}\label{sec:bo_intro}
In Bayesian Optimization (BO), the goal is to minimize an expensive-to-evaluate black-box function, i.e., finding $\min_{x \in \mathcal{X}} f(x),$ by constructing a \emph{surrogate model} to emulate that function.
Gaussian processes (GPs) are often used as surrogates due to their flexibility and well-calibrated uncertainty estimates. 
BO optimizes an \emph{acquisition function} defined on the predictive distribution of the surrogate model to select the next point(s) to evaluate on the true function. 
These acquisition functions are often written as intractable integrals that are typically evaluated using Monte Carlo (MC) integration \citep{wilson2018maxbo, astudillo_bayesian_2019}.
MC acquisition functions rely on posterior samples from the surrogate model, which should support fast sampling capabilities for efficient optimization \citep{balandat_botorch_2020}.
BO has been applied throughout machine learning, engineering, and the sciences, 
and many extensions to the setting described above exist.
We focus on multi-task BO (MTBO), where $f(x)$ is composed of several correlated tasks \citep{swersky_multi-task_2013,chowdhury2021no}.

There are many sub-classes of MTBO problems: constrained BO uses surrogate models to optimize an objective subject to black-box constraints \citep{gardner2014bayesian,eriksson2020scalable,gelbart2014bayesian}, contextual BO models a single function that varies across different contexts or environments \citep{krause2011contextual,char2019offline,feng_high-dimensional_2020}, multi-objective BO aims to explore a Pareto frontier across several objectives \citep{khan2002multi,knowles2006parego,emmerich2005single,emmerich2011hypervolume,daulton_differentiable_2020}, and composite BO considers the setting of a differentiable objective function defined on the outputs of a vector-valued black-box function   \citep{uhrenholt2019efficient,astudillo_bayesian_2019,balandat_botorch_2020}.
In all of these problems, the setting is similar: several outputs are modelled by the surrogate, whether the output is a constraint, another objective, or a separate context. As the outputs are often correlated, multi-task Gaussian processes, which model the relationships between the outputs in a data-efficient manner, are a natural and common modeling choice.

\subsection{Gaussian Processes}\label{sec:gp_intro}
\textbf{Single Output Gaussian Processes:}
We briefly review single output GPs, see \citet{rasmussen_gaussian_2008} for a more detailed introduction.
We assume that $y = f(x) + \varepsilon$, $f \sim \mathcal{GP}(0, k_\theta(x,x'))$, and $\varepsilon \sim \mathcal{N}(0, \sigma^2)$, where $f$ is the noiseless latent function and $y$ are noisy observations of $f$  with standard deviation $\sigma$. 
$k_\theta(x,x')$ is the kernel with hyperparameters $\theta$ (we will drop the dependence on $\theta$ for simplicity); we use $K_{AB} := k_\theta(A, B)$ to refer to the evaluated kernel function on data points $A$ and $B,$ a matrix which has size $|A| \times |B|.$
The predictive distributions over new data points, $x_{\text{test}},$ is given by the conditional distribution of the Gaussian distribution.
That is, $p(f(x_{\text{test}}) | \mathcal{D}, \theta) = \mathcal{N}(\textcolor{blue}{\mu_{f | \mathcal{D}}^*}, \textcolor{blue}{\Sigma_{f | \mathcal{D}}^*})$, where $\mathcal{D}:=\{X, \mathbf y\}$ is the training dataset of size $n = |X|$ and 
\begin{align}
	\textcolor{blue}{\mu_{f | \mathcal{D}}^*} &= K_{x_{\text{test}} X} (K_{\text{train}} + \sigma^2 I )^{-1} \mathbf y \label{main:eq:exact_pred_mean}, \\
	\textcolor{blue}{\Sigma_{f | \mathcal{D}}^*} &= K_{x_{\text{test}} x_{\text{test}}} - K_{x_{\text{test}} X} (K_{\text{train}} + \sigma^2 I )^{-1}K_{X x_{\text{test}}}, \label{main:eq:exact_pred_var}
\end{align}
with $K_{\text{train}} := K_{XX}$.
For simplicity, we will drop the subscripts $f|\mathcal{D}$ in all future statements. 
Computing the predictive mean $\textcolor{blue}{\mu^*}$ and variance $\textcolor{blue}{\Sigma^*}$ requires $\mathcal{O}(n^3)$ time and $\mathcal{O}(n^2)$ space when using Cholesky decompositions for the linear solves \citep{rasmussen_gaussian_2008}. 
Sampling is usually performed by 
\begin{align}
	f(\mathbf x_{\text{test}}) | (Y=y) = \textcolor{blue}{\mu^*} + (\textcolor{blue}{\Sigma^*})^{1/2} z,
	\label{eq:dist_sampling}
\end{align}
where $z \sim \mathcal{N}(0, I).$
Computing $s$ samples at $n_{\text{test}}$ test points from the predictive distribution costs $\mathcal{O}(n^3 + s n_{\text{test}}^2+ n^2 n_{\text{test}} + n_{\text{test}}^3),$ computed by adding up the cost of all of the matrix vector multiplications (MVMs) and matrix solves.
For fixed $\mathbf x_{\text{test}}$ we can incur the cubic terms only once by re-using Cholesky factorizations of
 $K_{\text{train}} + \sigma^2 I$ and $\textcolor{blue}{\Sigma^*}$ for each sample.

To reduce the time complexity, we can replace all matrix solves with $r<n$ steps of conjugate gradients (CG) and the Cholesky decompositions with rank $r<n$ Lanczos decompositions (an approach called LOVE~\citep{pleiss_constant-time_2018}). 
These change the major scaling from $n^3$ down to $rn^2$ and the overall time complexity to $\mathcal{O}(rn^2 + s r n_{\text{test}}+ r n n_{\text{test}} + r n_{\text{test}}^2)$ \citep{pleiss_constant-time_2018,gardner_gpytorch_2018}.
In general, $r \ll n$ is used and is usually accurate to nearly numerical precision \citep{gardner_gpytorch_2018}.
We provide additional details in Appendix \ref{app:rel_work}.

\textbf{Multi-Output Gaussian Processes:}
One straightforward way of modelling multiple outputs is to consider each output as an independent GP, modelled in batch together, either with shared or independent hyperparameters \citep{rasmussen_gaussian_2008,gardner_gpytorch_2018,eriksson2020scalable}. However, there are two major drawbacks to this approach: (i) the model is not able to model correlations between the outputs, and (ii) if there are many outputs then  maintaining a separate model for each can result in high memory usage and slow inference times. 
To remedy these issues, \citet{higdon2008computer} propose the PCA-GP, using principal component analysis (PCA) to project the outputs to a low-dimensional subspace and then use batch GPs to model the  lower-dimensional outputs.

We consider \textbf{multi-task Gaussian processes} (MTGPs) with the intrinsic co-regionalization model (ICM), which considers the relationship between responses as the product of the data and task features \citep{goovaerts1997geostatistics,bonilla_multi-task_2007,alvarez2012kernels}. We focus on this model due to its popularity and simplicity, leaving a similar derivation of the linear model of co-regionalization to Appendix \ref{appdx:extension_LMC}.
Given inputs $x$ and $x'$ belonging to tasks $i$ and $j$, respectively, the covariance under the ICM is $k([x,i], [x', j]) = k_D(x,x') k_t(i,j)$. 
Given $n$ data points $X$ with the $n$ associated task indices $\mathcal{I}$, the covariance is a Hadamard product of the data kernel and the task kernel, $K_{\text{train}} = K_{XX} \odot K_{\mathcal{I}\mathcal{I}},$ and the response $\mathbf y$ is still of size $n$. We term this implementation of multi-task GPs ``Hadamard MTGPs'' in our experiments. In general, there is no easily exploitable structure in this model. 

If we observe each task at each data point (the so-called  \emph{block design} case), the covariance matrix becomes Kronecker structured, e.g. $K_{\text{train}} = K_{XX}\otimes K_{T},$ where $K_{XX}$ is the data covariance matrix and $K_T$ is the $t \times t$ task covariance matrix between tasks \citep{bonilla_multi-task_2007},  and we now have $nt$ scalar responses.
To simplify our exposition, we assume that $K_T$ is full-rank (this is not required as we can use pseudo-inverses in place of inverses).
Thus, the GP prior is
$\text{vec}(\mathbf y) \sim \mathcal{N}(0, K_{XX} \otimes K_T),$
where $\mathbf y$ is a matrix of size $n \times t,$ and  $\text{vec}(\mathbf y)$ is a vector of shape $nt.$
The GP predictive distribution is given by $p(f^* | x_{\text{test}}, \mathcal{D}) = \mathcal{N}\left(\textcolor{blue}{\mu^*}, \textcolor{blue}{\Sigma^*} \right),$ where
\begin{align}
	\textcolor{blue}{\mu^*} &= (K_{x_{\text{test}}, X} \otimes K_T)(K_{XX} \otimes K_T + \sigma^2 I_{nT})^{-1} \text{vec}(\mathbf y), \nonumber  \\
	\textcolor{blue}{\Sigma^*} &= (K_{x_{\text{test}}, x_{\text{test}}} \otimes K_T) -
	(K_{x_{\text{test}}, X} \otimes K_T)(K_{XX} \otimes K_T + \sigma^2 I_{nT})^{-1} (K_{x_{\text{test}}, X}^\top \otimes K_T). \label{eq:mt_posterior}
\end{align}
The kernel matrix on the training data, $K_{XX} \otimes K_T$, is of size $nt \times nt$, which under standard (Cholesky-based) approaches yields inference cubic and multiplicative in $n$ and $t$, that is $\mathcal{O}(n^3 t^3)$. However, the Kronecker structure can be exploited to compute the posterior mean and variance in $\mathcal{O}(nt(n+t) + n^3 + t^3),$ which is dominated by the individual cubic terms \citep{rougier2008efficient,stegle2011efficient}. 

Sampling from the posterior distribution in  \eqref{eq:mt_posterior} produces additional computational challenges as we must compute a root (e.g. Cholesky) decomposition of $\textcolor{blue}{\Sigma^*},$
which naively costs $\mathcal{O}((n_{\text{test}}t)^3)$ plus an additional 
Cholesky decomposition of $(K_{XX} \otimes K_T + \sigma^2 I)^{-1},$ which similarly costs $\mathcal{O}((nt)^3)$ time \citep{bonilla_multi-task_2007}.
Thus, the time complexity of drawing $s$ samples is \emph{multiplicative} in $n$ and $t,$
$\mathcal{O}((nt)^3 + (n_{\text{test}}t)^3 + s((nt)^2 + (n_{\text{test}}t)^2)).$
Using CG and LOVE reduces the complexity; see Appendix \ref{app:love}.

\textbf{High-Order Gaussian Processes:}
Recently, \citet{zhe_scalable_2019} proposed the high-order Gaussian process (HOGP) model, a MTGP designed for matrix- and tensor-valued outputs.
Given outputs $\mathbf y \in \mathbb{R}^{d_1 \times \cdots \times d_k}$ (e.g. a matrix or tensor), the covariance is the product of the data dimension and each output index ($i_l$ and $i'_l$ respectively) $k([x, i_1, \cdots, i_k], [x', i'_1, \cdots, i'_k]) = k(x,x') k(v_{i_1}, v'_{i'_1}) \cdots  k(v_{i_k}, v'_{i'_k}),$ where $i_1, \cdots, i_k$ are the indices for the output tensor and $v_1, \cdots, v_k$ are latent parameters that are optimized along with the kernel hyper-parameters.
Thus, $K_T$ in the MTGP framework is replaced by a chain of Kronecker products, so that the GP prior is
$\text{vec}(\mathbf y) \sim \mathcal{N}(0, K_{XX} \otimes K_2 \otimes \cdots \otimes K_k).$
Exploiting the Kronecker structure, computation of posterior mean, posterior variance and hyper-parameter learning takes
$\mathcal{O}(n^3 + \sum_{i=2}^k d_i^3 + n \prod_{i=1}^k d_i)$
time as $d_1 = n$. 
\citet{zhe_scalable_2019} demonstrate that the HOGP outperforms PCA-GPs \citep{higdon2008computer}, among other high-dimensional output GP methods, with respect to prediction accuracy. However, their experiments measure only the error of the predictive mean, rather than quantities that use the predictive variance (such as the negative log likelihood or calibration), and they do not provide a way to sample from the posterior distribution.

\subsection{Matheron's Rule for Single-Task Gaussian Processes}\label{sec:matheron_background}
\begin{table*}[t!]
	\centering
	\caption{Time complexities for posterior sampling in single-output, multi-task, and high-order Gaussian Process (HOGP) models. \textcolor{blue}{Time complexities shown in blue} are our contributions that have not yet been considered by the literature. Standard sampling from MTGPs scales multiplicatively in the combination of the number of tasks, $t,$ and the number of data points, $n,$ while using Matheron's rule reduces the combination to effectively become additive in these components. 
	}
	\begin{small}
		\begin{tabular}{c|c|c}
			\toprule Model & \multicolumn{1}{c}{Distributional (Standard) (Eq. \ref{eq:dist_sampling})} & \multicolumn{1}{|c}{\textbf{With Matheron's rule  (Eq. \ref{eq:matheron_gp})}} \\\toprule
			Single-Output & $\mathcal{O}(n^3 + n_{\text{test}}^3)$& $\mathcal{O}(n^3 + n_{\text{test}}^3)$  \\ \midrule
			Multi-Task & $\mathcal{O}((n^3 + n_{\text{test}}^3)t^3)$
			 &
			$\textcolor{blue}{\mathcal{O}((n^3 + n_{\text{test}}^3) + t^3)}$  \\\midrule
			HOGP &$\mathcal{O}((n^3 + n_{\text{test}}^3)\prod_{i=2}^d d_i^3)$ & 
			$\textcolor{blue}{\mathcal{O}((n^3 + n_{\text{test}}^3) + \sum_{i=2}^k d_i^3)}$ \\\bottomrule
		\end{tabular}
	\end{small}
	\label{tab:compute}
\end{table*}

Matheron's rule provides an alternative way of sampling from GP posterior distributions: rather than decomposing the GP predictive covariance for sampling, one can jointly sample from the prior distribution over both train and test points and then update the training samples using the observed data. 
Matheron's rule is well known in the geostatistics literature where it is used for ``prediction by conditional simulation" \citep{chiles2005prediction,chiles2009geostatistics}.
\citet{wilson_efficiently_2020} revitalized interest in Matheron's rule within the machine learning community by developing a decoupled sampling approach that exploits Matheron's rule to use both random Fourier features (RFFs) and inducing point approaches in the context of sampling from the posterior in single task GPs.
\citet{wilson2020pathwise} extended decoupled sampling to use combinations of RFFs, inducing points, and iterative methods. 
They applied these approaches to approximate Thompson sampling, simulations of dynamical systems, and deep GPs.

Matheron's rule states that if two random variables, $X$ and $Y$, are jointly Gaussian, then 
\begin{align*}
	X | (Y = y) \overset{d}{=} X + \text{Cov}(X, Y)\text{Cov}(Y,Y)^{-1}(y - Y),
\end{align*}
where $\text{Cov}(a, b)$ is the covariance of $a$ and $b$ and $\overset{d}{=}$ denotes equality in distribution \citep{doucet_note_2010,wilson_efficiently_2020}.
The identity can easily be shown by computing the mean and variance of both sides.
Following \citet{wilson_efficiently_2020}, we can use this identity to draw posterior samples 
\begin{align}
	f^* | (Y + \epsilon = y) &\overset{d}{=} f^* + 
	K_{x_{\text{test}X}} 
	(K_{XX} + \sigma^2 I)^{-1}(y - Y - \epsilon),
	\label{eq:matheron_gp}
\end{align}
where $\epsilon \sim \mathcal{N}(0, \sigma^2 I),$ $f^*$ is the random variable and $f^* | (Y +\epsilon= y)$ is the conditional random variable we are interested in drawing samples from. 
To implement this procedure, we first draw samples from the joint prior (of size $n + n_{\text{test}}$):
\begin{align}
	(f, Y) \sim \mathcal{N}\left(0, \left(\begin{array}{cc}
		K_{XX} & K_{x_{\text{test}}X}  \\
		K_{Xx_{\text{test}}} & K_{x_{\text{test}}x_{\text{test}}}
	\end{array} \right)\right). 
	\label{eq:joint_prior}
\end{align}
For shorthand, we denote the joint covariance matrix in~\eqref{eq:joint_prior} by $\Ktt$. 
We next sample $\bar{\epsilon} \sim \mathcal{N}(0, \sigma^2 I)$ and compute:
$\bar{f} = f + K_{x_{\text{test}} X}(K_{XX} + \sigma^2 I)^{-1}(y - Y - \bar\epsilon)$. 
The solve is against a matrix of size $n \times n$ so that $\bar{f}$ is our realization of the random variable $f^* | (Y +\epsilon = y).$
The total time requirements are then $\mathcal{O}((n + n_{\text{test}})^3 + n^3),$ which is slightly slower than $\mathcal{O}(n_{\text{test}}^3 + n^3).$
Thus, sampling from the single-task GP posterior using~\eqref{eq:matheron_gp} is slower than the standard method based on~\eqref{main:eq:exact_pred_mean} and~\eqref{main:eq:exact_pred_var}.

\section{Matheron's Rule for Multi-Task Sampling }\label{sec:matheron}
While the time complexity of Matheron-based posterior sampling is inferior for single-task GPs, the opposite holds true for multi-task GPs, provided that one exploits the special structure in the covariance matrices. 
In the following, we demonstrate the advantages in using Matheron-based MTGP sampling in Section \ref{sec:extension}, extend it to the HOGP \citep{zhe_scalable_2019} in Section \ref{sec:hogp}, and identify further advantages of it for Bayesian Optimization in Section \ref{sec:adv_bo}. This approach allows for further pre-computations than distributional sampling and that it maintains the same convergence results.

\subsection{Extending Matheron's Rule for Multi-task GPs}\label{sec:extension}
The primary bottleneck that we wish to avoid is the multiplicative scaling in $n$ (or $n_{\text{test}}$) and $t.$
Ideally, we hope to achieve posterior sampling that is additive in $n$ and $t$, similar to how Kronecker matrix vector multiplication is additive in its components.
Unlike in the single task case, Matheron's rule can substantially reduce the time and memory complexity sampling for MTGPs. For brevity we limit our presentation here to the core ideas, please see Appendix~\ref{app:methods} for further discussion of the implementation, as well as Appendix~\ref{app:kron} for a description of the Kronecker identities we exploit.

The covariance $\Ktt$ in~\eqref{eq:joint_prior} is structured as the Kronecker product of the joint test-train covariance matrix and the inter-task covariance; that is,
$\Ktt = K_{(X,x_{\text{test}}),(X,x_{\text{test}})} \otimes K_T,$ where $K_{(X,x_{\text{test}}),(X,x_{\text{test}})}$ appends $n_{\text{test}}$ rows and columns to the joint training data covariance matrix $K_{XX}$.
To sample from the prior distribution, we need to compute a root decomposition of $\Ktt$.

We assume that we have pre-computed $RR^\top = K_{XX}$ with either a Cholesky decompostion ($\mathcal{O}(n^3)$ time) or a Lanczos decomposition ($\mathcal{O}(rn^2)$ time). 
Then, we update $R$ to compute $\tilde R \tilde R^\top \approx K_{(x_{\text{test}}, X),(x_{\text{test}}, X)}$. 
\citet{chevalier2015fast} used a similar strategy to update samples in the context of single task kriging.
Following \citet[Prop. 2]{jiang_efficient_2020}, computing $\tilde R$ from $R$ costs 
 $\mathcal{O}(r n_{\text{test}} n + r n_{\text{test}}^2 )$ time, dominated by the $r n_{\text{test}} n$ terms for small $n_{\text{test}}$.
 Using a Cholesky decomposition, this is $\mathcal{O}(n_{\text{test}} n^2)$ time following the same procedure.
We then have
\begin{align}
    \Ktt = K_{(x_{\text{test}}X),(x_{\text{test}}X)} \otimes K_T = (\tilde R \otimes L) (\tilde R \otimes L)^\top,
\end{align}
where $LL^\top = K_T.$
To use the root to sample from the joint random variables, $(f,Y),$ we need to compute only $(f,Y) = (\tilde R^\top \otimes L^\top)z, $ where $z\sim \mathcal{N}(0, I_{nt});$ this computation is a Kronecker matrix vector multiplication (MVM), which costs just $\mathcal{O}(nt(n+t))$ time.
Thus, the overall cost of sampling to compute the joint random variables is $\mathcal{O}(n^3 + t^3 + nt(n+t) + n_{\text{test}} n^2)$ time, reduced to 
$\mathcal{O}(rn^2 + rt^2 + rnt + r^2 t + rn_{\text{test}}n )$
if using Lanczos decompositions throughout.
$L$ only needs to be computed once, so samples at new test points only require re-computing $\tilde R$ and further MVMs.

Computing the solve $w = (K_{XX} \otimes K_T + \sigma^2 I_{nT})^{-1}(y - Y - \epsilon)$ takes $\mathcal{O}(n^3 + t^3 + nt(n+t))$ time using eigen-decompositions and Kronecker MVMs. The cost of the eigen-decomposition dominates the Kronecker MVM cost so the major cost in this is $\mathcal{O}(n^3 + t^3)$. 
Finally, there is one more Kronecker MVM $(K_{x_{\text{test}},X} \otimes K_T) w$ which takes $\mathcal{O}(n t^2 + n_\text{test} n t)$ time.
We then only need to reshape $\bar f$ to be of the correct size. 

Therefore, the total time complexity of using Matheron's rule to sample from the MTGP posterior is $\mathcal{O}(n^3 + t^3)$ for small $n_\text{test}$, as the cubic terms dominate due to the Cholesky and eigen-decompositions.
We show the improvements from using Matheron's rule in Table~\ref{tab:compute}, where the dominating terms are 
now \emph{additive in the combination of the tasks and the number of data points, rather than multiplicative.}
Memory complexities, which are also much reduced, are provided in Table~\ref{tab:memory} in Appendix~\ref{app:methods}. 
Finally, we emphasize that sampling in this manner is \emph{exact} up to floating point precision as the solves we use are all exact by virture of being computed with eigen-decompositions.

\subsection{Extension to HOGP}\label{sec:hogp}
The HOGP model can be seen as a special case of the general procedure for sampling MTGPs.
We replace $K_T$ with kernels across each tensor dimension of the response, so that $K_T = \otimes_{i=2}^k K_i$. Therefore, the time complexities for sampling go from a cubic dependence, $n^3 \prod_{i=2}^k d_i^3$,  to a $(n \prod_{i=1}^k d_i ) + (n^3 + \sum_{i=2}^k d_i^3)$ dependence. The latter will usually be dominated by the additive terms for $k \lesssim 5$, as generally $n$ is much larger than the tensor sizes, hence their product will generally be less than $n^2$.
Prior to this work, sampling from the posterior was infeasible for all but the smallest HOGP models due to the time complexity.
By using Matheron's rule, we can sample from HOGP posterior distributions over large tensors, unlocking the model for a much wider range of applications such as BO with composite objectives computed on high-dimensional simulator outputs.

\subsection{Usage in Bayesian Optimization}\label{sec:adv_bo}
The primary usage of efficient multi-task posterior sampling that we consider is that of Bayesian optimization.
Here, we want to use and optimize Monte Carlo acquisition functions that require many samples from the posterior of the surrogate model.

At a high level, Bayesian optimization loops look like the following procedure that we repeat after initializing several random points. 
First, we fit MTGPs by maximizing the log marginal likelihood (see Appendix \ref{app:mtgps}) 
Then, we draw many posterior samples from the MTGP in the context of computing MC acquisition functions, e.g. \eqref{eq:app:convergence:acq_mc}.
We use gradient-based optimization to find the points $x$ which optimize the acquisition $\hat\alpha_N(x; y).$
After choosing these points, $x_{\text{cand}},$ we then query the function, returning $y_{\text{cand}}$ and updating our data with these points.
Finally, we continue back to the top of the loop, and re-train the MTGP.

\paragraph{Convergence Results:}

The convergence guarantees for Sample Average Approximation (SAA) of MC acquisition functions from \citet{balandat_botorch_2020} still apply if posterior samples are drawn via Matheron's rule.  Consider acquisition functions of the form $\alpha(x; y) = \mathbb{E} \bigl[ h(f(x)) \mid Y=y\bigr]$ with $h: \mathbb{R}^{n_{\text{test}} \times t} \rightarrow \mathbb{R}$, and their MC approximation $\hat\alpha_N(x; y) := \frac{1}{N} \sum_{i=1}^N  a(g(f^*_i)))$, where $f^*_i$ are i.i.d. samples drawn from the model posterior at $x \in \mathbb{R}^{n_\text{test}}$ using Matheron's rule. Then, under sufficient regularity conditions, the optimizer $\argmax_{x} \hat\alpha_N(x; y)$ converges to the true optimizer $\argmax_{x} \alpha(x; y)$ almost surely as $N\rightarrow \infty$. For a more formal statement and proof of this result see Appendix~\ref{app:convergence}.

\section{Experiments}\label{sec:apps}
We first demonstrate the computational efficiencies gained by posterior sampling using Matheron's rule. While this contribution is much more broadly useful, here we focus on the benefits it provides for BO.   
Namely, we show that accounting for correlations between outcomes in the model improves performance in multi-objective and large-scale constrained optimization problems. 
Finally, we perform composite BO with tens of thousands of tasks using HOGPs with Matheron-based sampling. 
Additional experiments on contextual policy optimization are given in Appendix~\ref{app:cbo}.
All plotted lines represent the mean over repeated trials with shading represent two standard deviations of the mean across trials.

\subsection{Drawing Samples from Multi-Task Models}

\begin{figure*}[t!]
	\centering
	\begin{subfigure}{0.24\textwidth}
		\centering
		\includegraphics[width=\linewidth]{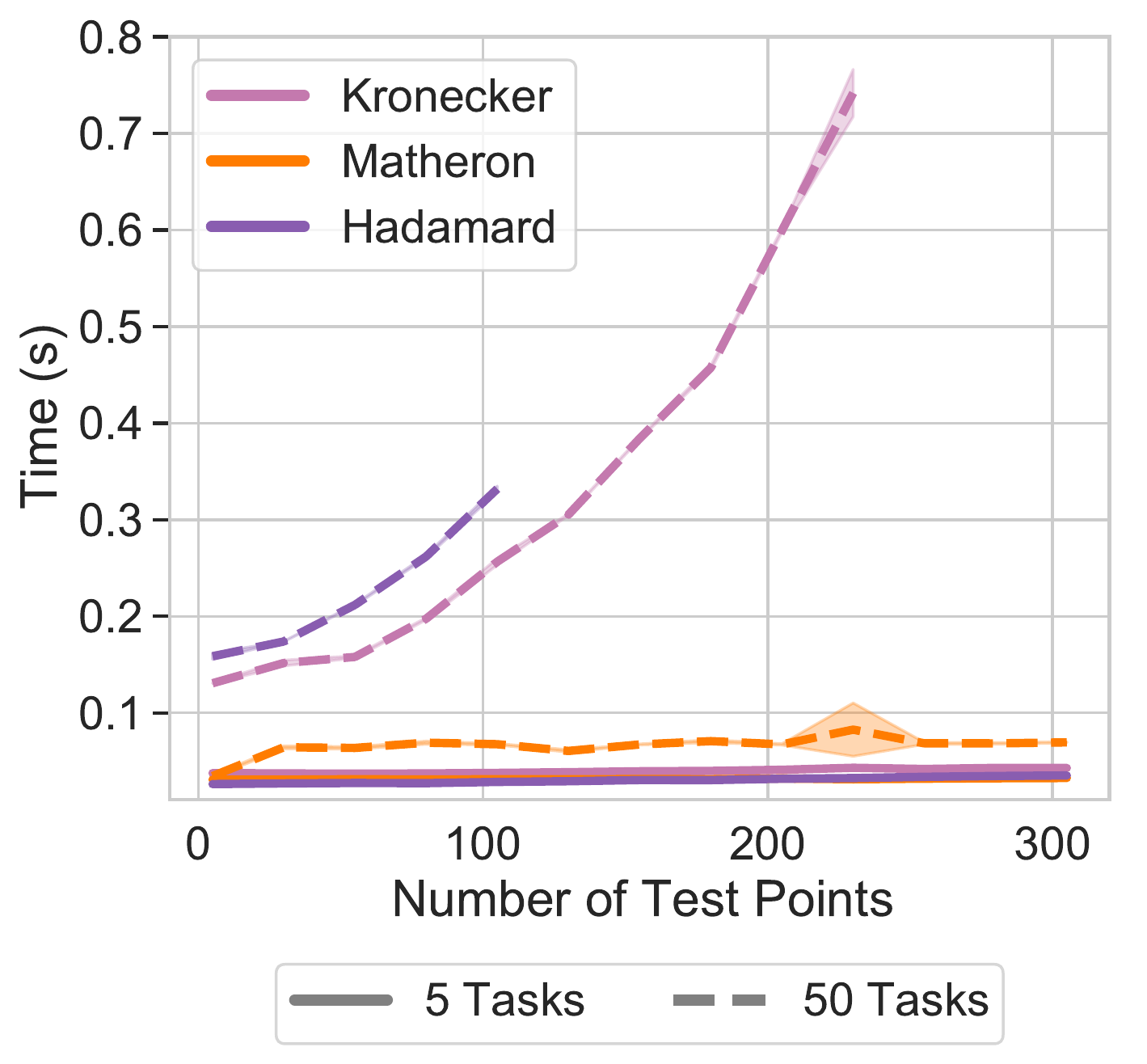}
		\caption{Test points, GPU}
		\label{fig:mt_speedup_test}
	\end{subfigure}
	\begin{subfigure}{0.24\textwidth}
		\centering
		\includegraphics[width=\linewidth]{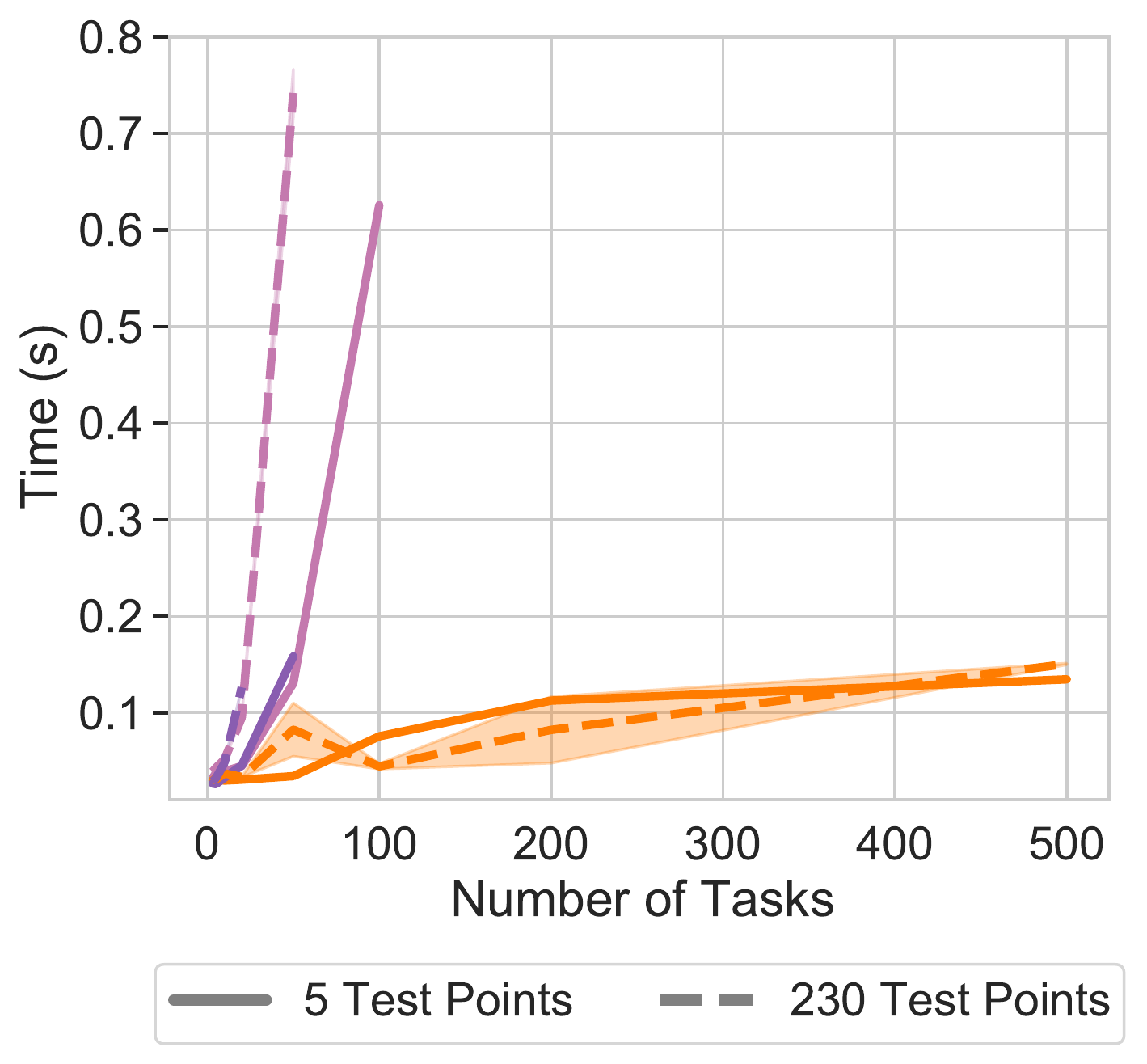}
		\caption{Tasks, GPU}
		\label{fig:mt_speedup_tasks}
	\end{subfigure}
	\begin{subfigure}{0.24\textwidth}
		\centering
		\includegraphics[width=\linewidth]{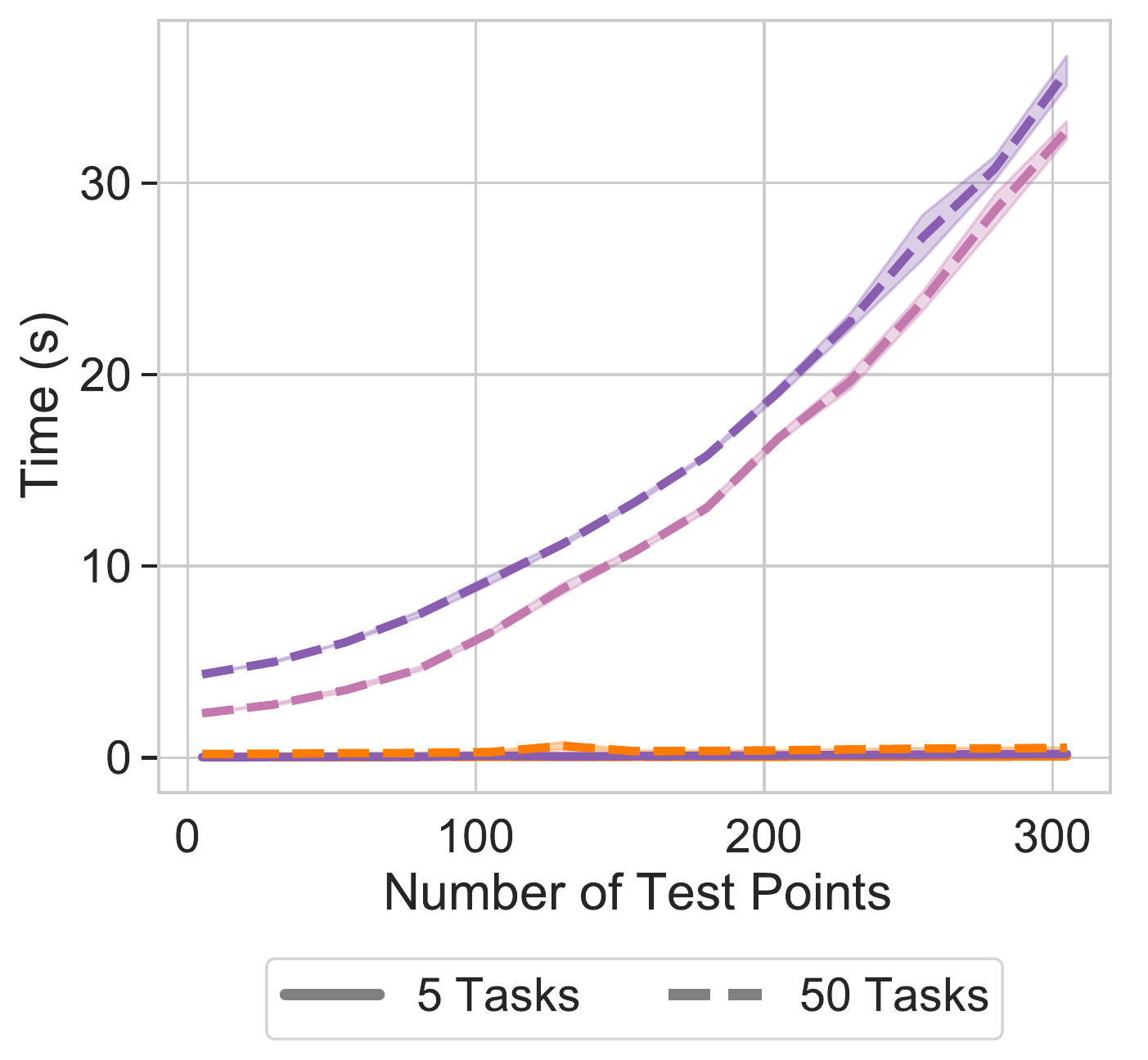}
		\caption{Test points, CPU}
		\label{fig:mt_speedup_test_cpu}
	\end{subfigure}
	\begin{subfigure}{0.24\textwidth}
		\centering
		\includegraphics[width=\linewidth]{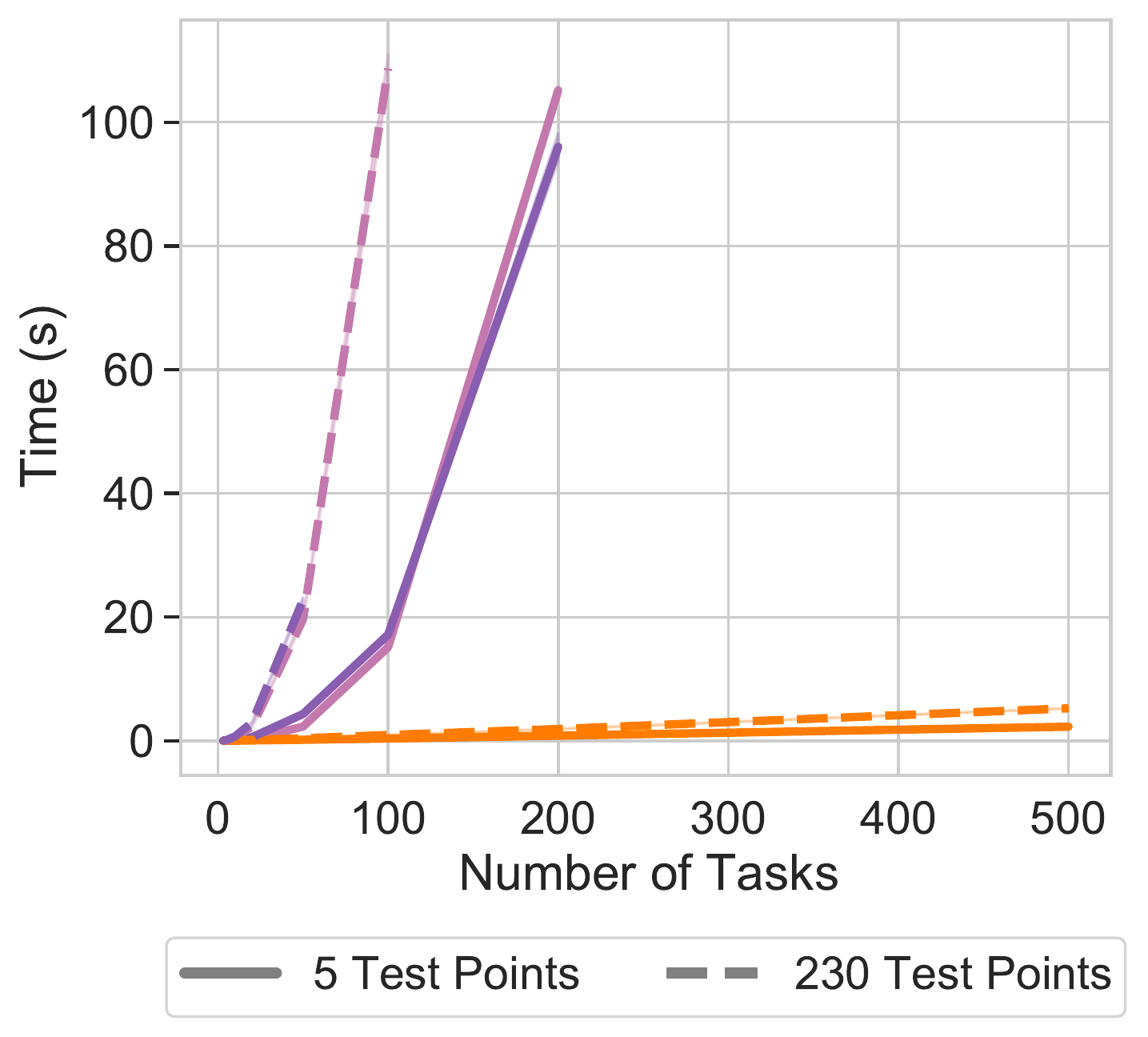}
		\caption{Tasks, CPU}
		\label{fig:mt_speedup_tasks_cpu}
	\end{subfigure}
	
	\caption{
		Timings for distributional sampling with Hadamard and Kronecker MTGPs as well as Matheron's rule sampling for a MTGPs as the number of test points vary for fixed tasks \textbf{(a,c)} and as the number of tasks vary for fixed test points \textbf{(b,d)} on a single Tesla V100 GPU \textbf{(a,b)} and on a single CPU \textbf{(c,d)}.
		The multiplicative scaling of the number of tasks and data points creates significant timing and memory overhead, causing the Kronecker and Hadamard implementations to run out of memory very quickly for all but the smallest numbers of tasks and data points, whereas sampling using Matheron's rule is efficient even in the many-task large-data regime. The plots show mean and two standard errors over $10$ trials on the GPU, and $6$ trials on the CPU.
	}
	\label{fig:mt_speedup}  
\end{figure*}
\begin{wrapfigure}{r}{0.5\textwidth}
	\centering
	\begin{subfigure}{0.49\textwidth}
		\centering
		\includegraphics[width=\linewidth,clip,clip,trim=0cm 0cm 0cm 8cm]{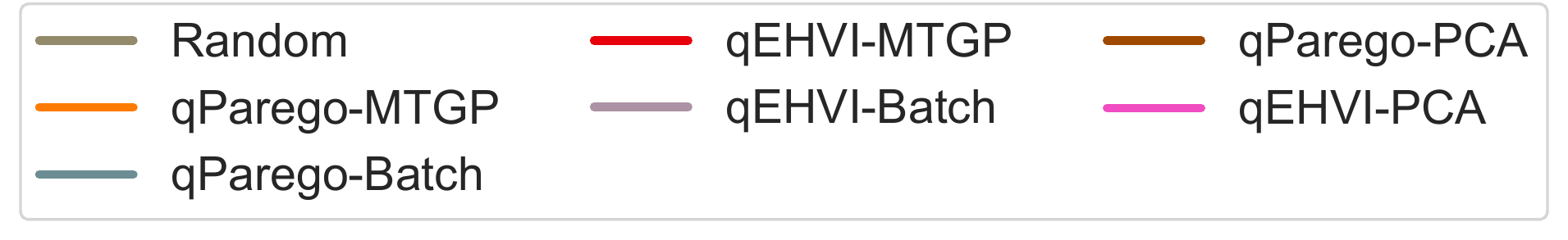}
	\end{subfigure}
	
	\begin{subfigure}{0.23\textwidth}
		\centering
		\includegraphics[width=\linewidth]{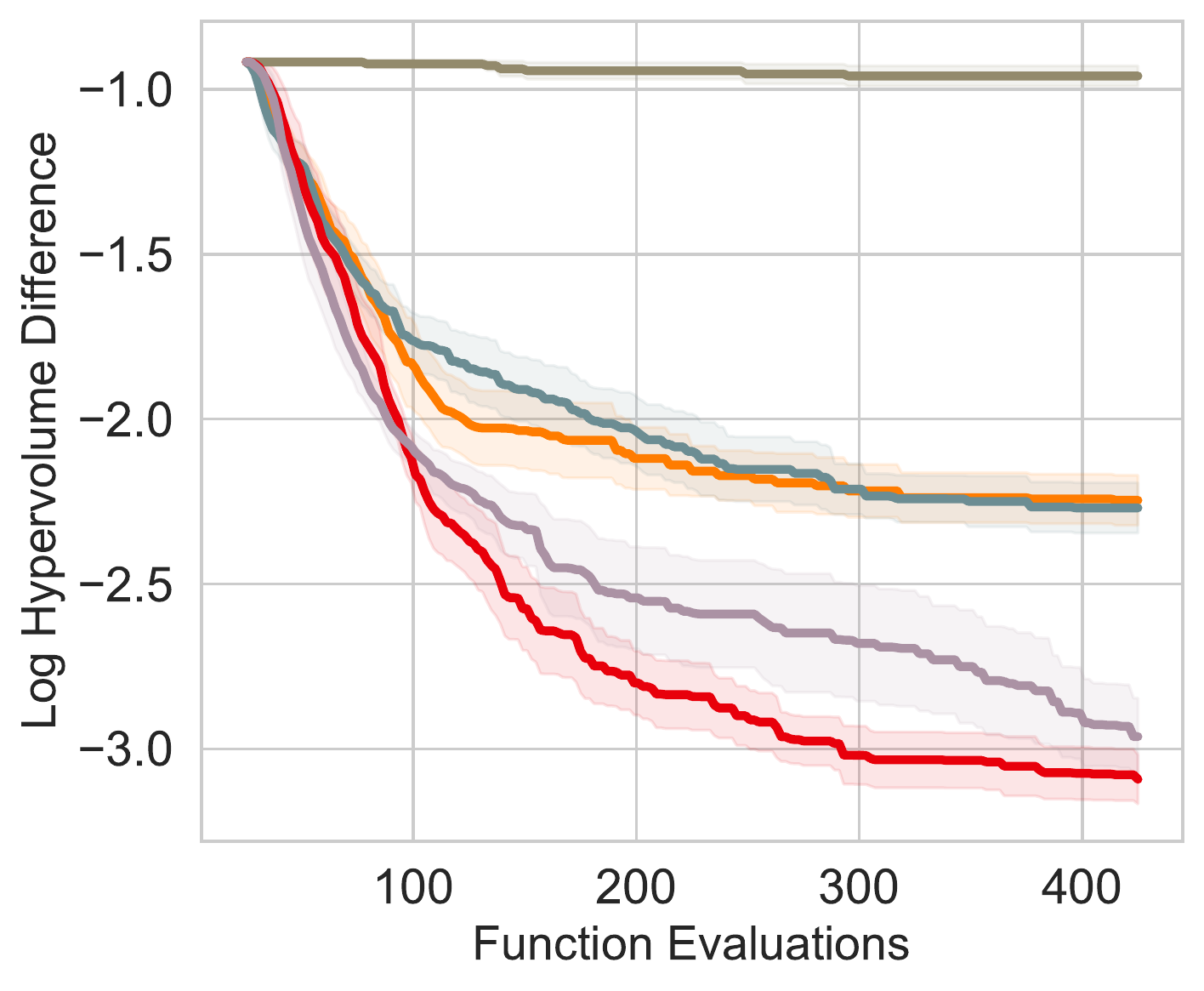}
		\caption{C2DTLZ2, $q=2$}
		\label{fig:c2dtl}
	\end{subfigure}
	\begin{subfigure}{0.23\textwidth}
		\centering
		\includegraphics[width=\linewidth]{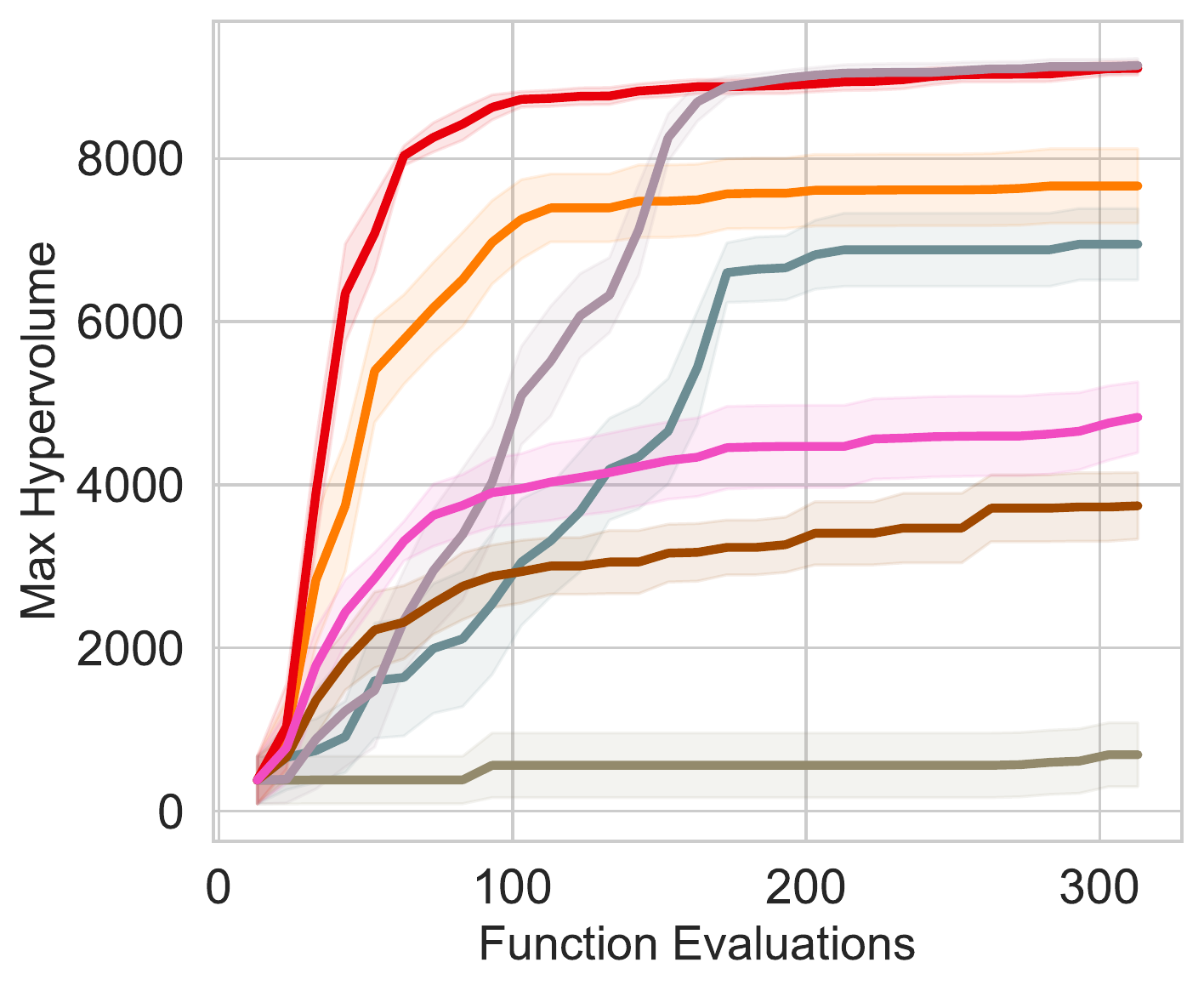}
		\caption{OSY, $q=10$}
		\label{fig:osy}
	\end{subfigure}
	\caption{
		Constrained multi-objective Bayesian Optimization tasks. MTGPs outperform batch models in both the \textbf{(a)} small batch ($q = 2$) and the \textbf{(b)} large batch ($q=10$) setting. The latter was previously computationally infeasible for MTGPs. 
	}
	\label{fig:con_mobo}
	\vspace{-0.2cm}
\end{wrapfigure}

To demonstrate the performance gains of sampling using Matheron's rule, we first vary the number of test points and  tasks for a fixed number of samples and training points.
Following \citet{feng_high-dimensional_2020}, we consider a multi-task version of the Hartmann-$6$ function, generated by splitting the response surface into tasks using slices of the last dimension.
In Figures \ref{fig:mt_speedup_test} and \ref{fig:mt_speedup_test_cpu}, we vary the number of test points for $5$ and $50$ tasks, demonstrating that Matheron's rule sampling is faster and more memory efficient than distributional sampling with either Kronecker or Hadamard MTGPs. 
In Figures \ref{fig:mt_speedup_tasks} and \ref{fig:mt_speedup_tasks_cpu}, we vary the number of tasks for fixed test points, again finding that distributional sampling is only competitive for $5$ tasks on the GPU.
See Appendix \ref{app:further_exps} for sampling with LOVE predictive covariances.

\subsection{Multi-Objective Bayesian Optimization}

We next consider constrained multi-objective BO, where the goal is to find the \emph{Pareto Frontier}, i.e., the set of objective values for which no objective can be improved without deteriorating another  while satisfying all constraints.
To measure the quality of the Pareto Frontier, we compute the hypervolume (HV) of the non-dominated objectives~\citep{yang17}. 
The optimization problem is made more difficult by the presence of black-box constraints (and hence additional outcomes) that must be modelled. 
We use MC batch versions of the ParEGO and EHVI acquisition functions (qParEGO and qEHVI)  \citep{daulton_differentiable_2020}.
To generate new candidate points, qParEGO maximizes expected improvement using a random Chebyshev scalarization of the objectives, while qEHVI maximizes expected hypervolume improvement.
As far as we are aware, \citet{shah16correlated} are the only authors to investigate the use of MTGPs in combination with multi-objective optimization, but only consider 2-4 tasks in the sequential setting (generating one point at a time).  
Here, we use full rank inter-task covariances with LKJ priors \citep{lewandowski2009generating} which we find to work well even in the low data regime.
Our Matheron-based sampling scales to large batches and tasks, and is more sample-efficient on both tasks.

We compare Matheron sampled MTGPs to batch independent MTGPs on the C2DTLZ2 \citep{deb2019constrained} (2 objectives, 1 constraint for a total of 3 modelled tasks) and OSY \citep{osyczka1995new} (2 objectives, 6 constraints for a total of 8 modelled tasks) test problems. On OSY, we also compare to PCA GPs \citep{higdon2008computer} due to the larger number of outputs. 
Following \citet{daulton_differentiable_2020} we use both qParEGO and qEHVI with $q = 2,$ for C2DTLZ2 and optimize for $200$ iterations.
In Figure \ref{fig:c2dtl}, we see that the MTGPs outperform their batch counterparts by coming closer to the known optimal HV.
On OSY, in Figure \ref{fig:osy}, we plot the maximum HV achieved for each method, using a batch size of $q=10,$ optimizing for $30$ iterations, where again we see that the MTGPs significantly outperform their batch counterparts as well as the PCA GPs, which stagnate quickly.

\subsection{Scalable Constrained Bayesian Optimization}\label{sec:comp_bo_app}

We next extend scalable constrained Bayesian Optimization \citep[SCBO,][]{eriksson2020scalable}, a state of the art algorithm for constrained BO in high-dimensional problems, to use MTGPs instead of independent GPs. 
In constrained BO, the goal is to minimize the objective, $f,$ subject to black box constraints, $c_i;$ e.g.,
\begin{align}
	\arg \min_x f(x) \hspace{0.25cm}\text{s.t.}\hspace{0.25cm} c_i(x) \leq 0, \hspace{0.25cm}\forall i \in \{1,\cdots,m\}.
\end{align}
SCBO is a method based on trust regions and uses batched independent GPs to model the outcome and transformed constraints.
We compare to their results on their two largest problems --- the $12$-dimensional lunar lander and the $124$-dimensional MOPTA08 problem, using the same benchmark hyper-parameters. 
To extend their approach, we replace the independent GPs with a single MTGP model with a full rank ICM kernel over objectives and constraints.

\begin{figure}[t]
	\centering
	\captionsetup[subfigure]{justification=centering}
	\begin{subfigure}{0.23\textwidth}
		\centering
		\includegraphics[width=\linewidth]{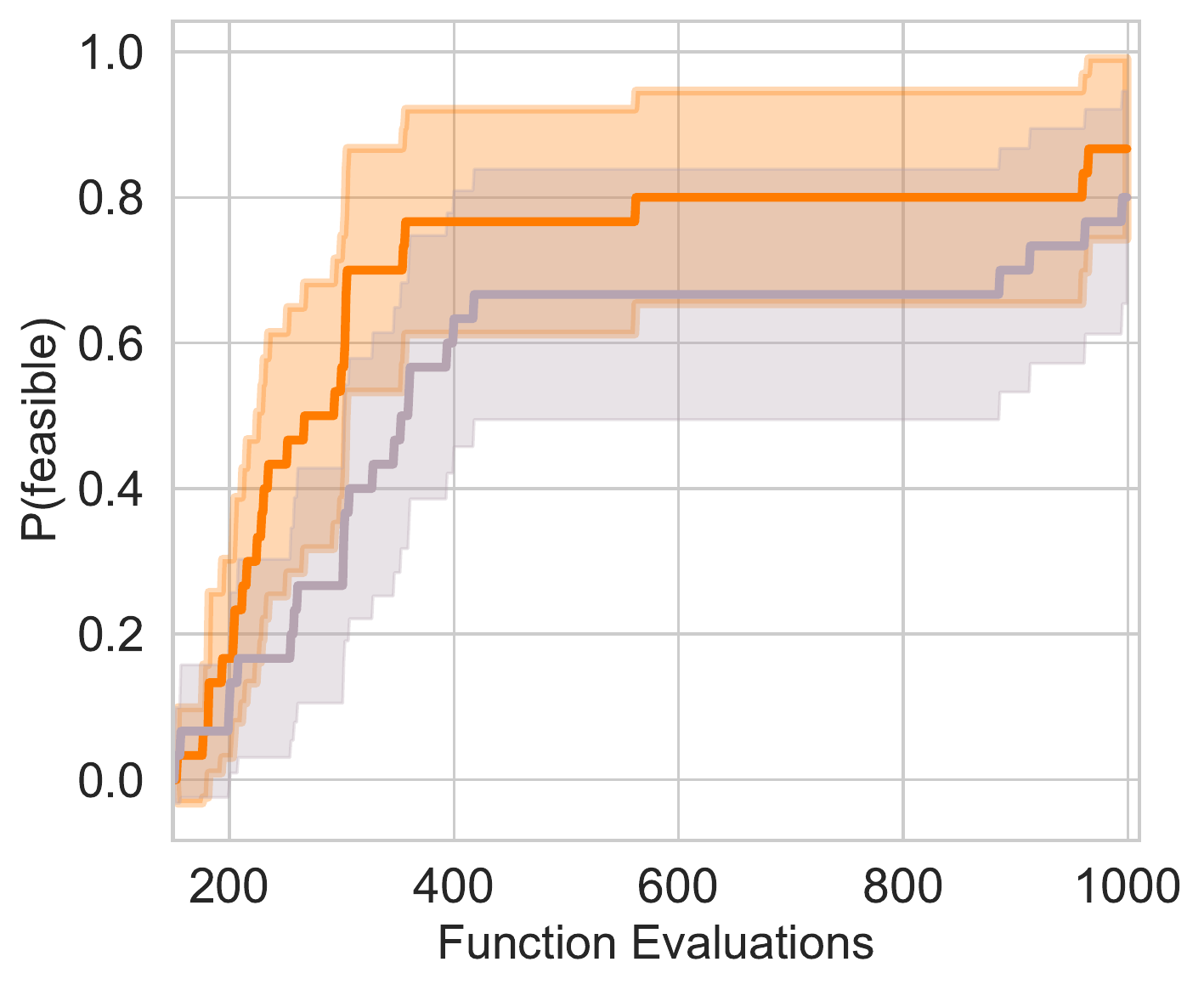}
		\caption{Feasibility, Lunar Lander, $m=50$}
		\label{fig:scbo_prob_feas_50}
	\end{subfigure}
	\begin{subfigure}{0.23\textwidth}
		\centering
		\includegraphics[width=\linewidth]{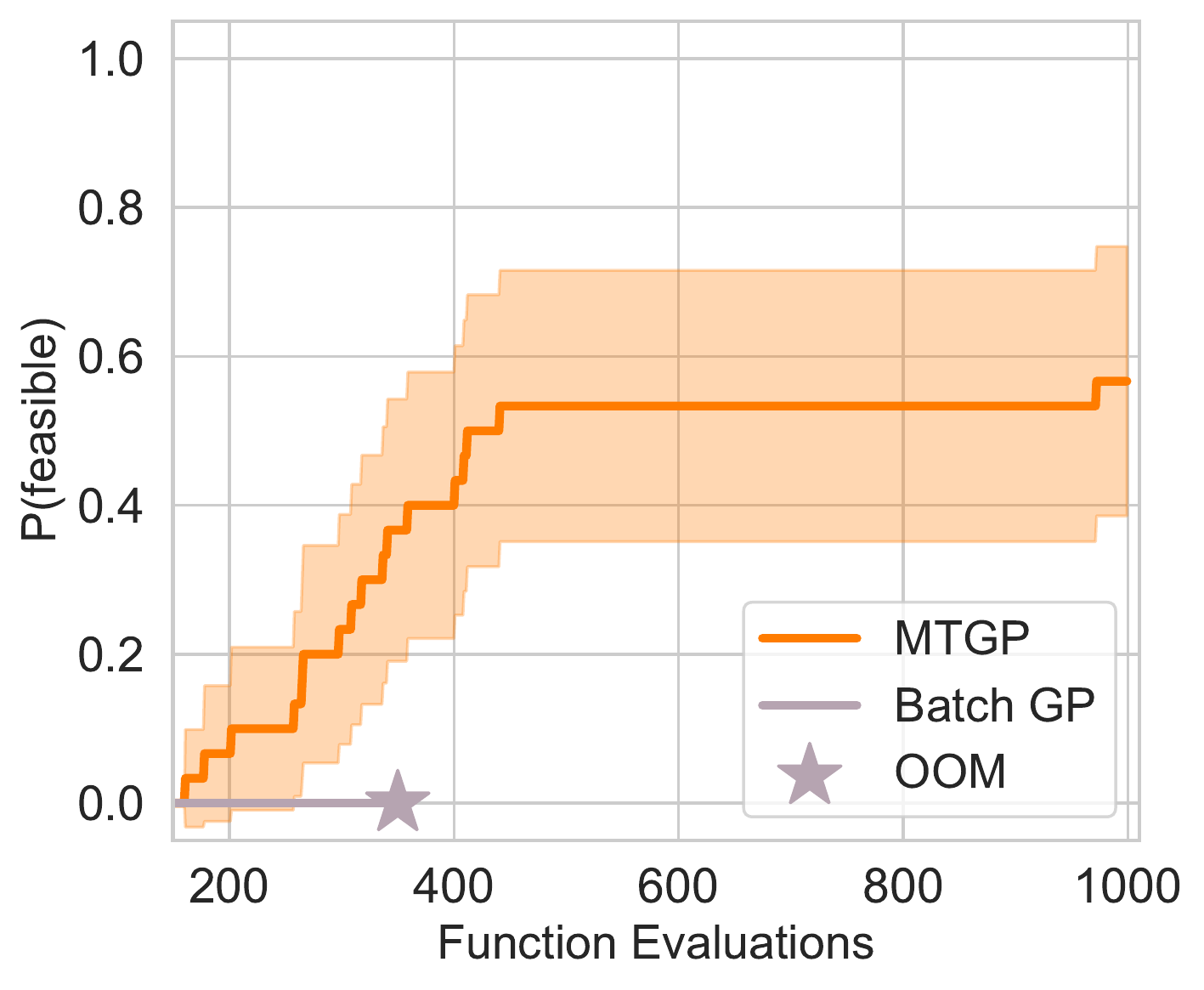}
		\caption{Feasibility, Lunar Lander, $m=100$}
		\label{fig:scbo_prob_feas_100}
	\end{subfigure}  
	\begin{subfigure}{0.23\textwidth}
		\centering
		\includegraphics[width=\linewidth]{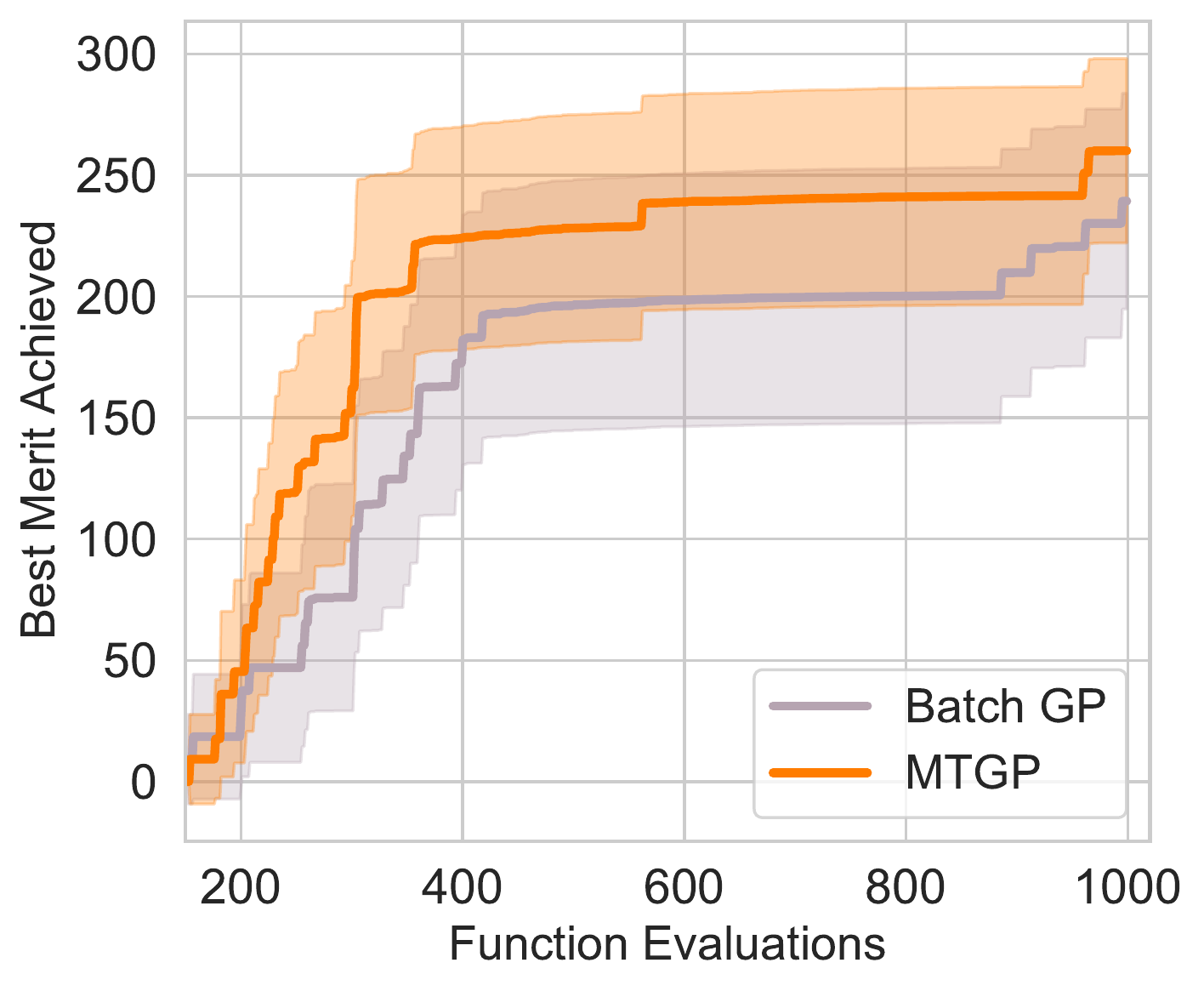}
		\caption{Merit, Lunar Lander, $m=50$}
		\label{fig:lunar_lander_feas}
	\end{subfigure}
	\begin{subfigure}{0.23\textwidth}
		\centering
		\includegraphics[width=\linewidth]{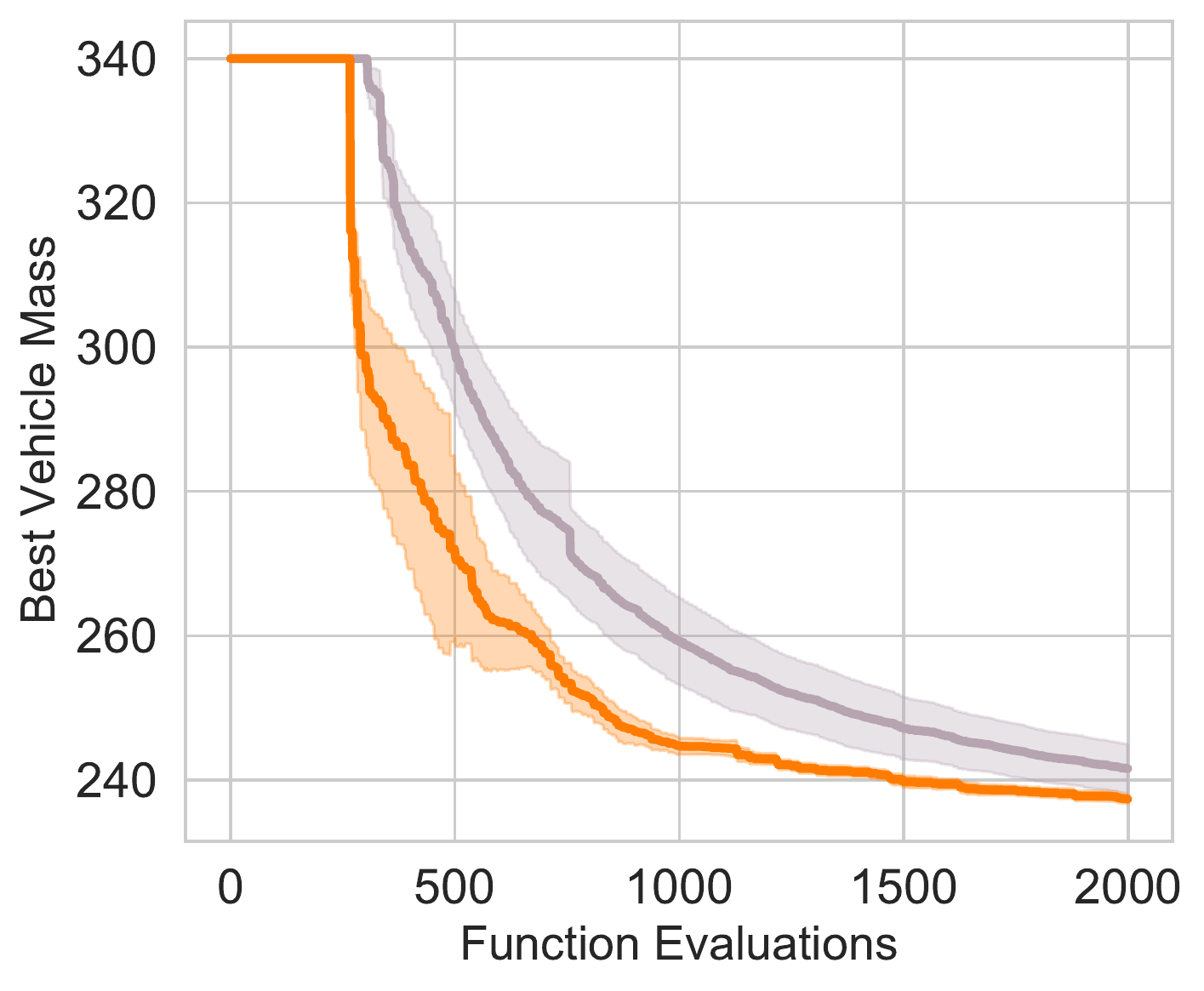}
		\caption{Vehicle mass, MOPTA08, $m=68$}
		\label{fig:mopta_feas}
	\end{subfigure}   
		\caption{Scalable constrained Bayesian Optimization on Lunar Lander $m=50,100$ \textbf{(a-c)} and on MOPTA08 \textbf{(d)}.
		On all three problems, a multi-task GP provides better solutions with better data efficiency.
		The batch GP reaches feasibility on lunar lander with $50$ constraints \textbf{(a)} and a competitive solution \textbf{(c)} but requires more trials, while on $100$ constraints, it simply runs out of memory while the MTGP succeeds. 
		On MOPTA08, the MTGP reaches a better solution faster \textbf{(d)}.}
	\label{fig:scbo}\label{fig:scbo_prob_feas}
\end{figure}

\textbf{Lunar Lander:}
We first consider the lunar lander problem with both $50$ and $100$ constraints from the OpenAI Gym \citep{openaigym}. 
Following \citet{eriksson2020scalable}, we initialize with 150 data points and use TuRBO with Thompson sampling with batches of $q = 20$ for a total of $1000$ function evaluations and repeat over $30$ trials.
Using a multi-task GP reduces the number of iterations to achieve at least a $50\%$ chance of feasibility by about $75$ steps for the $50$ constraint problem (Figure \ref{fig:scbo_prob_feas_50}).
On the $100$ constraint problem, the batch GP runs out of memory after $350$ steps on a single GPU and never achieves feasibility, as indicated in Figure \ref{fig:scbo_prob_feas_100}.
In Figure \ref{fig:lunar_lander_feas}, we show the best merit ($f(x)  \prod_{i=1}^m 1_{c_i(x) \leq 0}$) achieved 
where the MTGPs are able to achieve feasibility in fewer samples, but do not reach significantly higher reward. 
Wall clock times for the $m = 50$ constraint problem, a table of the steps to achieve feasibility, and a comparison to PCA-GPs \citep{higdon2008computer} are given in Appendix~\ref{app:further_exps}. 

\textbf{MOPTA08:}
We next compare to batch GPs on the MOPTA08 benchmark problem \citep{jones2008large} which has $68$ constraints that measure the feasibility of a vehicle's design, while each dimension involves gauges, materials, and shapes.
We use Thompson sampling to acquire points with a batch size of $q = 10,$ $130$ initial points, and optimize for $2000$ iterations repeating over $9$ trials.
The results are shown in Figure \ref{fig:mopta_feas}, where we again observe that SCBO with MTGPs significantly improves both the time to feasibility and the best overall objective found. Using MTGPs would have been computationally infeasible in this setting without our efficient posterior sampling approach.

\subsection{Composite Bayesian Optimization with the HOGP}

Finally, we push well beyond current scalability limits by extending BO to deal with \emph{many thousands} of tasks, enabling us to perform sample-efficient optimization in the space of images.
\emph{Composite BO} is a form of BO where the objective is of the form $\max_{x}g(h(x))$, where $h$ is a multi-output black-box function modelled by a surrogate and $g$ is cheap to evaluate and differentiable.
Decomposing a single objective into constituent objectives in this way can provide substantial improvements in sample complexity \citep{astudillo_bayesian_2019}.
\citet{balandat_botorch_2020} gave convergence guarantees for optimizing general sampled composite acquisition function objectives that extend to MTGP models under the same regularity conditions. 
However, both works experimentally evaluate only batch independent multi-output GPs.

We compare to three different baselines: random points (Random), expected improvement on the metric (EI), and batch GPs optimizing EI in the composite setting (EI-CF).
We consider the HOGP \citep{zhe_scalable_2019} with Matheron's rule sampling (HOGP-CF) as well as an extension of HOGPs with a prior over the latent parameters that encourages smoothly varying latent dimensions (HOGP-CF + GP); see Appendix~\ref{app:hogp_extension} for details.
More detailed descriptions of the problems are provided in Appendix~\ref{app:exp_details}.

\begin{figure*}[t!]
	\begin{subfigure}{\textwidth}
		\centering
		\includegraphics[width=0.8\linewidth]{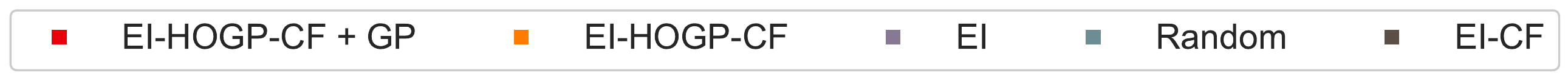}
	\end{subfigure}
	\begin{subfigure}{0.24\textwidth}
		\centering
		\includegraphics[width=\linewidth]{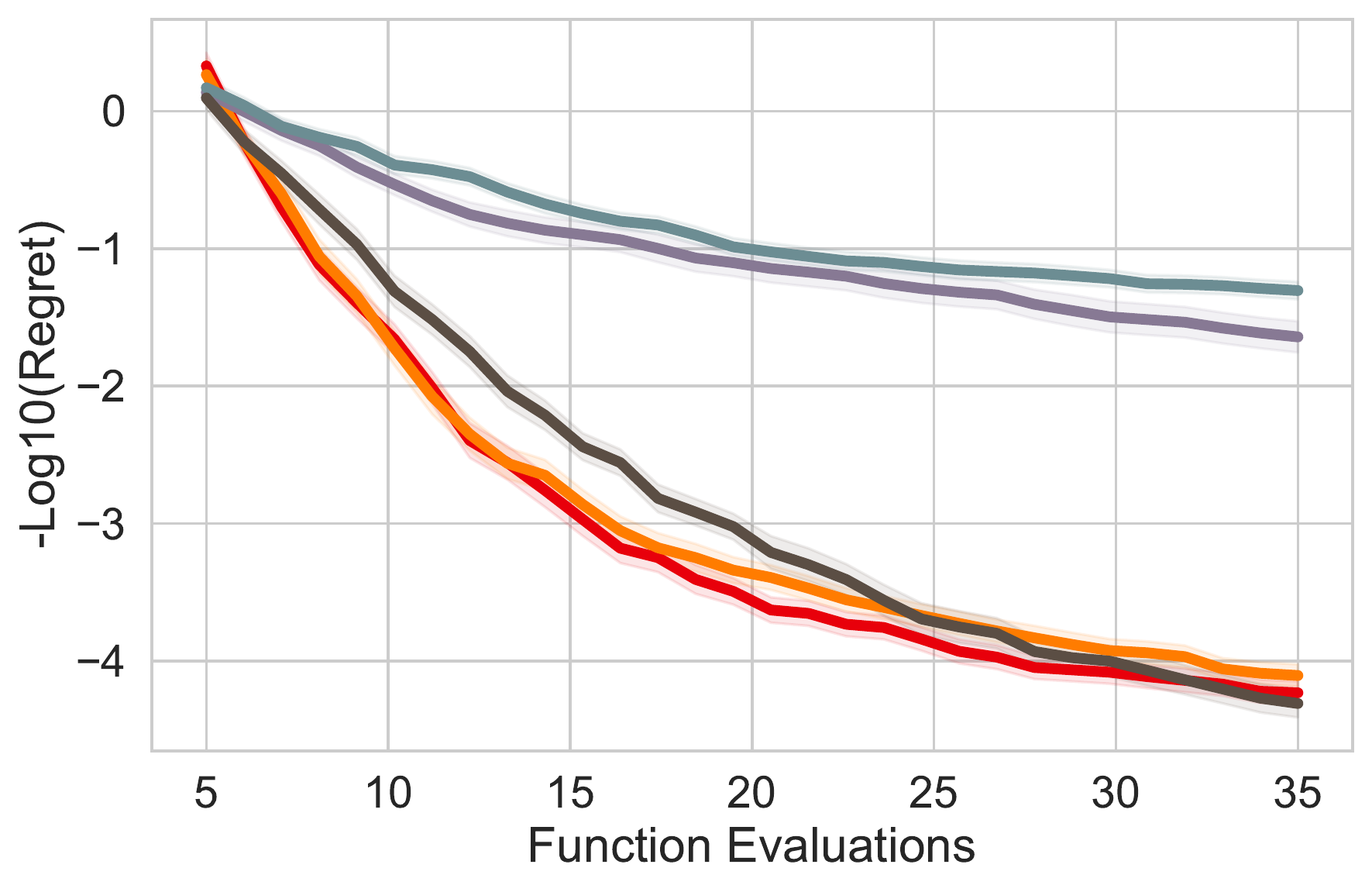}
		\caption{\centering Pollutant concentration\linebreak ($t=3 \times 4, d=4$)}
		\label{fig:environmental}
	\end{subfigure}
	\begin{subfigure}{0.24\textwidth}
		\centering
		\includegraphics[width=\linewidth]{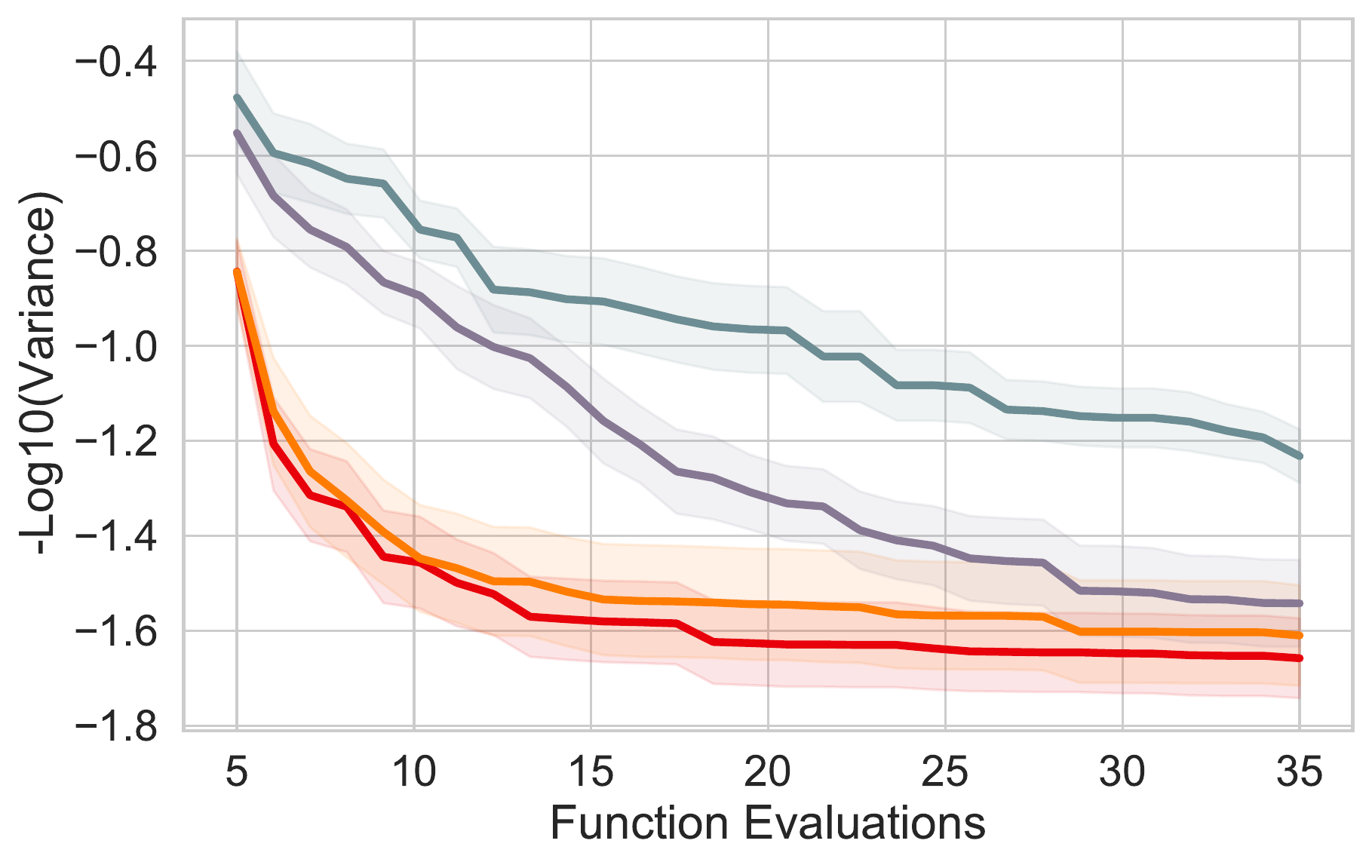}
		\caption{\centering PDE control\linebreak ($t=2 \times 64 \times 64, d=4$)}
		\label{fig:pde}
	\end{subfigure}
	\begin{subfigure}{0.24\textwidth}
		\centering
		\includegraphics[width=\linewidth]{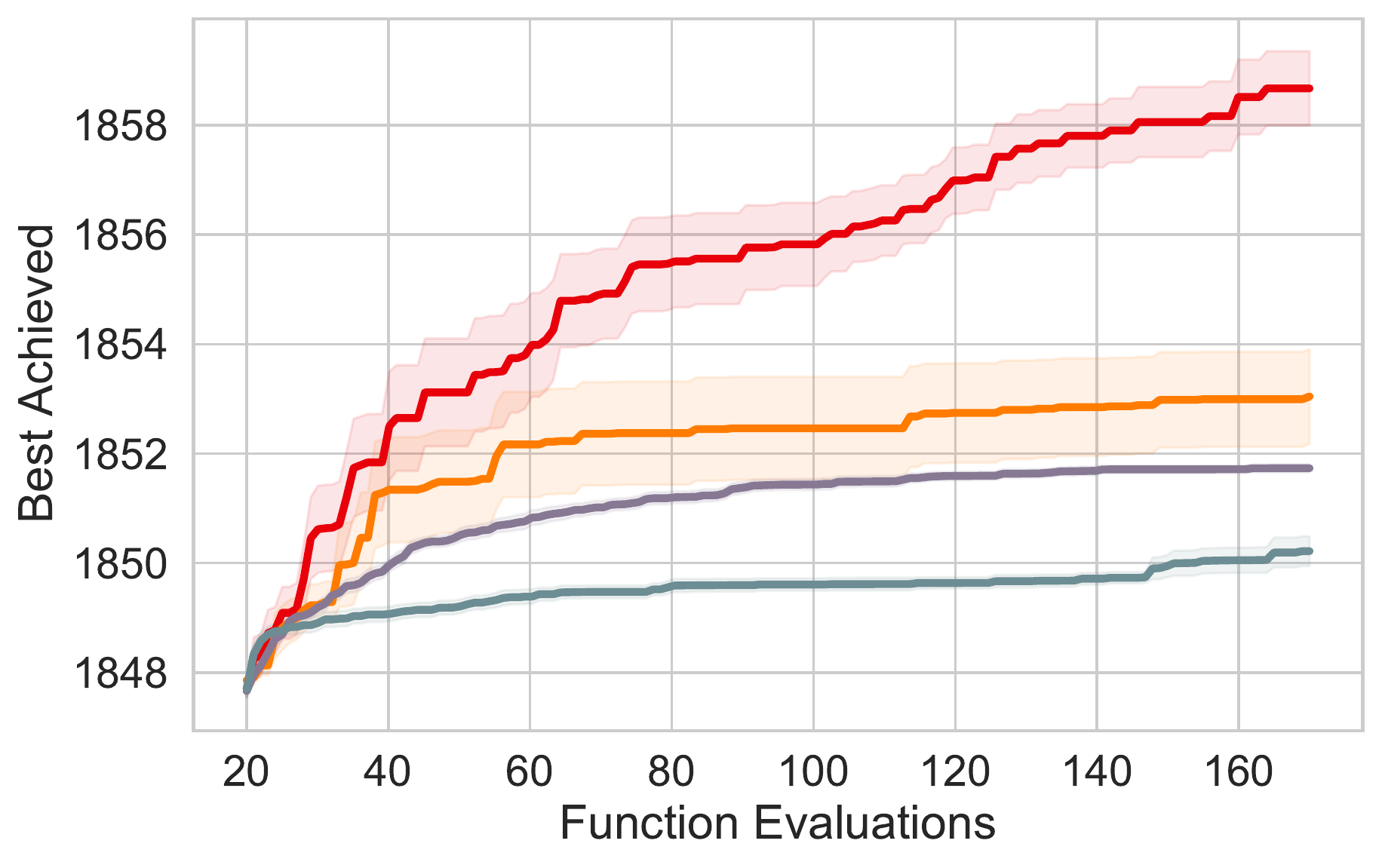}
		\caption{\centering Cell-tower coverage\linebreak ($t=2 \times 50 \times 50, d=30$)}
		\label{fig:celltower}
	\end{subfigure}
	\begin{subfigure}{0.24\textwidth}
		\centering
		\includegraphics[width=\linewidth]{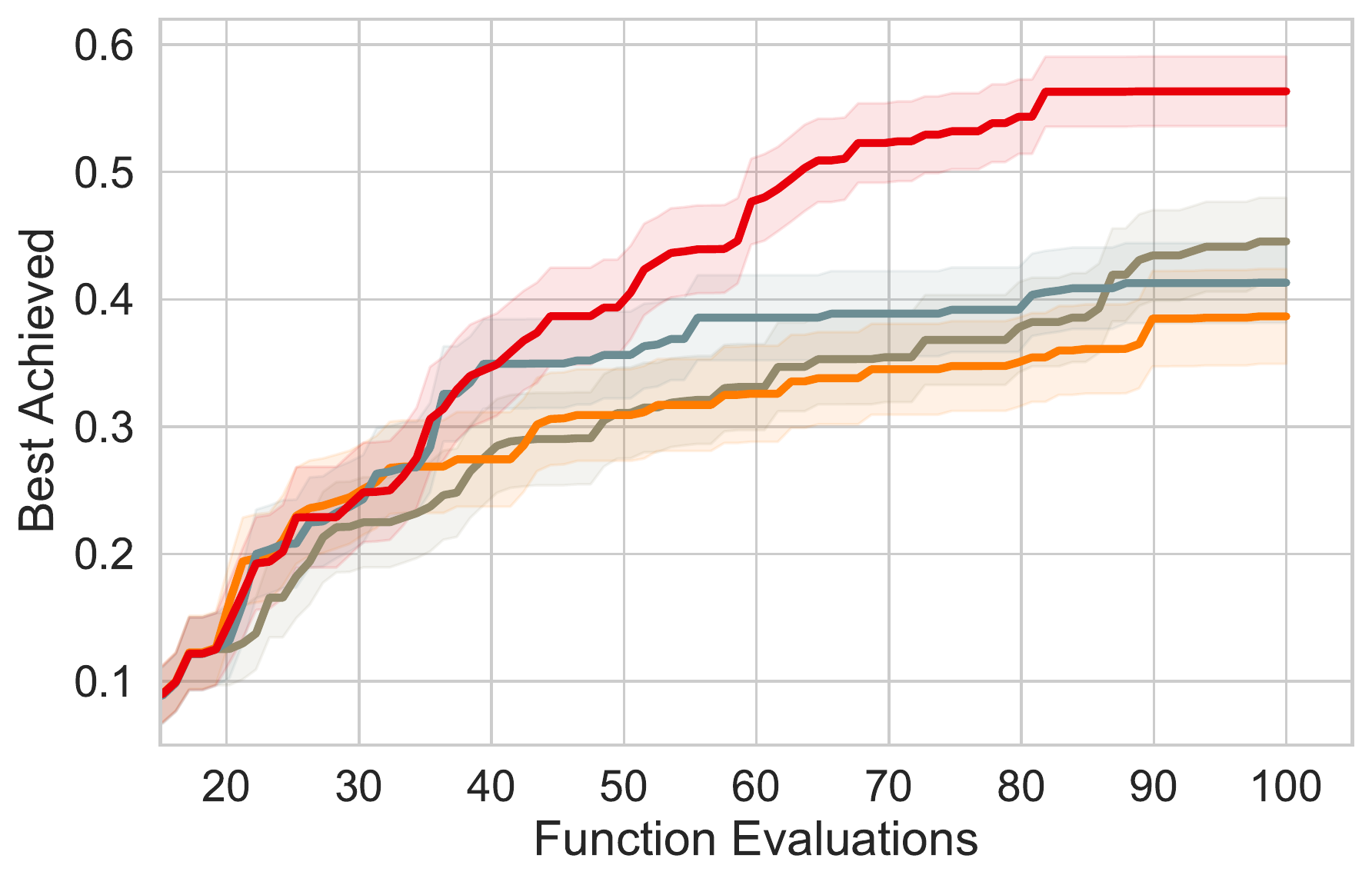}
		\caption{\centering Optics\linebreak ($t=16 \times 64 \times 64, d=4$)}
		\label{fig:optics}
	\end{subfigure}
	\caption{Performance of BO with and without HOGP-based composite objectives (EI-HOGP-CF) on four scientific problems. EI-HOGP-CF outperforms a standard BO model directly on the metric itself (EI) and a random baseline (Random). 
		Composite BO with the HOGP also outperforms composite BO with independent GPs (EI-CF). Our smooth latents for the HOGP (EI-HOGP + GP) typically outperform the HOGP itself. EI-CF is only feasible on the smallest problem.}
	\label{fig:hogp}
	\vspace{-0.5cm}
\end{figure*}

\textbf{Chemical Pollutants:}
Following \citet{astudillo_bayesian_2019}, we start with a simple  spatial problem in which environmental pollutant concentrations are observed on a $3 \times 4$ grid originally defined in \citet{bliznyuk2008bayesian}. 
The goal is to optimize a set of four parameters to achieve the true observed value by minimizing the mean squared error of the output grid to the output grid of the true parameters. The results, over $50$ trials, are shown in Figure \ref{fig:environmental}, where we find that the HOGP models with these few tasks outperform both independent batch GPs (but slightly) and BO on the metric itself.

\textbf{Optimizing PDEs:}
As a larger experimental problem, we  consider optimizing the two diffusivity and two rate parameters of a spatial Brusselator PDE (solved in \texttt{py-pde} \citep{zwicker2020py}) to minimize the weighted variance of the PDE output as an example of control of a dynamical system. Here, we solve the PDE on $64 \times 64$ grid, producing output solutions of size $2 \times 64 \times 64.$
Results over $20$ trials are shown in Figure \ref{fig:pde}, where the HOGP models outperform EI fit on the metric and the random baseline.

\textbf{Cell-Tower Coverage:} 
Following \citet{dreifuerst2020optimizing}, we optimize the simulated ``coverage map" resulting from the transmission power and down-tilt settings of $15$ cell towers (for a total of $30$ parameters) based on a scalarized quality metric combining signal power and inference at each location so as to maximize total coverage, while minimizing total interference. 
To reduce model complexity, we down-sample the simulator output to $50 \times 50$, initializing the optimization with $20$ points.
Figure~\ref{fig:celltower} presents the results over $20$ trials, where the HOGP models with composite objective EI outperform EI, indicating that modeling the full high-dimensional output is valuable.

\textbf{Optical Interferometer:}
Finally, we consider the tuning of an optical interferometer by the alignment of two mirrors as in \citet{sorokin2020interferobot}.
Here, the problem is to optimize the mirror coordinates to align the interferometer so that it reflects light without interference. There is a sequence of $16$ different interference patterns and the simulation outputs are $64 \times 64$ images (a tensor of shape $16 \times 64 \times 64$).
Thus, we jointly model {$65$,$536$} output dimensions. 
Scaling composite BO to a problem of this size would be impossible without the step change in scalability our method provides. Results are shown in Figure~\ref{fig:optics} over $20$ trials, where we find that the HOGP-CF + GP models outperform EI, with the HOGP + GP under-performing (perhaps due to high frequency variation in the images).

Across all of our experiments, we consistently find that composite BO is considerably more sample efficient than BO on the metric itself, with significant computational improvements gained from using the HOGP as compared to batch GPs, which are infeasible much beyond batch sizes of $100.$
Furthermore, our structured prior approach for the latent parameters of the HOGP tends to outperform the random initialization strategy in the original work of \citet{zhe_scalable_2019}.

\section{Conclusion}
We demonstrated the utility of Matheron's rule for sampling the posterior in multi-task Gaussian processes. Combining Matheron's rule with scalable Kronecker algebra enables posterior sampling in $\mathcal{O}(n^3 + t^3)$ rather than the previous $\mathcal{O}(n^3t^3)$ time. This renders posterior sampling from high-order Gaussian processes \citep{zhe_scalable_2019} practical,  for the first time unlocking Bayesian Optimization with composite objectives defined on high-dimensional outputs. 
This increase in computational efficiency dramatically reduces the time required to do multi-task Bayesian Optimization, and thus enables practitioners to achieve better automated Bayesian decision making. 
While we focus on the application to Bayesian Optimization in this work, our contribution is much broader and provides a step change in scalability to all methods that in involve sampling from MTGP posteriors. 
We hope in the future to explore stronger inter-task covariance priors to make MTGP model fits even more sample efficient.

\section*{Acknowledgements}
WJM, AGW are supported by an Amazon Research Award, NSF I-DISRE 193471, NIH R01 DA048764-01A1, NSF IIS-1910266, and NSF 1922658 NRT-HDR:FUTURE Foundations, Translation, and Responsibility for Data Science. WJM was additionally supported by an NSF Graduate Research Fellowship under Grant No. DGE-1839302, and performed part of this work during an internship at Facebook.
We would like to thank Paul Varkey for providing code and David Eriksson, Qing Feng, and Polina Kirichenko for helpful discussions.

\bibliographystyle{apalike}
\bibliography{references}

\clearpage

\appendix
\renewcommand\thefigure{A.\arabic{figure}}
\renewcommand\theequation{A.\arabic{equation}}   
\renewcommand{\thesection}{\Alph{section}}
\setcounter{figure}{0}
\setcounter{equation}{0}
  \vbox{
    \hsize\textwidth
    \linewidth\hsize
    \vskip 0.1in
      \hrule height 4pt
  \vskip 0.25in
  \vskip -\parskip
    \centering
    {\centering \LARGE\bf Supplementary Materials for Bayesian Optimization with High-Dimensional Outputs\par}
      \vskip 0.29in
  \vskip -\parskip
  \hrule height 1pt
  \vskip 0.1in
  }
\section*{Organization}

The Appendix is organized as follows:
\begin{itemize}
    \item Appendix \ref{app:limitations} describes limitations and negative societal impacts of our work.
    \item Appendix \ref{app:rel_work} describes further background and related work on Kronecker matrix vector products, Matheron's rule, multi-task Gaussian process models, and sampling multi-task posteriors using LOVE \citep{pleiss_constant-time_2018}.
    \item Appendix \ref{app:methods} gives a more detailed description of sampling multi-task Gaussian process posteriors using Matheron's rule, our set of priors for the HOGP \citep{zhe_scalable_2019}, and a proof of convergence using Matheron's rule sampling to optimize MC acquisition functions.
    \item Appendix \ref{app:further_exps} describes two more experiments on multi-task BO and contextual BO, before giving more detail and results on the experiments in the main paper.
\end{itemize}

\section{Limitations and Societal Impacts}\label{app:limitations}
From a practical perspective, we see several inter-related limitations:
\begin{itemize}
    \item If the underlying multi-output function we are trying to model has very different lengthscales for each output, then the shared data covariance matrix of the MTGP may not be able to model each output very well. In practice, this seems to be rather rare, but may prove to be more problematic for the HOGP model due to the number of outputs that we model.
    \item Numerical instability can be an issue when solving systems of equations using eigen-decompositions; however, we did not find it to be problematic as our implementation performs the eigen-decomposition in double precision despite all other computations being performed in single precision. 
    \item As currently described, we can only apply our method to block design, fully observed settings where all data points and tasks are observed at the same time. In future work, we hope to extend past this limitation, perhaps using the approaches of \citep{zhang2007maximum,wilson2014fast}.
    \item \emph{Autokrigeability} may play a larger role here than it does in standard MTGP scenarios, especially when the black box function is observed without observation noise. See \citet{bonilla_multi-task_2007} for a longer description of this problem. However, we still get substantial computational enhancements from using the HOGP and Kronecker structure compared to modeling each output with an independent GP model.
    \item Negative transfer can arise if the relationship between tasks is highly non-linear. Non-linear MTGP models are needed to remedy this issue, rather than the ICM model we consider \citep{boustati2019non}.
\end{itemize}

Looking farther out, we do not broadly anticipate direct negative societal impacts as a result of our work. 
However, Bayesian Optimization, which we focus on in this paper, is a very generic methodology for optimizing black box functions.
This technology can be used for good reasons such as public health surveillance and modelling \citep{2020arXiv200407641L,andrade2020finding} or technological design, such as the radio frequency tower location and optics problems discussed in this paper.
These types of applications should hopefully increase the likelihood of deployment of new advanced technologies such as $5G$ cell coverage globally and thus help to provide more people with stable jobs and employment.

\section{Further Background and Related Work}\label{app:rel_work}
In this Appendix, we note of several more references in the geostatistics community who use Matheron's rule in multi-task Gaussian processes (MTGPs), give more background on MTGPs, before moving into a more detailed description of LOVE predictive variances and fast sampling \citep{pleiss_constant-time_2018}, including for multi-task GPs.

\subsection{An Extended Reference on Kronecker Structure}\label{app:kron}
We may exploit Kronecker structure in matrices in order to perform more efficient matrix vector multiplies and solves.
For a more detailed introduction to Kronecker matrices and their properties, please see \citet[Chapter 5]{saatcci2012scalable} as well as Sections 1.3.7-8 and 12.3 of \citet{golub_matrix_2013}.
Matrix vector multiplies (MVMs) are efficient and can be computed from:
\begin{align*}
	z = (K_1 \otimes K_2)\text{vec}(A) = \text{vec}(K_2 A K_1^\top);
\end{align*}
if $K_1 \in \mathbb{R}^{n_1 \times n_1}$ and $K_2 \in \mathbb{R}^{n_2 \times n_2}.$ As a result, computing $z$ costs $\mathcal{O}(n_1^2 + n_2^2 + n_1 n_2 (n_1 + n_2))$ time \citep[12.3]{golub_matrix_2013}.

There are several other useful Kronecker product properties (again summarized from \citet{saatcci2012scalable,golub_matrix_2013} amongst other sources):
$
(A \otimes B)(C \otimes D) = AC \otimes BD,
$
if the shapes match, and 
$
(K_1 \otimes K_2)^{-1} = (K_1^{-1} \otimes K_2^{-1}),
$
and 
$
\log |K_1 \otimes K_2| = n_2 \log |K_1| + n_1 \log |K_2|.
$
Root (cholesky) decompositions also factorize across the Kronecker product as $A \otimes B = LL^\top \otimes RR^\top = (L \otimes R)(L^\top \otimes R^\top).$

Using these properties, we then are able to compute matrix inverses of Kronecker plus constant diagonal matrices as
$
(K_1 \otimes K_2 + \sigma^2 I)^{-1} = (Q_1 \otimes Q_2)(\Lambda_1 \otimes \Lambda_2 + \sigma^2 I)^{-1}(Q_1 \otimes Q_2)^\top,
$
where $K_i = Q_i \Lambda_i Q_i^\top$ (the eigen-decomposition of $K_i$).
As eigen-decompositions cost cubic time in the size of the matrix then the total cost for these matrix solves is $\mathcal{O}(n^3 + t^3 + nt(n+t))$ with the final term coming from matrix vector products.
In general, there is no efficient way to compute matrix inverses of the form: $(K_1 \otimes K_2 + T_l)^{-1}$ where $T_l$ is a diagonal matrix that is non-constant. One can use conjugate gradients to compute solves in that setting.

However, as \citet{rakitsch2013noise} demonstrate, if we assume a structured noise term, e.g. a likelihood that is $\vec{Y} \sim \mathcal{N}(f, \Sigma_X \otimes \Sigma_T),$ then there is an efficient method of computing matrix inverses and solves:
\begin{align*}
(K_1 \otimes K_2 + \Sigma_X \otimes \Sigma_T)^{-1} &= (Q_X \Lambda_X^{1/2} \otimes Q_T \Lambda_T^{1/2})(\tilde Q_1 \otimes \tilde Q_2)\\
&(\tilde \Lambda_1 \otimes \tilde \Lambda_2 + I)^{-1}(\tilde Q_1 \otimes \tilde Q_2)^\top (Q_X \Lambda_X^{1/2} \otimes Q_T \Lambda_T^{1/2})^\top ,
\end{align*}
where
$\tilde Q_1 \tilde \Lambda_1 \tilde Q_1^\top = \Lambda_X^{-1/2}Q_X^\top K_1 Q_X \Lambda_X^{-1/2}$
and $\tilde Q_2$ is defined in the same manner.
This costs two eigen-decompositions and several matrix vector multiplications for a total of $\mathcal{O}(n^3 + t^3 + nt(n+t))$ time.

\citet{rakitsch2013noise} and \citet{bonilla_multi-task_2007} brush the $nt$ terms in scaling under the rug as they are dominated by the cubic time complexity of the eigen-decompositions.
We follow this notation in our results.

\subsection{Matheron's Rule}
Matheron's rule is well known in the geostatistics literature where it is called ``prediction by conditional simulation" \citep{chiles2005prediction,chiles2009geostatistics}.
There, it is also known that it can be applied to multi-task Gaussian processes, as described in \citet{de1994reminders} and mentioned in \citet{emery2007conditioning}.
\citet{larocque2006conditional} use Matheron's rule to sample in the multi-task setting (termed co-kriging in that literature) and study the uncertainty of the ICM kernel on groundwater use cases.
However, they focus only on two to three tasks and do not exploit the Kronecker structure in the multi-task covariances.
\citet{doucet_note_2010} gave a didactic explanation of Matheron's rule with the goal of introducing it to the broader machine learning community, explaining its applications in sampling Kalman filters.

\subsection{Multi-task Gaussian Process models}\label{app:mtgps}
Both computationally efficient, e.g. \citet{bruinsma2020scalable}, and variational methods, e.g. \citet{nguyen2014collaborative,dai2017efficient}, can be made more efficient in our Matheron's rule implementation.
Furthermore, other kernels for MTGPs such as the linear model of coregionalization (LMC) and the semiparametric latent factor models \citep{rasmussen_gaussian_2008} can also be extended to use Matheron's rule in the way that we mention here.
See \citet{alvarez2012kernels,bruinsma2020scalable} for an extended description of the relationships between various models of multi-task GPs.
In most of the paper, we focus solely on the exact setting with ICM kernels for both didactic and implementation purposes. We detail the extension to the LMC case in Appendix~\ref{appdx:extension_LMC}.

\paragraph{Training Multi-task GPs: }
To train single task GPs, we optimize the marginal log likelihood with gradient based optimization; this approach extends to training multi-task GPs as well.
In the single task setting, the marginal log likelihood (MLL) is:
$\log p(y) = \frac{1}{2}\left(N \log 2\pi - \log |K+\sigma^2 I| - y^\top (K + \sigma^2 I)^{-1} y\right)$ (Eq. 5.8 in \citet{rasmussen_gaussian_2008}).
To extend the MLL into the multi-task setting, we only need to exploit Kronecker identities as described in Section \ref{app:kron} and focus solely on the constant diagonal case.\footnote{Please see \citet{rakitsch2013noise} for the structured noise case.}
The MLL becomes
\begin{align*}
\log p(y) = \frac{1}{2}\left(NT \log 2\pi - \log |K_X \otimes K_T +\sigma^2 I| - \text{vec}(y)^\top (K_X \otimes K_T + \sigma^2 I)^{-1} \text{vec}(y)\right)
\end{align*}
The log determinant term simplifies to
\begin{align*}
\log |K_X \otimes K_T +\sigma^2 I| = \log |\Lambda_X \otimes \Lambda_T + \sigma^2 I|
\end{align*}
which is just the determinant of a diagonal matrix.
The quadratic form similarly simplifies 
\begin{align*}
\text{vec}(y)^\top (K_X \otimes K_T + \sigma^2 I)^{-1} \text{vec}(y)= \text{vec}(y)^\top(Q_X \otimes Q_T)(\Lambda_X \otimes \Lambda_T + \sigma^2 I)^{-1}(Q_X \otimes Q_T)^\top \text{vec}(y)^\top.
\end{align*}

We then estimate the kernel hyper-parameters with gradient based optimization of the log marginal likelihood, following \citet{stegle2011efficient,rakitsch2013noise}.
The predictive means and variances of the MTGP are given by \citet{bonilla_multi-task_2007}.

\subsection{LOVE Variance Estimates and Sampling Multi-Task Posteriors}\label{app:love}

To compute $\textcolor{blue}{\mu^*}$ in \eqref{main:eq:exact_pred_mean}, we need to solve the linear system, $(K_{XX} + \sigma^2 I )^{-1} \mathbf{y}$; this solution costs $\mathcal{O}(n^3)$ when using the Cholesky decomposition \citep{rasmussen_gaussian_2008}.
Recently, \citet{gardner_gpytorch_2018} have proposed using preconditioned conjugate gradients (CG) to solve these linear systems in $\mathcal{O}(rn^2)$ time, where $r$ is the number of conjugate gradients steps.
Similarly, to compute the predictive variance in \eqref{main:eq:exact_pred_var}, we need to solve $n_{\text{test}}$ systems of equations of the form $(K_{XX} + \sigma^2 I )^{-1}K_{X \mathbf{x_{\text{test}}}},$ which would naively cost $\mathcal{O}(n^3 n_{\text{test}})$ time, reduced to $\mathcal{O}(n_{\text{test}}n + n^2 n_{\text{test}})$ time if we have precomputed the Cholesky decomposition of $(K_{XX} + \sigma^2 I )$.

\citet{pleiss_constant-time_2018} propose to additionally use a cached Lanczos decomposition (called LOVE) such that $RR^\top \approx (K_{XX} + \sigma^2 I )^{-1}$ and then simply perform matrix multiplications against $K_{X \mathbf{x_{\text{test}}}}$ to compute the predictive variances.
The time complexity of the Lanczos decomposition is also $\mathcal{O}(n^2 r)$ for computing a rank $r$ decomposition.
Sampling proceeds similarly by computing a rank $r$ decomposition to $\textcolor{blue}{\Sigma}$ in \eqref{eq:dist_sampling}.
The overall time complexity for computing $s$ samples at from the predictive distribution in the exact formulation is reduced to $\mathcal{O}(rn^2 + s r n_{\text{test}} + r n_{\text{test}}^2 + n^2 n_{\text{test}}).$
These advances in GP inference have enabled exact single-output GP regression on datasets of up to one million data points \citep{wang_exact_2019}.
Furthermore, \citet{gardner_gpytorch_2018,pleiss_constant-time_2018} demonstrate that one can choose $r < n$ while maintaining accuracy up to numerical precision in floating point. 

\paragraph{LOVE for multi-task predictions.} It is possible to  exploit the Kronecker structure in the posterior distribution to enable more efficient sampling than the naive $\mathcal{O}((n_{\text{test}}t)^3)$ approach \citep{gardner_gpytorch_2018}. 
$(K_{XX} \otimes K_T + \sigma^2 I_{nT})$ admits an efficient matrix vector multiply (MVM) due to its Kronecker structure --- this MVM takes $\mathcal{O}(nt(n+t))$ time, see Appendix \ref{app:kron}.
If we use LOVE to additionally decompose the matrix such that $LL^\top \approx (K_{XX} \otimes K_T + \sigma^2 I_{nT})^{-1},$ then computing $L$ that has rank $r$ takes $\mathcal{O}(r(nt(n+t)))$ time but has a storage cost of $nt \times r$, which is multiplicative in the combination of $n$ and $t$.
Then, computing $(K_{x_{\text{test}}, X} \otimes K_T)L$ takes $\mathcal{O}((n_\text{test}nt + nt^2)r)$ time.
This represents an improvement over the naive method, but still ends up requiring computing 
$\textcolor{blue}{\Sigma^*} = AA^\top = (K_{x_{\text{test}}, x_{\text{test}}} \otimes K_T) - (K_{x_{\text{test}}, X} \otimes K_T)LL^\top (K_{x_{\text{test}}, X}^\top \otimes K_T)$,
which is a $n_{\text{test}}t \times n_{\text{test}}t$ matrix, and therefore costs at least $(n_{\text{test}}t)^2$ time to decompose and thus sample.
The overall time complexity is then $\mathcal{O}(r(n_{\text{test}}t)^2 + rn(n_\text{test} + t^2)).$
Finally, the matrix $A$ must be re-computed from scratch for each new test point $x_{\text{test}},$ and only the matrix $L$ can be reused for different test points (as in a Bayesian Optimization loop).

\section{Further Methods}\label{app:methods}

In this Appendix, we start by giving a more detailed derivation of our efficient Matheron's rule implementation for sampling from multi-task Gaussian processes, then we describe a new set of priors for the HOGP model, before closing with a convergence proof of using Matheron's rule in optimizing Monte Carlo acquisition functions.

\subsection{Details of Using Matheron's Rule}
We derive only the zero mean case here for simplicity.

Succinctly, to generate $f(x_{\text{test}}) | Y = y$ under the ICM, we may draw a joint sample from the prior 
\begin{align}
	(f, Y) \sim \mathcal{N}\left(0, \left(\begin{array}{cc}
		K_{XX}  & K_{Xx_{\text{test}}}  \\
		K_{x_{\text{test}}X} & K_{x_{\text{test}}x_{\text{test}}}
	\end{array} \right) \otimes K_T\right)
	\label{eq:app_joint}
\end{align}
and then update the sample via an update from computing $(K_{\text{train}} + T_l)^{-1}(y - Y - \epsilon).$
Here, $T_l$ represents the noise likelihood used --- it could be either constant diagonal: $\sigma^2I,$ non-constant diagonal with a variance term for each task: $D,$ or Kronecker structured itself: $D_n \otimes T_t,$
where $T_t$ is a dense matrix.
The formula is given as 
\begin{align}
	\bar{f} = f + K_{x_{\text{test}} X}(K_{XX} + \sigma^2 I)^{-1}(y - Y - \epsilon).
	\label{eq:matherons_rule}
\end{align}

The joint covariance matrix, $\Ktt,$ in \eqref{eq:app_joint} is highly structured
\begin{align*}
	\Ktt = \left(\begin{array}{cc}
		K_{XX}  & K_{x_{\text{test}}X}  \\
		K_{Xx_{\text{test}}} & K_{x_{\text{test}}x_{\text{test}}}
	\end{array} \right) \otimes K_T = \tilde{R}\tilde{R}^\top \otimes LL^\top = (\tilde{R} \otimes L)(\tilde{R} \otimes L)^\top,
\end{align*}
where $\tilde R \tilde R^\top \approx K_{(X, x_{\text{test}}), (X, x_{\text{test}})}$ and $LL^\top = K_T$ and we exploit Kronecker structure.
To compute $\tilde R,$ we follow \citet{jiang_efficient_2020}'s method for fantasization (given in Proposition 2 therein):
\begin{align*}
	\left(\begin{array}{cc}
		K_{XX} & K_{x_{\text{test}}X}  \\
		K_{Xx_{\text{test}}} &  K_{x_{\text{test}}x_{\text{test}}}
	\end{array} \right) = \left(\begin{array}{cc}
		R & 0  \\
		L_{12} & L_{22}
	\end{array} \right)\left(\begin{array}{cc}
		R & 0  \\
		L_{12} & L_{22}
	\end{array} \right)^\top,
\end{align*}
where $R R^T = K_{XX}$ and $L_{12}^\top = R^{-1}K_{Xx_{\text{test}}}$.
To compute $L_{22},$ we have to compute
\begin{align*}
	L_{22} = (K_{x_{\text{test}}x_{\text{test}}} - L_{12}L_{12}^\top)^{1/2}.
\end{align*}
If we assume a rank $r$ decomposition of $K_{XX},$ computed in $\mathcal{O}(n^2r)$ time (e.g. a LOVE decomposition \citet{pleiss_constant-time_2018}), then computing $L_{12}$ costs $\mathcal{O}(n_{\text{test}}r n)$ if we have stored $R^{-1}$ (or $R^+$).
Similarly, computing $L_{22}$ costs $\mathcal{O}(n_{\text{test}}^2 r)$ time if we use a Lanczos decomposition (for large $n_{\text{test}}$). We could also use contour integral quadrature \citep{pleiss_fast_2020} to compute $L_{22}v$ at the expense of having to re-compute it every time we want to draw a new sample.
Sampling then proceeds by computing 
\begin{align}
	(f, Y) = \left(\left(\begin{array}{cc}
		R & 0  \\
		L_{12} & L_{22}
	\end{array} \right) \otimes L\right) z,
\end{align}
where $z \sim \mathcal{N}(0, I).$
We can then compute $\epsilon \sim \mathcal{N}(0, T_l),$ where $T_l$ is the noise distribution. 
Typically, $T_l$ will be diagonal so this sampling just requires taking the square root of $T_l;$
it could alternatively use a Kronecker structured root decomposition if not diagonal.

We then need to compute 
\begin{align*}
	w = (K_{XX} + T_l)^{-1}(y - Y - \epsilon),
\end{align*}
via efficient Kronecker solves as described in Section \ref{app:kron} --- for example, if $T_l$ is a constant diagonal, use the Kronecker eigen-decomposition and add the constant to the eigenvalues.
The diagonal solves generally cost $\mathcal{O}(n^3 + t^3 + nt(n+t))$ time, while even $T_l = \Sigma_{TT} \otimes \Sigma_{NN},$ full rank task and data noises, still costs $\mathcal{O}((n^3 + t^3) + nt(n+t))$ as we only need to perform extra matrix multiplications \citep{rakitsch2013noise}.
Finally, we only need to compute a Kronecker matrix vector multiplication, computing $z = K_{x_{\text{test}} X} w$ and $\bar{f} = f + z.$
This exploits Kronecker identities and costs $\mathcal{O}(nt(n+t)).$

We choose to mention the $nt$ terms for precision, despite it typically being dropped in the literature due to the solve costs being dominated by the eigen-decompositions of training and task covariance matrices \citep{rakitsch2013noise,higdon2008computer,bonilla_multi-task_2007}.
In all cases, the $nt$ terms come solely from the Kronecker matrix vector multiplications.
The overall time complexity of the operations is then $\mathcal{O}(n^3 + t^3 + nt(n+t))$ time, which is $\mathcal{O}(n^3 + t^3)$ time.

The non-zero mean case can be implemented by adding in the mean function into the joint sample at \eqref{eq:app_joint} and again at the end of \eqref{eq:matherons_rule}.

The extension to the HOGP model proceeds like the general ICM case if we replace $L$ by the root decomposition of the kernels across all tensors, $L = \otimes_{i=2}^d L_i$ such that $\otimes_{i=2}^d K_i = \otimes_{i=2}^d L_i L_i^\top.$
Again, we only need to update the root decomposition on the data covariance and can re-use the root decomposition on the latent covariances.

Overall memory complexities are shown in Table \ref{tab:memory}; we ignore the fixed constant train decomposition costs of $K_{XX}$ and/or $K_{XX} + \sigma^2 I.$
For single output GPs, this is a constant $\mathcal{O}(n^2)$ ($\mathcal{O}(nr)$ if LOVE is used).
For multi-task GPs, it becomes $\mathcal{O}(n^2 + t^2 + nt)$ (or $\mathcal{O}(ntr)$).
For the HOGP, it is $\mathcal{O}(\sum_{i=1}^k d_i^2 + \prod_{i=1}^k d_i)$ (or $\mathcal{O}(r(\sum_{i=1}^k d_i)).$
The multiplicative scaling in memory is the cost of a single vector (the eigenvalues of $K_{XX}$) for the HOGP and the MTGPs. Unfortunately, combining Lanczos partial eigen-decompositions does not help reduce the memory by as much in the MTGP or HOGP setting due to the necessity of some zero-padding.

\begin{table*}[h!]
	\centering
	\caption{Memory complexities after pre-computation for posterior sampling in single-output, multi-task, and high-order (HOGP) Gaussian Process models. Matheron's rule allows decomposition across the Kronecker product of the train and task covariances, enabling significant improvements in memory scaling. We ignore pre-computation costs, while the multiplicative terms are single vectors.
	}
	\begin{tabular}{c|c|c}
		\toprule Model & \multicolumn{1}{c}{Distributional (Standard) \eqref{eq:dist_sampling}} & \multicolumn{1}{|c}{\textbf{With Matheron's rule  \eqref{eq:matheron_gp}}} \\\toprule
		Single-Output & $n_{\text{test}}^2$  & $n_{\text{test}}^2 + nn_\text{test}$ \\ \midrule
		Multi-Task & $(n_{\text{test}}t)^2$ & \textcolor{blue}{$n_{\text{test}}^2 + t^2 + nt$} \\ \midrule
		HOGP &$(n_{\text{test}}^2)\prod_{i=2}^d d_i^2$  &
		$\textcolor{blue}{n_{\text{test}}^2 + \sum_{i=2}^d d_i^2 + \prod_{i=2}^d d_i}$ \\\bottomrule
	\end{tabular}
	\label{tab:memory}
\end{table*}

Finally, time complexities when using Lanczos decompositions throughout are shown in Table \ref{tab:compute_love}, with the corresponding memory requirements after pre-computation shown in Table \ref{tab:memory_love}.
These present further improvements to the Cholesky based approaches described throughout and enable Matheron's rule sampling with MTGPs to scale to larger $n$ and larger $t$ than even the exact settings.

\begin{table*}[h!]
	\centering
	\caption{Time complexities for posterior sampling in single-output, multi-task, and high-order (HOGP) Gaussian Process models with LOVE fast predictive variances and Lanczos decompositions of rank $r$. \textcolor{blue}{Time complexities shown in blue} are our contributions that have not yet been considered by the literature. Standard Sampling multi-task Gaussian processes scales multiplicatively in the combination of the number of tasks, $t,$ and the number of data points, $n,$ while using Matheron's rule allows for structure exploitation that reduces the combination to become additive in these components. 
	}
	\begin{small}
		\begin{tabular}{c|c|c}
			\toprule Model & \multicolumn{1}{c}{Distributional (Standard) \eqref{eq:dist_sampling}} & \multicolumn{1}{|c}{\textbf{With Matheron's rule  \eqref{eq:matheron_gp}}} \\\toprule
			Single-Output  &$\mathcal{O}(r(n^2 + n_{\text{test}}^2))$  &$\mathcal{O}(r(n^2 + n_{\text{test}}^2))$ \\ \midrule
			Multi-Task 
			& $\mathcal{O}(rt^2(n^2 + n_{\text{test}}^2))$  &$\textcolor{blue}{\mathcal{O}(r((n^2 + n_{\text{test}}^2) + t^2)}$ \\\midrule
			HOGP & --- &
			$\textcolor{blue}{\mathcal{O}(r((n^2 + n_{\text{test}}^2) + \sum_{i=2}^d d_i^2))}$\\\bottomrule
		\end{tabular}
	\end{small}
	\label{tab:compute_love}
\end{table*}

\begin{table*}[h!]
	\centering
	\caption{Memory complexities after pre-computation for posterior sampling in single-output, multi-task, and high-order (HOGP) Gaussian Process models when using LOVE posterior covariances and Lanczos decompositions of rank $r$. Matheron's rule allows decomposition across the Kronecker product of the train and task covariances, enabling significant improvements in memory scaling.
	}
	\begin{tabular}{c|c|c}
		\toprule Model & \multicolumn{1}{c}{Distributional (Standard) \eqref{eq:dist_sampling}} & \multicolumn{1}{|c}{\textbf{With Matheron's rule  \eqref{eq:matheron_gp}}} \\\toprule
		Single-Output  &  $n_{\text{test}}r$  &  $n_{\text{test}}r + + nn_\text{test}$ \\ \midrule
		Multi-Task  & $(n_{\text{test}}t)r$ &  \textcolor{blue}{$r(n_{\text{test}} + t)+ nt$}\\\midrule
		HOGP  & --- &
		$\textcolor{blue}{r( n_{\text{test}} + \sum_{i=2}^d d_i)+ \prod_{i=2}^d d_i}$\\\bottomrule
	\end{tabular}
	\label{tab:memory_love}
\end{table*}

\subsubsection{Extension to Linear Model of Coregionalization}
\label{appdx:extension_LMC}
We close this section by noting that the linear model of coregionalization (LMC) as described in \citet{alvarez2012kernels} can be written as a sum of Kronecker products: $K_{\text{train}} = \sum_{q=1}^Q B_q \otimes K_q(X, X').$
We do not know of an efficient solve of sums of more than two Kronecker products, and we do not have a strong implementation of approximation methods or specialized preconditioners for solves of the form $(K_{\text{train}} + T_l)^{-1} z.$
However, exploiting Kronecker strucutre, matrix vector products $K_{\text{train}} v$ cost $\mathcal{O}(Q nt (n +t))$ so that we can use conjugate gradients to compute solves and Lanczos to compute root decompositions in $\mathcal{O}(r Q nt (n + t))$ time and $\mathcal{O}(r nt)$ memory.
We can similarly compute a dense root decomposition update to form $MM^\top \approx \Ktt$ by following the same strategy as before (e.g. root updates \citep{jiang_efficient_2020}) but on matrices of size $nt$ and with an update of size $n_\text{test} t.$
The structured MVMs make the updates more efficient, as computing $L_{22}$ costs only $rQn_{\text{test}}t(n_\text{test} + t)$ and $L_{12}$ costs $rQnt(n_\text{test} + t)$ to form the dense $r \times (n+n_\text{test})t$ updated root decomposition $M.$
Thus, we can achieve efficient sampling using Matheron's rule and Lanczos variance estimates in effectively $\mathcal{O}(rQt(n^2+n_\text{test}^2))$ time.

By comparison, sampling using the distributional approach would require dense factorizations of non-structured matrices, e.g. $\textcolor{blue}{\Sigma^*},$ that do not have fast MVMs, thereby proving to be more expensive both computationally and memory wise.
Indeed, the advantages will be magnified for large $n_{\text{test}}$ as then forming and decomposition $\textcolor{blue}{\Sigma^*}$ may quickly become too expensive memory wise.
We can exploit structured MVMs for sampling using Matheron's rule.
Implementation wise, this provides $3x$ speedups on a single GPU, while significantly improving the memory overhead; however, we leave detailed exploration for future work.

\subsection{Autokrigeability and the HOGP Model}\label{app:hogp_extension}

The High-Order Gaussian Process model has latent parameters, $x_l,$ for each latent dimension, so that $K_i = k(x_i, x_i).$
\citet{zhe_scalable_2019} initialize $x_l \sim \mathcal{N}(0, I)$ and optimize them as nuisance hyper-parameters, possibly regularizing them with weight decay.
For completeness, they consider multi-dimensional latent dimensions, e.g. $x_l$ is a matrix, while we consider here $x_l$ as only one dimensional. Our analysis holds for multi-dimensional latents.

In the noiseless limit, the HOGP model falls prey to autokrigeability as described by \citet{bonilla_multi-task_2007}.
If we were predicting a vector, this would not be an issue; however, we are predicting a tensor.
In general, we expect the tensor's dimensions to be smoothly varying --- that is, as we move down a row, we expect the covariance to be smoothly varying (e.g. it has smooth spatial structure).
This prior assumption can be demonstrated on the variance as shown in Figure \ref{fig:tensor_variance} for simulated data: as we move down rows and columns, the sample variance stays at least somewhat consistently high (the first column) or low (the tenth column).

Considering $n = 1$ test point, only one latent dimension, and then taking the limit as $\sigma^2 \rightarrow 0,$ the posterior variance becomes
\begin{align*}
	\Sigma &:= K_{x_{\text{test}} x_{\text{test}}} \otimes K_L - (K_{x_{\text{test}} x} \otimes K_L)(K_{XX}^{-1} \otimes K_L^{-1})(K_{X x_{\text{test}}} \otimes K_L) \\
	&= (K_{x_{\text{test}} x_{\text{test}}} - K_{x_{\text{test}} X}K_{XX}^{-1} K_{X x_{\text{test}}}) \otimes K_L \\
	&= a K_L,
\end{align*}
with $a = (K_{x_{\text{test}} x_{\text{test}}} - K_{x_{\text{test}} X}K_{XX}^{-1} K_{X x_{\text{test}}})$ (a scalar). 
The posterior variances for each output are given by the diagonals of $K_L$ or of the diagonal of $\otimes_{i=2}^d K_{i}$ for a $d$-tensor.
The covariances between outputs are given by $K_L.$

The trouble arises from the fact that if the latent parameters, $v_l,$ are not smoothly varying across $l$ then $K_L$ will not be smoothly varying either.
A priori, we might expect that for a given set of outputs in the tensor, say indices $0, 1, 2$, that their posterior variances would also be smoothly varying (as the underlying process across the tensor is ``smooth" in some sense), shown in Figure \ref{fig:tensor_variance}.
Referring back to Figure \ref{fig:hogp_coverage_maps} for intuition, by smooth, we mean that each pixel in the outcome maps is close in some sense to its nearest eight neighbors.
We should then expect the model's predictive posterior variances to vary in a similar fashion to the model's predictive posterior, which is data dependent.

For the HOGP tensor, the predictive posterior variance over an entire tensor (e.g. the coverage maps with $3$ dimensions) is given by the product of the diagonal of each posterior.
Thus, the inter-latent relationships can very quickly produce a ``jagged" posterior variance as shown in Figure \ref{fig:random_variance}, with the posterior covariance becoming even more highly patterned (Figure \ref{fig:random_covariances}).

For small $\sigma^2,$ we can approximate $\Sigma$ as $a K_L$ with $a = (K_{x_{\text{test}} x_{\text{test}}} - K_{x_{\text{test}} X}(K_{XX}^{-1} + \sigma^2 K_{XX}^{-2}) K_{X x_{\text{test}}})$ and consider the properties of $K_L.$ For smoothly varying covariances between indices in the posterior covariance matrix, we want smoothly varying $K_L$ and thus smoothly varying $x_l.$

\paragraph{Smoothly Varying Latent Dimensions: the HOGP + GP}
To produce smoothly varying latent dimensions, we initialize $x_l$ as a random draw from a multivariate normal distribution (or a Gaussian process) with zero mean and with a Matern $2.5$ kernel and lengthscale $1$ in all of our experiments.
The kernel is evaluated on $(0, 1./d_i, 2./d_i, \cdots, (d_i - 1)/d_i, 1.)$ for inputs.
We then use this distribution as a prior on the $x_l$ latents as well to help produce smoothly varying latents.

Example prior draws are shown in Figure \ref{fig:hogp_latents} in orange for two of its latent dimensions. 
The induced posterior variances are shown in Figure \ref{fig:hogp_variances}, which are considerably more smoothly varying than the random initializations.
It is somewhat closer to the true variances of the function (these are un-trained models).
Similarly, the (squeezed) posterior covariance matrix as shown in Figure \ref{fig:hogp_covariances} shows much less covariance patterning than the random covariances in Figure \ref{fig:random_covariances}.
Importantly, after training the model, the largest impact is not on the predictive mean, but rather the posterior covariances and thus the posterior samples.
We refer to HOGP models with this type of latent dimension prior and initialization as the HOGP + GP in the main text.
We leave a theoretical analysis of these priors to future work.
In Figure \ref{fig:hogp_prior_choice}, the true latent function is $f(x, y) = \sin(2 x * i) * \cos(0.4 y j) + \epsilon,$
where $i,j$ are the tensor indices (here, $(0, 31)^2$) and $\epsilon \sim \mathcal{N}(0, 0.01^2).$

\begin{figure}[t!]
	\centering
	\begin{subfigure}{0.32\textwidth}
		\includegraphics[width=\linewidth]{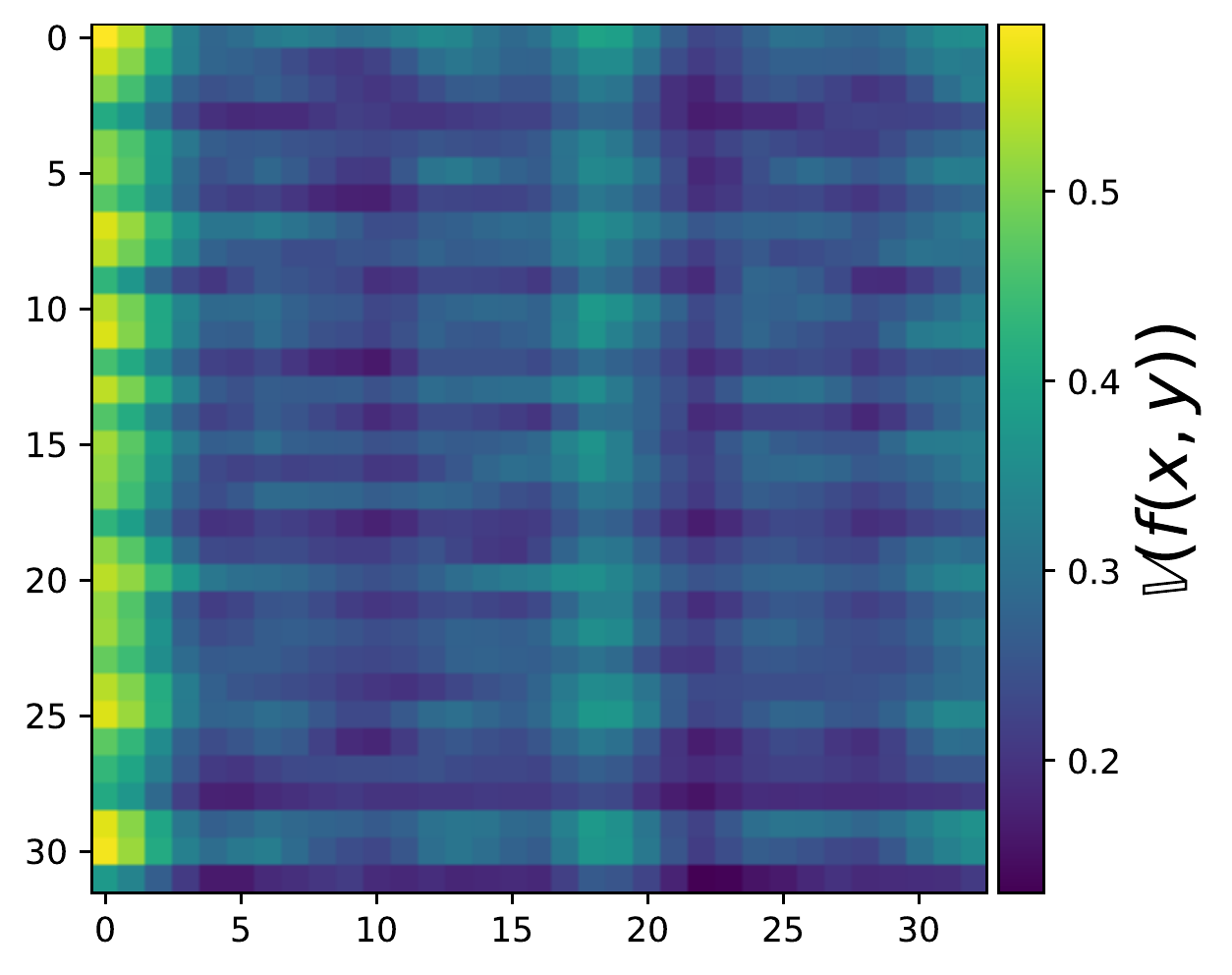}
		\caption{Variance of function}
		\label{fig:tensor_variance}
	\end{subfigure}
	\hfill
	\begin{subfigure}{0.32\textwidth}
		\includegraphics[width=\linewidth]{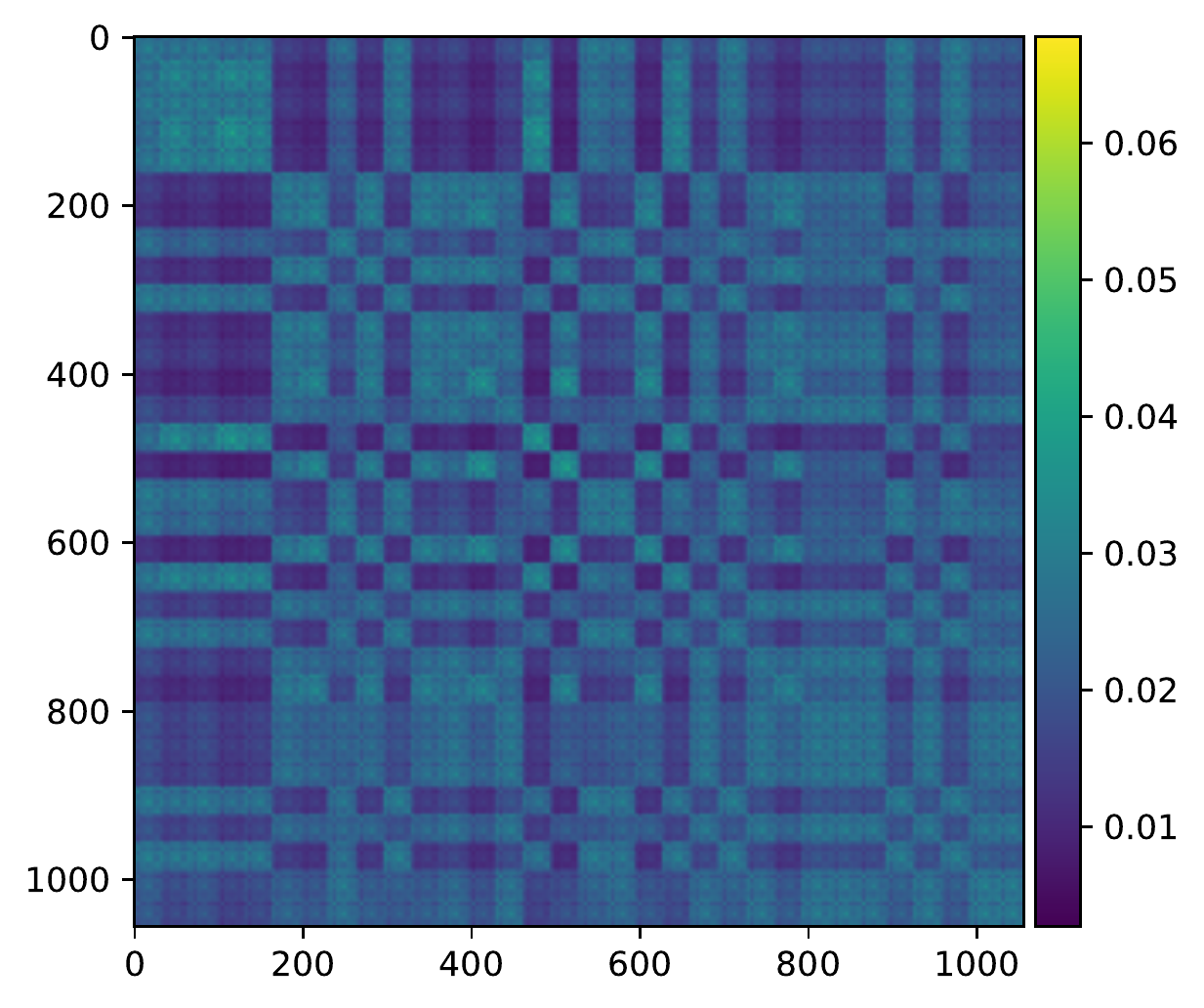}
		\caption{Random covariance}
		\label{fig:random_covariances}
	\end{subfigure}
	\hfill
	\begin{subfigure}{0.32\textwidth}
		\includegraphics[width=\linewidth]{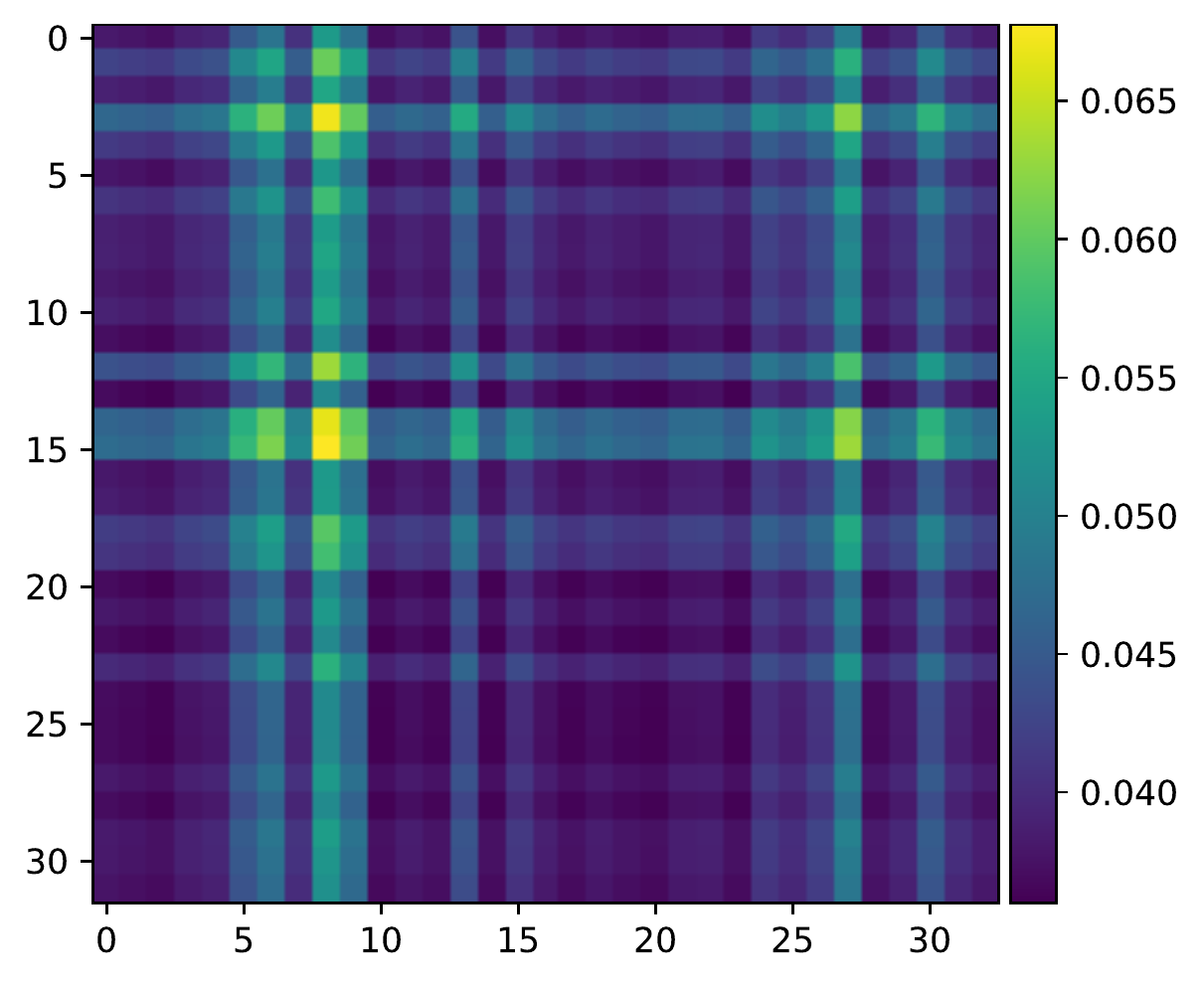}
		\caption{Random variance}
		\label{fig:random_variance}
	\end{subfigure}
	\hfill
	\begin{subfigure}{0.32\textwidth}
		\includegraphics[width=\linewidth]{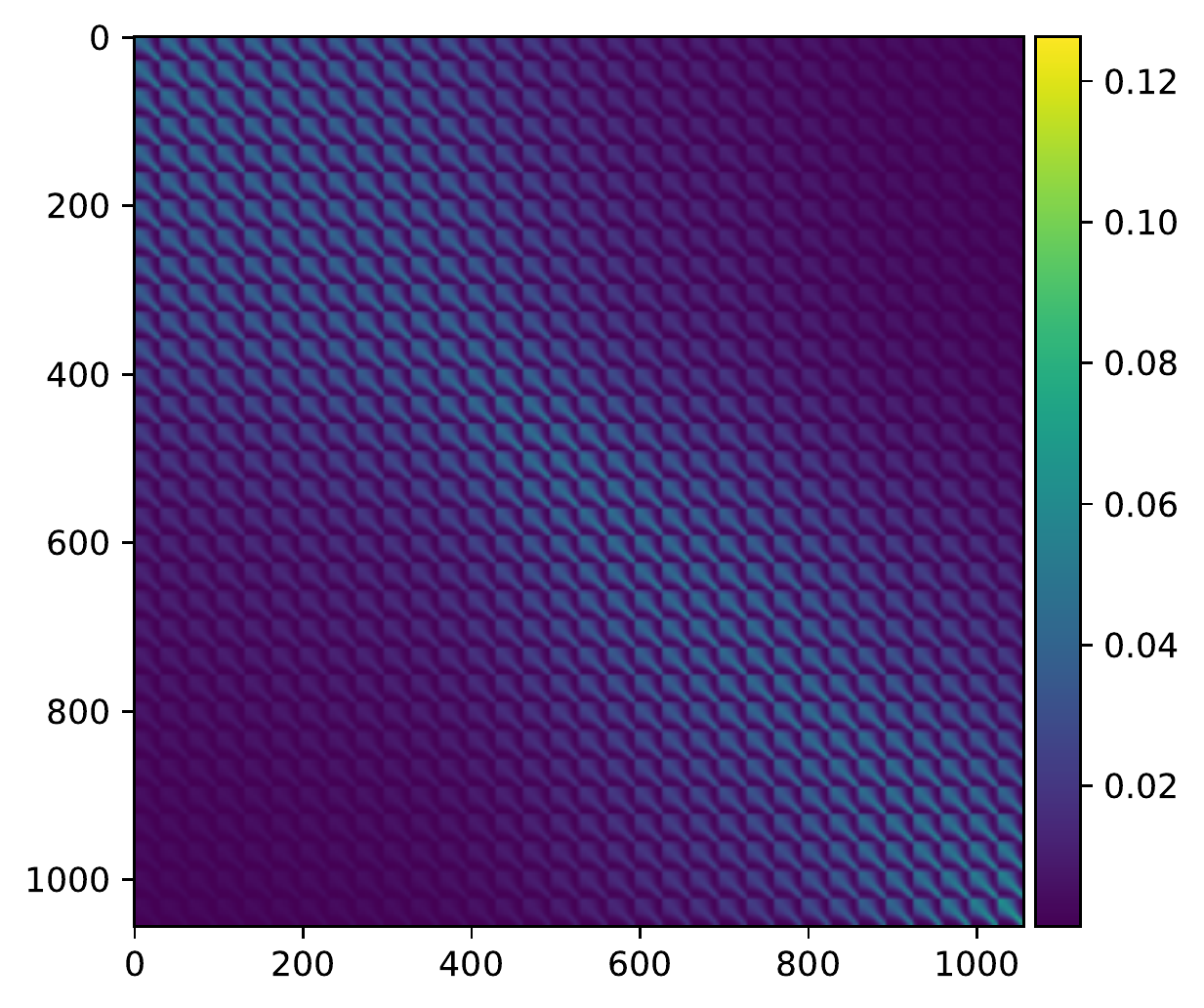}
		\caption{GP covariance}
		\label{fig:hogp_covariances}
	\end{subfigure}
	\hfill
	\begin{subfigure}{0.32\textwidth}
		\includegraphics[height=4cm]{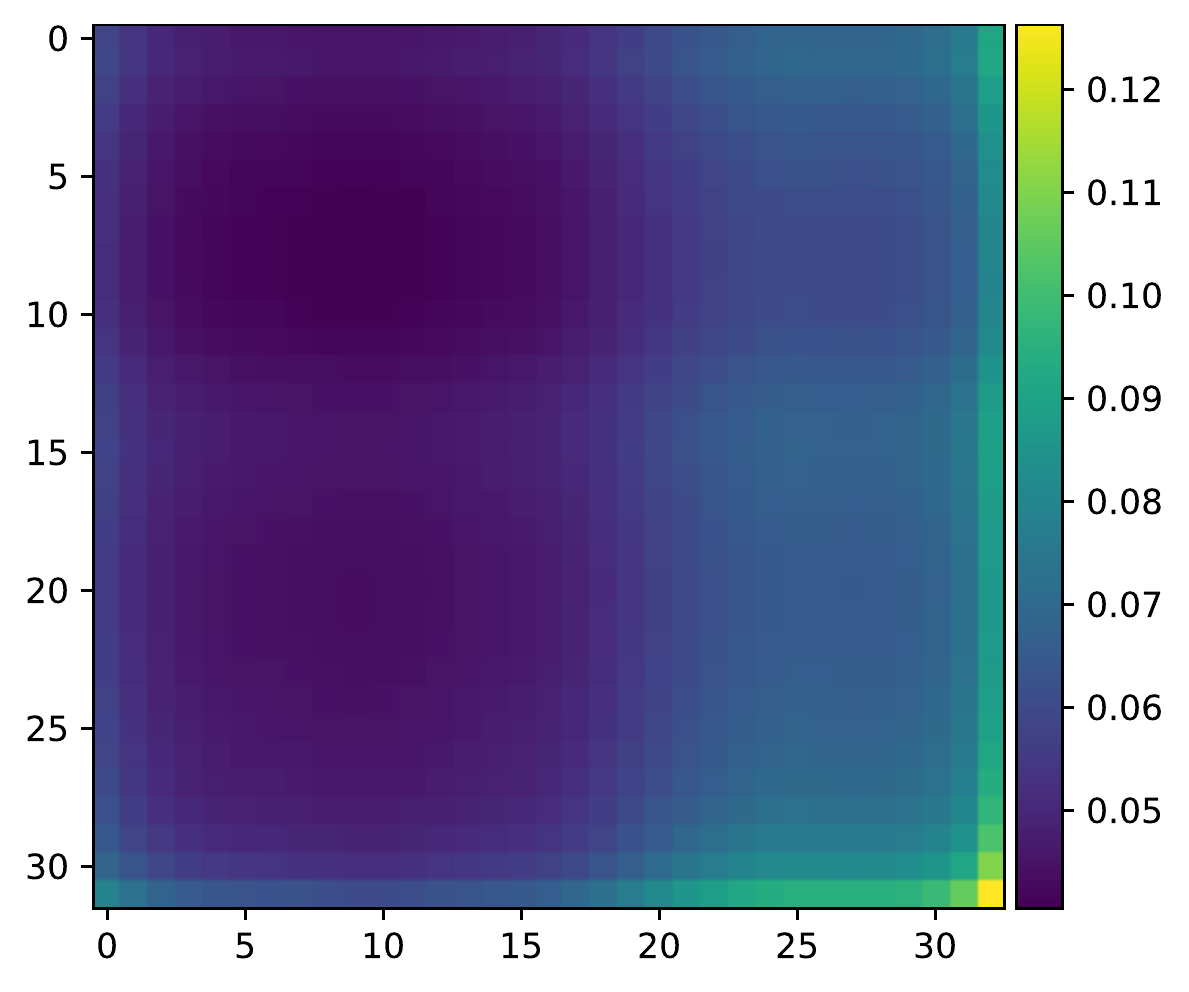}
		\caption{GP variance}
		\label{fig:hogp_variances}
	\end{subfigure}
	\hfill
	\begin{subfigure}{0.32\textwidth}
		\includegraphics[height=4cm]{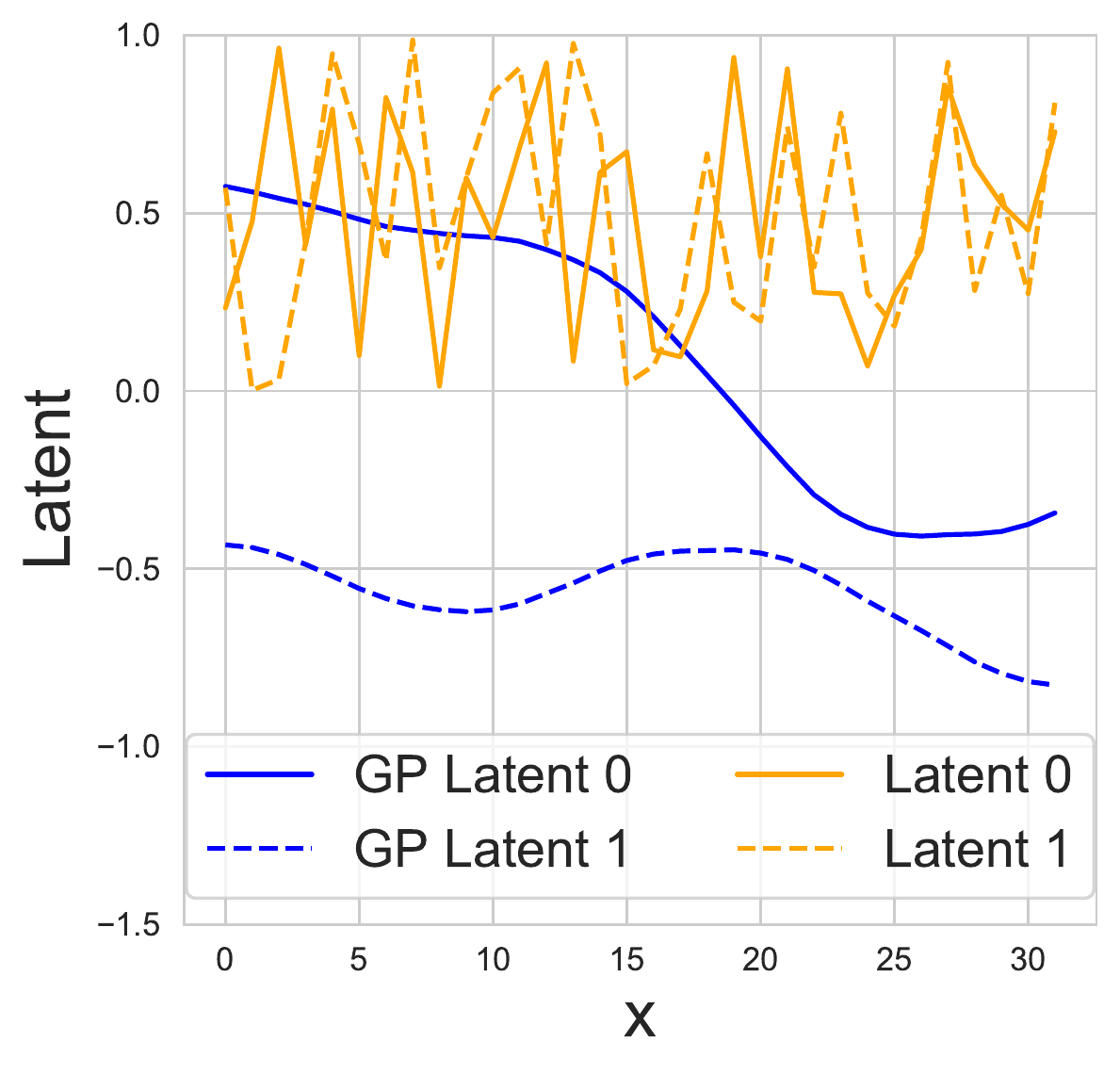}
		\caption{Initialized latents}
		\label{fig:hogp_latents}
	\end{subfigure}
	\hfill
	\caption{\textbf{(a)} Sample variance of true function draws over the indices. \textbf{(b)} Posterior covariance from a HOGP model initialized with random latent dimensions shown as orange lines in \textbf{(f)}. \textbf{(c)} Posterior variance, shown on the output dimensions for random latents, the variance varies jaggedly. \textbf{(d)} Posterior covariance from a HOGP model initialized with latent dimensions drawn from a GP, shown as blue lines in \textbf{(f)}. \textbf{(e)} Posterior variance, shown on the output dimensions for the GP latents; the variance now varies smoothly. \textbf{(c,e)} Flattened (each output is now a single pixel) posterior covariance for random and GP-drawn latents. The GP-drawn latent covariance is much more smoothly varying.}
	\label{fig:hogp_prior_choice}
\end{figure}

\subsection{Convergence of Sample Average Approximation of Monte-Carlo Acquisition Functions}
\label{app:convergence}

Following \citet{balandat_botorch_2020}, we consider the following class of acquisition functions: 
\begin{align}
	\label{eq:app:convergence:acq}
	\alpha(x; \Phi, y) = \mathbb{E} \bigl[ a(g(f(x)), \Phi) \,|\, Y=y\bigr], 
\end{align}
Here $x\in \mathbb{R}^{q\times d}$ is a set of $q$ candidate points, $g: \mathbb{R}^{n_{\text{test}} \times t} \rightarrow \mathbb{R}^{n_{\text{test}}}$ is an objective function, $\Phi \!\in\! \mathbf{\Phi}$ are parameters independent of~$x_{\text{test}}$ in some set $\mathbf{\Phi}$, and $a: \mathbb{R}^{n_{\text{test}}} \times \mathbf{\Phi} \!\rightarrow\! \mathbb{R}$ is a utility function that defines the acquisition function. 

Letting $\xi^i(x)$ denote a sample from $f(x)|(Y=y)$, we have the following Monte Carlo approximation of~\eqref{eq:app:convergence:acq}:
\begin{align}
	\label{eq:app:convergence:acq_mc}
	\hat\alpha_N(x; \Phi, y) := \frac{1}{N} \sum_{i=1}^N a(g(\xi^i(x)), \Phi) 
\end{align}
Suppose $\mathbb{X} \subset \mathbb{R}^d$ is a feasible set (the ``search space''). Let $x^{\text{opt}} := \argmax_{x \in \mathbb{X}^{n_{\text{test}}}} \alpha(x; \Phi, y)$ and denote by $\mathcal{X}^{\text{opt}}$ the associated set of maximizers. Similarly, let $\hat{x}^{\text{opt}}_N := \argmax_{x \in \mathbb{X}^{n_{\text{test}}}} \hat\alpha_N(x; \Phi, y)$. Then we have the following:

\begin{proposition}
	\label{prop:app:convergence}
	Suppose that $\mathbb{X}$ is compact, $f$ has a GP prior with continuously differentiable mean and covariance functions, and $g(\cdot)$ and $a(\cdot\, \Phi)$ are Lipschitz continuous. 
	If the base samples $\{\nu^i\}_{i=1}^{n + n_{\text{test}}}$ and $\{\epsilon^i\}_{i=1}^n$ are i.i.d. with $\nu^i \sim \mathcal{N}(0, 1)$ and  $\epsilon^i \sim \mathcal{N}(0, \sigma^2)$, respectively, then 
	\begin{enumerate}
		\item $\hat\alpha_N^{\text{opt}} \rightarrow \alpha^{\text{opt}}$ a.s.
		\item $\text{dist}(\hat{x}_N^{\text{opt}}, \mathcal{X}^{\text{opt}}) \rightarrow 0$ a.s.
	\end{enumerate}
\end{proposition}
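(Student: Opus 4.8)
The plan is to reduce Proposition~\ref{prop:app:convergence} to the corresponding result already established by \citet{balandat_botorch_2020}, so that the only genuinely new content is verifying that Matheron-based sampling produces the same kind of well-behaved sample paths as the distributional (Cholesky-based) sampler they analyze. Concretely, their argument proceeds by (i) writing each posterior sample $\xi^i(x)$ as a deterministic, sufficiently smooth function $\xi^i(x) = \Psi(x, z^i)$ of the candidate set $x$ and a fixed batch of standard normal base samples $z^i$; (ii) showing $x \mapsto a(g(\Psi(x, z)))$ is Lipschitz in $x$ uniformly over the relevant range of $z$, using compactness of $\mathbb{X}$, continuous differentiability of the mean/covariance functions, and Lipschitzness of $g$ and $a$; and (iii) invoking a uniform law of large numbers / epi-convergence result (in the spirit of the SAA theory, e.g.\ \citet{shapiro2003monte}) to conclude both $\hat\alpha_N^{\text{opt}} \to \alpha^{\text{opt}}$ and $\mathrm{dist}(\hat{x}_N^{\text{opt}}, \mathcal{X}^{\text{opt}}) \to 0$ almost surely. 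So my first step would be to state precisely the ``sample path representation'' induced by Matheron's rule and argue that it satisfies exactly the hypotheses their theorem requires.

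The key step, then, is to make that sample-path representation explicit from Eq.~\eqref{eq:matheron_gp} and the implementation in Section~\ref{sec:extension}. Writing the joint prior root as $\tilde R \otimes L$ (or its block-triangular refinement from Appendix~\ref{app:methods}), a posterior sample at test points $x$ is
\begin{align*}
	\xi(x) = (\tilde R(x) \otimes L)\nu + (K_{x X} \otimes K_T)(K_{XX}\otimes K_T + T_l)^{-1}\bigl(y - (R\otimes L)\nu_{\text{train}} - \epsilon\bigr),
\end{align*}
where $\nu = (\nu_{\text{train}}, \nu_{\text{test}})$ and $\epsilon$ are the fixed base samples and $\tilde R(x)$ is the updated root whose test block depends on $x$ only through $K_{XX}$, $K_{Xx}$, $K_{xx}$. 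The point is that each entry of $\tilde R(x)$ (via $L_{12} = R^{-1}K_{Xx}$ and $L_{22} = (K_{xx} - L_{12}L_{12}^\top)^{1/2}$) and each entry of $K_{xX}$ is a continuously differentiable function of $x$, being a composition of the (assumed $C^1$) covariance function with matrix operations — solves against the fixed matrix $R$, matrix products, and a matrix square root, which is smooth on the interior of the PSD cone and in any case yields a Lipschitz map on a compact set by the perturbation bounds for matrix square roots. Since the training-side objects ($R$, $K_{XX}$, $K_T$, $T_l$, $y$, $\epsilon$, $\nu_{\text{train}}$) do not depend on $x$, the whole map $x \mapsto \xi(x)$ is $C^1$, hence Lipschitz on the compact set $\mathbb{X}^{n_{\text{test}}}$, with a Lipschitz constant that can be bounded uniformly over $\nu, \epsilon$ in any compact set (and, for the a.s.\ statements, one only needs almost-sure finiteness of these constants, which follows since $\nu, \epsilon$ are a.s.\ finite). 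Composing with the Lipschitz maps $g$ and $a(\cdot, \Phi)$ gives that $x \mapsto a(g(\xi^i(x)), \Phi)$ is Lipschitz, which is precisely the regularity feeding the SAA convergence machinery.

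Once that is in place, the final step is essentially a citation: the samples $\xi^i(x)$ are i.i.d.\ (the base samples $\{\nu^i\}$, $\{\epsilon^i\}$ are i.i.d.\ by hypothesis, and Matheron's rule reproduces the correct posterior law exactly, as established in Section~\ref{sec:extension}), so $\hat\alpha_N(\cdot;\Phi,y)$ is an unbiased SAA of $\alpha(\cdot;\Phi,y)$ built from Lipschitz integrands on a compact domain; Theorem~1 (and its corollary on convergence of maximizers) of \citet{balandat_botorch_2020} then applies verbatim and yields both claims. I would present this as a short lemma (``Matheron sampling yields a Lipschitz-in-$x$ sample-path representation'') followed by a one-paragraph invocation of their result. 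The main obstacle I anticipate is the regularity of the updated root factor $\tilde R(x)$: the Cholesky/eigen square root $L_{22} = (K_{xx} - L_{12}L_{12}^\top)^{1/2}$ is only guaranteed smooth where its argument is \emph{strictly} positive definite, so I would need either to add a benign assumption (a jitter $\sigma^2 > 0$, or non-degeneracy of the prior at distinct test points, both of which hold in practice and are already implicitly assumed), or to replace the smoothness claim with the weaker but sufficient statement that the matrix square root is locally Lipschitz on the relevant compact set — which is all the downstream SAA argument actually uses. This is a routine but slightly delicate point, and it is the one place where the Matheron derivation differs structurally from the distributional sampler analyzed in the prior work.
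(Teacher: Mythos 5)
Your proposal follows essentially the same route as the paper: the paper's proof also consists of a sample-path lemma (Lemma~\ref{lemma:app:HDMLipschitz}) writing the Matheron sample as $\xi(x) = M(x)y + A(x)\tilde\epsilon$ with an integrable Lipschitz modulus $\ell(\tilde\epsilon)$, followed by a verbatim invocation of Theorem~1 of \citet{balandat_botorch_2020}. Your additional care about the regularity of the updated root factor $L_{22} = (K_{xx} - L_{12}L_{12}^\top)^{1/2}$ is a legitimate point that the paper's proof handles only implicitly by deferring to the generic root $R(x)$ and the cited arguments of \citet{balandat_botorch_2020}, but it does not change the structure of the argument.
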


To prove Proposition~\ref{prop:app:convergence}, we need the following intermediate result:

\begin{lemma}
	\label{lemma:app:HDMLipschitz}
	Under Matheron sampling, $a(g(\xi(x)), \Phi) = a(g(h(x, \tilde{\epsilon})))$ with $h: \mathbb{R}^{n_\text{test}\times d} \times \mathbb{R}^{2n+n_\text{test}}$ and $\tilde{\epsilon} \in \mathbb{R}^{2n+n_\text{test}}$ a random variable. Moreover, there exists an integrable function $\ell: \mathbb{R}^{2n+n_\text{test}} \rightarrow \mathbb{R}$ such that for almost every $\tilde\epsilon$ and all $x, x' \in \mathbb{X}$, 
	\begin{align}
		\label{eq:lemma:app:HDMLipschitz}
		\bigl| a(g(h(x, \tilde{\epsilon}))) - a(g(h(x', \tilde{\epsilon}))) \bigr| \leq \ell(\tilde\epsilon)\|x - x'\|.
	\end{align}
\end{lemma}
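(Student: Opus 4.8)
The plan is to exhibit an explicit reparameterization of the Matheron sampler in which all dependence on $x$ is confined to a few kernel evaluations and the posterior-covariance root, and then to bound the modulus of continuity of the resulting map by an affine function of $\|\tilde\epsilon\|$, which is integrable under a Gaussian law. Concretely, collect the base samples as $\tilde\epsilon = (\nu, \epsilon)$ with $\nu = (\nu_1, \nu_2) \in \mathbb{R}^n \times \mathbb{R}^{n_\text{test}}$ standard normal and $\epsilon \sim \mathcal{N}(0, \sigma^2 I_n)$, so $\tilde\epsilon \in \mathbb{R}^{2n + n_\text{test}}$. Fix a root $R$ of the ($x$-independent) training covariance $K_{XX} = RR^\top$ and draw the prior training values as $Y = R\nu_1$; conditioning on $Y$, write the joint prior draw at the test points as $f(x) = K_{xX}K_{XX}^{-1}Y + L^*(x)\nu_2$, where $L^*(x)L^*(x)^\top = \Sigma^*(x) = K_{xx} - K_{xX}K_{XX}^{-1}K_{Xx}$ is the GP posterior covariance from~\eqref{main:eq:exact_pred_var}. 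Substituting into the Matheron update~\eqref{eq:matheron_gp} yields
\begin{align}
	h(x, \tilde\epsilon) = K_{xX}K_{XX}^{-1}R\nu_1 + L^*(x)\nu_2 + K_{xX}(K_{XX} + \sigma^2 I)^{-1}(y - R\nu_1 - \epsilon),
	\label{eq:lemma:reparam}
\end{align}
a deterministic function of $(x, \tilde\epsilon)$ that, for each fixed $x$, has the law $f(x)\mid(Y=y)$ by the correctness of Matheron's rule established in Section~\ref{sec:matheron_background}. In the multi-task and HOGP settings one simply appends the Kronecker task/latent roots, leaving the argument below unchanged, so I work with the single-output notation.

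The second step is to show $x \mapsto h(x, \tilde\epsilon)$ is Lipschitz on the compact set $\mathbb{X}^{n_\text{test}}$ with an integrable Lipschitz constant. In~\eqref{eq:lemma:reparam} the only $x$-dependent objects are the kernel blocks $K_{xX}, K_{xx}$ and the posterior-covariance root $L^*(x)$; the matrices $R$, $K_{XX}^{-1}$, and $(K_{XX}+\sigma^2 I)^{-1}$ are fixed. Since $f$ has a continuously differentiable covariance function, $x \mapsto K_{xX}$ and $x \mapsto K_{xx}$ are $C^1$, hence bounded and Lipschitz on $\mathbb{X}^{n_\text{test}}$, and therefore so is $x \mapsto \Sigma^*(x)$ (a composition of these with the fixed inverse $K_{XX}^{-1}$); that $x \mapsto L^*(x)$ is Lipschitz on $\mathbb{X}^{n_\text{test}}$ follows under the same regularity conditions used by \citet{balandat_botorch_2020} for their standard reparameterization. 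Sums and products of bounded Lipschitz matrix-valued functions on a compact set are Lipschitz, and each of the three terms in~\eqref{eq:lemma:reparam} is linear in $(\nu_1, \nu_2, \epsilon)$; collecting constants, there are $a_0, a_1 < \infty$ depending only on the kernel, $\sigma$, $y$, $\mathbb{X}$, $n$, and $n_\text{test}$ with $\|h(x,\tilde\epsilon) - h(x',\tilde\epsilon)\| \le (a_0 + a_1\|\tilde\epsilon\|)\|x - x'\|$ for all $x, x' \in \mathbb{X}^{n_\text{test}}$.

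Finally, composing with the Lipschitz maps $g$ (constant $L_g$) and $a(\cdot, \Phi)$ (constant $L_a$) from the hypotheses of Proposition~\ref{prop:app:convergence} gives~\eqref{eq:lemma:app:HDMLipschitz} with $\ell(\tilde\epsilon) := L_a L_g(a_0 + a_1\|\tilde\epsilon\|)$; this $\ell$ is continuous in $\tilde\epsilon$, hence measurable, and $\mathbb{E}[\ell(\tilde\epsilon)] \le L_a L_g(a_0 + a_1\,\mathbb{E}\|\tilde\epsilon\|) < \infty$ since a Gaussian has finite first moment, so $\ell$ is integrable. I expect the one delicate point to be the Lipschitz continuity of $L^*(\cdot) = \Sigma^*(\cdot)^{1/2}$ where $\Sigma^*(x)$ degenerates (coinciding test points, or a test point equal to a training point), at which the principal matrix square root is only H\"older-$\tfrac{1}{2}$; as in \citet{balandat_botorch_2020} this is resolved either by restricting to the generic full-rank regime or by the fixed observation/numerical jitter present in practice, and I would cite their treatment rather than reprove it here.
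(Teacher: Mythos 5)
Your proposal is correct and follows essentially the same route as the paper: reparameterize the Matheron sample as an affine function of the standard-normal base samples with $x$-dependent coefficient matrices, invoke the regularity/compactness arguments of \citet{balandat_botorch_2020} to get a Lipschitz bound affine in $\|\tilde\epsilon\|$, and compose with the Lipschitz maps $g$ and $a(\cdot,\Phi)$. The only cosmetic difference is that you instantiate the root of the joint prior covariance as the explicit block-conditional factorization (making $L^*(x)$ the noiseless posterior covariance root) where the paper leaves $R(x)$ as a generic root, and you flag the H\"older-$\tfrac12$ degeneracy of the matrix square root explicitly rather than leaving it implicit in the citation.
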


\begin{proof}[Proof of Lemma~\ref{lemma:app:HDMLipschitz}]
	From \eqref{eq:matheron_gp} we have that under Matheron sampling a posterior sample is parameterized as 
	\begin{align}
		\label{eq:app:convergence:parameterization}
		\xi(x) = f  + K_{xX}(K_{XX}+\sigma^2I)^{-1}y -   (K_{XX}+\sigma^2I)^{-1}(Y + \epsilon)
	\end{align}
	where $(f, Y)$ are joint samples from the GP prior.  
	We parameterize $(f, Y)$ as $(f, Y) = R(x) \nu$ where $R(x)$ is a root\footnote{For simplicity we assume that $R(x) \in \mathbb{R}^{n+n_\text{test}}$ for all $x$, but the results also apply to lower-rank roots (the argument follows from simple zero-padding.} of the covariance from \eqref{eq:joint_prior}, and $\nu \sim \mathcal{N}(0, I)$. We can thus write~\eqref{eq:app:convergence:parameterization} as 
	\begin{align}
		\xi(x) &= \left[\begin{array}{@{}cc|c@{}}
			I & - M(x) & 0 \\\hline
			0 & 0 & M(x)
		\end{array}\right]
		\left[\begin{array}{@{}c@{}}
			R(x) \nu \\\hline
			y- \epsilon^i
		\end{array}\right] 
		= M(x)y + A(x)     \tilde\epsilon
	\end{align}
	where $M(x) := K_{xX} (K_{XX} + \sigma^2 I)^{-1}$, 
	\begin{align*}
		A(x) := \left[\begin{array}{@{}cc|c@{}}
			I & - M(x) & 0 \\\hline
			0 & 0 & -\sigma M(x)
		\end{array}\right]
		\left[\begin{array}{@{}c|c@{}}
			R(x) & 0 \\\hline
			0 & I
		\end{array}\right]
	\end{align*}
	and $\tilde\epsilon_j \sim \mathcal{N}(0, 1)$ for $j=1, \dotsc, 2n + n_\text{test}$. 
	Therefore, 
	\begin{align*}
		\| \xi(x) - \xi(x')\| &\leq \|M(x) - M(x')\| y + \left\| A(x) - A(x')
		\right\| \|\tilde\epsilon \|.
	\end{align*}
	From the arguments of \citet{balandat_botorch_2020}, the assumption of continuously differentiable mean and covariance functions and the compactness of $\mathbb{X}$ imply that there exist $C_M, C_A < \infty$ s.t. $\|M(x) - M(x')\| \leq C_M$ and $\|A(x) - A(x')\| \leq C_A$ for all $x, x' \in \mathbb{X}$. Moreover, since both $g(\cdot)$ and $a(\cdot\, \Phi)$ are Lipschitz, there exists $L < \infty$ such that $\bigl| a(g(h(x, \tilde{\epsilon}))) - a(g(h(x', \tilde{\epsilon}))) \bigr| \leq L \| \xi(x) - \xi(x')\|$. 
	Consequently, \eqref{eq:lemma:app:HDMLipschitz} holds with $\ell(\tilde\epsilon) = LC_My + LC_A \tilde\epsilon$, which is integrable since (i) $y$ is almost surely finite and $\tilde{\epsilon}_j\sim \mathcal{N}(0, 1)$.
\end{proof}

\begin{proof}[Proof of Proposition~\ref{prop:app:convergence}]
	Lemma~\ref{lemma:app:HDMLipschitz} mirrors Lemma~1 from the supplementary material of \citet{balandat_botorch_2020}, and shows the corresponding result for the posterior samples parameterized under sampling using Matheron's rule. Proposition~\ref{prop:app:convergence}  then follows from the same arguments as in the proof of Theorem~1 in \citet{balandat_botorch_2020}.
\end{proof}

Similar to \citet{balandat_botorch_2020}, it is also possible to show the following:

\begin{proposition}
	\label{prop:app:convergence_rate}
	If, in addition to the assumptions of Proposition~\ref{prop:app:convergence}, (i) for all $x \in \mathbb{X}^{n_\text{test}}$ the moment generating function $t \mapsto \mathbb{E}[ e^{t a(g(h(x, \tilde{\epsilon})))}]$ is finite in an open neighborhood of $t=0$, and (ii) the moment generating function $t \mapsto \mathbb{E}[ e^{t\ell(\epsilon)}]$ is finite in an open neighborhood of $t=0$, then
	$\forall\, \delta>0$, $\exists\, K <\infty$, $\beta > 0$ s.t.  $\mathbb{P}\bigl(\textnormal{dist}(\hat{x}_{\!N}^\text{opt}, \mathcal{X}^\text{opt}) > \delta \bigr) \le K e^{-\beta N}$ for all $N \geq 1$.
\end{proposition}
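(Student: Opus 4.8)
The plan is to follow the classical sample-average-approximation (SAA) route of \citet{balandat_botorch_2020} (itself drawn from the stochastic-programming literature), which reduces the claim to a \emph{uniform} large-deviations bound for $\sup_{x\in\mathbb{X}^{n_{\text{test}}}} |\hat\alpha_N(x;\Phi,y) - \alpha(x;\Phi,y)|$. Concretely, I would first establish that for every $\gamma>0$ there are constants $K_\gamma<\infty$ and $\beta_\gamma>0$ with $\mathbb{P}\bigl(\sup_x |\hat\alpha_N(x) - \alpha(x)| > \gamma\bigr) \le K_\gamma e^{-\beta_\gamma N}$ for all $N\ge 1$, and then convert this into the stated bound on $\text{dist}(\hat x_N^{\text{opt}},\mathcal X^{\text{opt}})$ via a standard argmax-stability argument using compactness of $\mathbb{X}^{n_{\text{test}}}$ and continuity of $\alpha$. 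Lemma~\ref{lemma:app:HDMLipschitz} already supplies the structural input: under Matheron sampling each summand equals $a(g(h(x,\tilde\epsilon)))$ with $x\mapsto a(g(h(x,\tilde\epsilon)))$ Lipschitz of modulus $\ell(\tilde\epsilon)$, where $\ell$ is integrable and, by the new hypothesis~(ii), has a moment generating function finite near the origin.

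For the uniform large-deviations bound I would use a covering argument. Fix $\eta>0$ and let $\{x_1,\dots,x_{m_\eta}\}$ be an $\eta$-net of the compact set $\mathbb{X}^{n_{\text{test}}}$, so $m_\eta = O(\eta^{-D})$ with $D = n_{\text{test}}\,d$. On the event $\{\frac1N\sum_{i=1}^N \ell(\tilde\epsilon^i) \le 2\,\mathbb{E}[\ell(\tilde\epsilon)]\}$ the Lipschitz property gives $|\hat\alpha_N(x) - \hat\alpha_N(x_k)| \le 2\mathbb{E}[\ell]\,\eta$ whenever $x$ lies in the $k$-th cell, while $|\alpha(x) - \alpha(x_k)| \le \mathbb{E}[\ell]\,\eta$ holds deterministically; choosing $\eta=\eta(\gamma)$ so these slacks total at most $\gamma/2$ reduces the sup over $\mathbb{X}^{n_{\text{test}}}$ to the finite max over the net at level $\gamma/2$. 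A pointwise Chernoff/Cramér bound — valid by hypothesis~(i), which yields a positive rate function for $a(g(h(x_k,\tilde\epsilon)))$ — gives $\mathbb{P}(|\hat\alpha_N(x_k) - \alpha(x_k)| > \gamma/2) \le 2 e^{-\beta(x_k) N}$, and the complementary bound for the $\ell$-average is exponential by hypothesis~(ii). A union bound over the $m_\eta$ net points, together with a uniform lower bound $\inf_k \beta(x_k)\ge \beta_\gamma>0$ (via lower semicontinuity of the Cramér rate and compactness), delivers $K_\gamma e^{-\beta_\gamma N}$, the polynomial factor $m_\eta$ being absorbed into $K_\gamma$.

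The second half is the routine SAA conversion. By compactness of $\mathbb{X}^{n_{\text{test}}}$ and continuity of $\alpha$ (inherited from the continuously differentiable GP posterior moments and the Lipschitz $g$, $a$), for any $\delta>0$ there is $\gamma>0$ with $\alpha(x) < \alpha^{\text{opt}} - 2\gamma$ whenever $\text{dist}(x,\mathcal X^{\text{opt}})>\delta$. On the event $\sup_x|\hat\alpha_N(x)-\alpha(x)| \le \gamma$ one then has, for every such $x$, $\hat\alpha_N(x) < \alpha^{\text{opt}} - \gamma \le \hat\alpha_N(x^{\text{opt}})$, so no maximizer of $\hat\alpha_N$ can lie outside the $\delta$-neighborhood of $\mathcal X^{\text{opt}}$; hence $\mathbb{P}(\text{dist}(\hat x_N^{\text{opt}},\mathcal X^{\text{opt}})>\delta) \le \mathbb{P}(\sup_x|\hat\alpha_N - \alpha|>\gamma) \le K_\gamma e^{-\beta_\gamma N}$, and we take $K = K_\gamma$, $\beta=\beta_\gamma$.

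The main obstacle is the first step: upgrading pointwise exponential concentration to a genuinely \emph{uniform} exponential bound over the continuum $\mathbb{X}^{n_{\text{test}}}$. The two delicate points are (a) controlling the oscillation term not merely in expectation but on an event of exponentially high probability, which is exactly why hypothesis~(ii) strengthens the integrability of $\ell$ from Lemma~\ref{lemma:app:HDMLipschitz} to a finite MGF, and (b) ensuring the pointwise rates $\beta(x)$ stay bounded away from $0$ across the net so the union bound does not erode the exponent, which needs a short argument about continuity of the log-MGF in $x$ together with compactness. Both are precisely the ingredients handled in the corresponding rate result of \citet{balandat_botorch_2020}, whose proof transfers essentially verbatim once Lemma~\ref{lemma:app:HDMLipschitz} replaces their Lipschitz lemma.
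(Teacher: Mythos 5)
Your proposal is correct and follows essentially the same route as the paper, whose own ``proof'' is a one-line deferral to the argument of Theorem~1 in \citet{balandat_botorch_2020} once Lemma~\ref{lemma:app:HDMLipschitz} supplies the Lipschitz modulus $\ell(\tilde\epsilon)$ under Matheron sampling. Your write-up simply unpacks that SAA large-deviations argument explicitly (covering net plus pointwise Chernoff bounds from hypothesis~(i), exponential control of the oscillation term from hypothesis~(ii), then the standard argmax-stability conversion), and correctly identifies why each of the two added MGF assumptions is needed.
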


This follows from the proof of Proposition~\ref{prop:app:convergence}; we can use exactly the same argument as in the proof of Theorem 1 in \citet{balandat_botorch_2020}. 

\section{Further Experiments and Experimental Details}\label{app:further_exps}\label{app:exp_details}

In this Appendix, we give further experimental details as well as some more experiments for the applications of Matheron's rule to various Bayesian Optimization tasks.
Experimental code is available at \url{https://github.com/wjmaddox/mtgp_sampler}.
Unless otherwise specified, all data is simulated.
The code primarily relies on PyTorch \citep{paszke2019pytorch} (MIT License), BoTorch \citep{balandat_botorch_2020} (MIT License), GPyTorch \citep{gardner_gpytorch_2018} (MIT License).

\subsection{Drawing Posterior Samples}

\begin{figure}[h!]
	\centering
	\includegraphics[width=\linewidth]{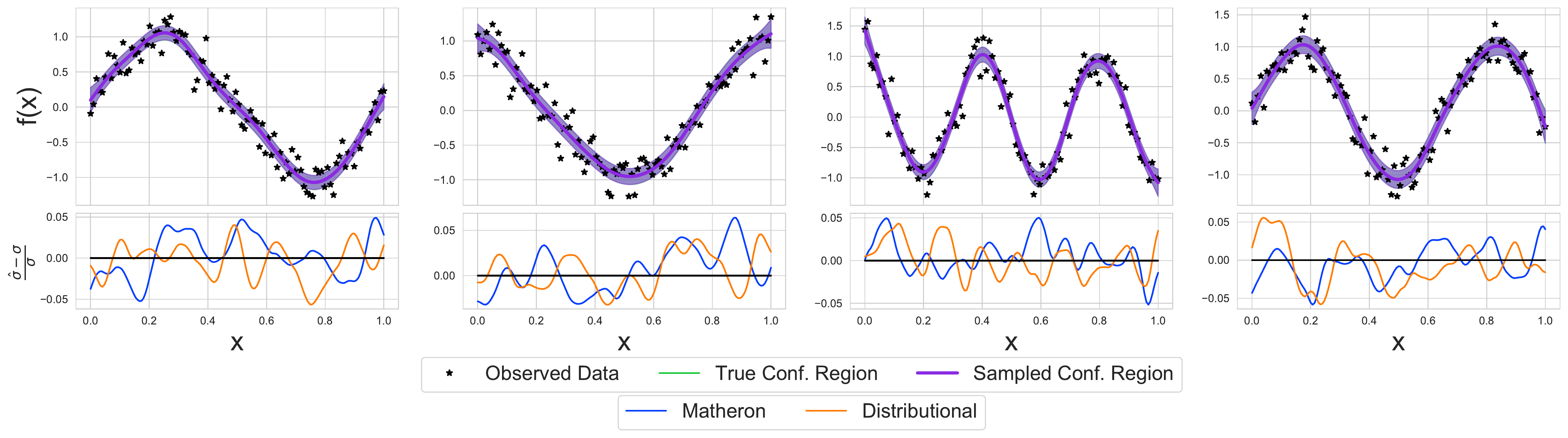}
	\caption{\textbf{Top:} Posterior mean and confidence regions for a two-task GP. The Matheron's rule sampled confidence region is overlaid on top of the true confidence region; these are visually indistinguishable. \textbf{Bottom:} Relative error of the estimated standard deviation from $1024$ samples drawn from the predictive posterior using either Matheron's rule or distributional sampling; plotted as a function of $\hat x.$ Again, the samples are effectively indistinguishable.}
	\label{fig:gp_sampling accuracy}
\end{figure}
In Figure \ref{fig:gp_sampling accuracy}, we show the accuracy of Matheron's rule sampling, where it is indistinguishable from conventional sampling from a MTGP \eqref{eq:dist_sampling} in terms of estimated standard deviations as well as the confidence regions. 
Here, we drew $1024$ samples from both sampling mechanisms and used the true mean and variance of the GP predictive posterior to shade the confidence regions.
This result is to be expected as Matheron's rule draws from exactly the same distribution as the predictive posterior.

\paragraph{Experimental Details}
For the plots in Figure \ref{fig:mt_speedup}, we fit a Kronecker multi-task GP with data from the Hartmann-$5D$ function as in \citet{feng_high-dimensional_2020} and Appendix \ref{app:cbo} for $25$ steps with Adam, before loading the state dict into the various implementations. 
We followed the same procedure for the LOVE experiments in Figure \ref{fig:mt_speedup_love}.

The GPU experiments were run sequentially over $10$ trials on a single $V100$ GPU with $16$GB of memory. We show the mean over the ten trails and two standard errors of the mean (on a \texttt{p3.2xlarge} AWS instance).
The CPU experiments were run sequentially over $6$ trails on a single CPU and given $128$GB of memory.
Specificially, this corresponds to a \texttt{c5d.18xlarge} AWS instance.

\begin{figure*}[h!]
	\centering
	\begin{subfigure}{0.24\textwidth}
		\centering
		\includegraphics[width=\linewidth]{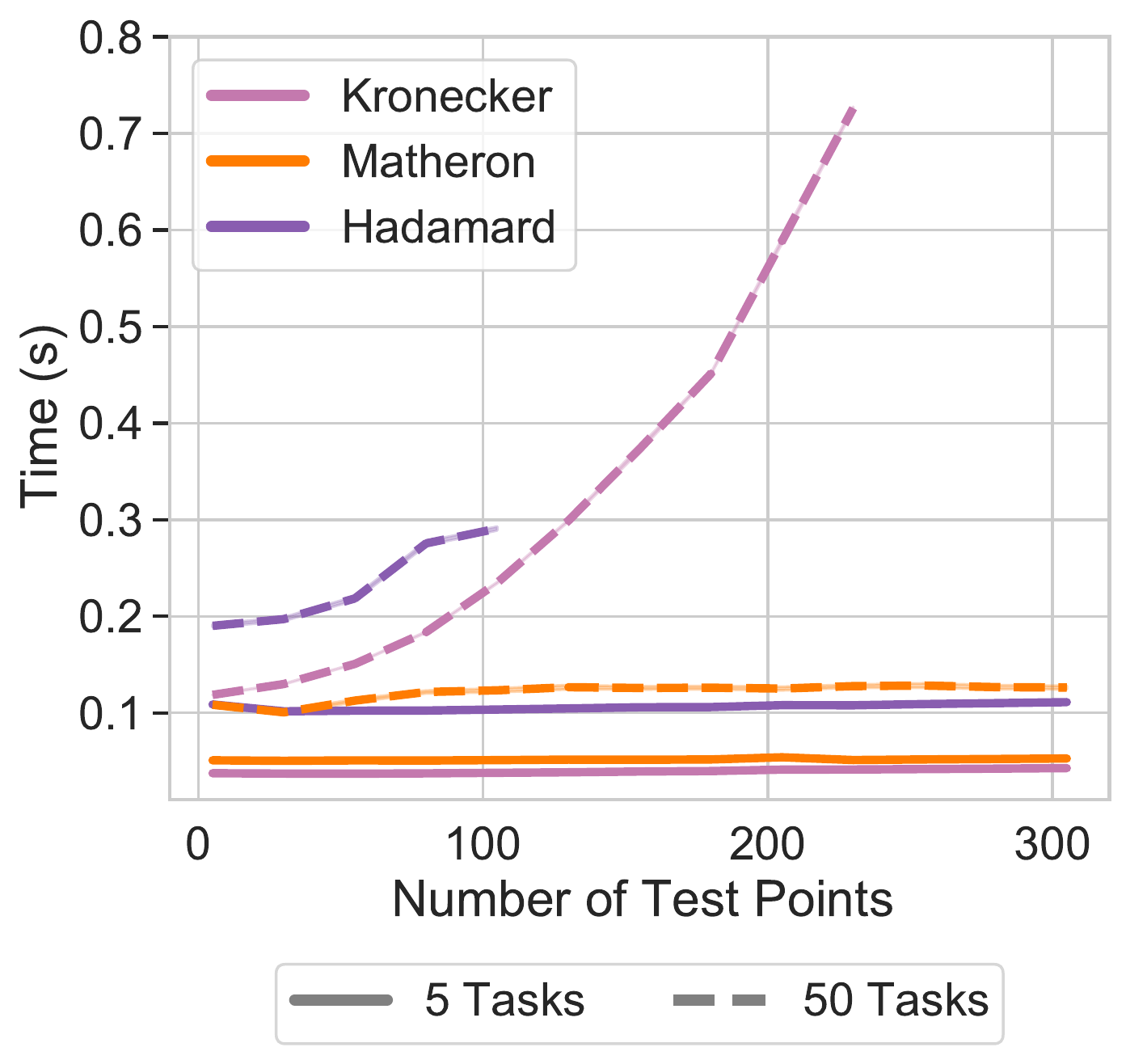}
		\caption{Test points, GPU}
		\label{fig:mt_speedup_test_love}
	\end{subfigure}
	\begin{subfigure}{0.24\textwidth}
		\centering
		\includegraphics[width=\linewidth]{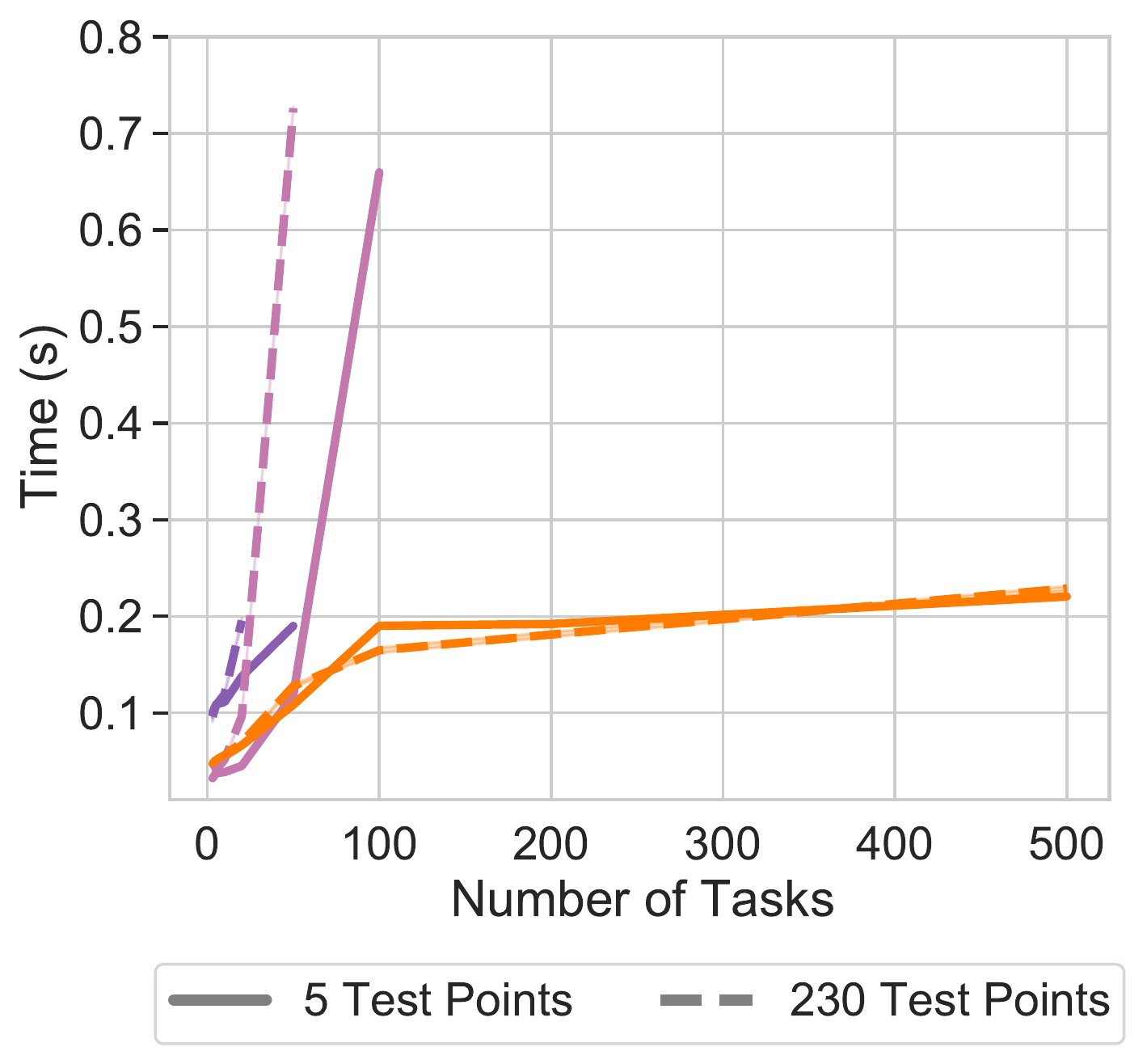}
		\caption{Tasks, GPU}
		\label{fig:mt_speedup_tasks_love}
	\end{subfigure}
	\begin{subfigure}{0.24\textwidth}
		\centering
		\includegraphics[width=\linewidth]{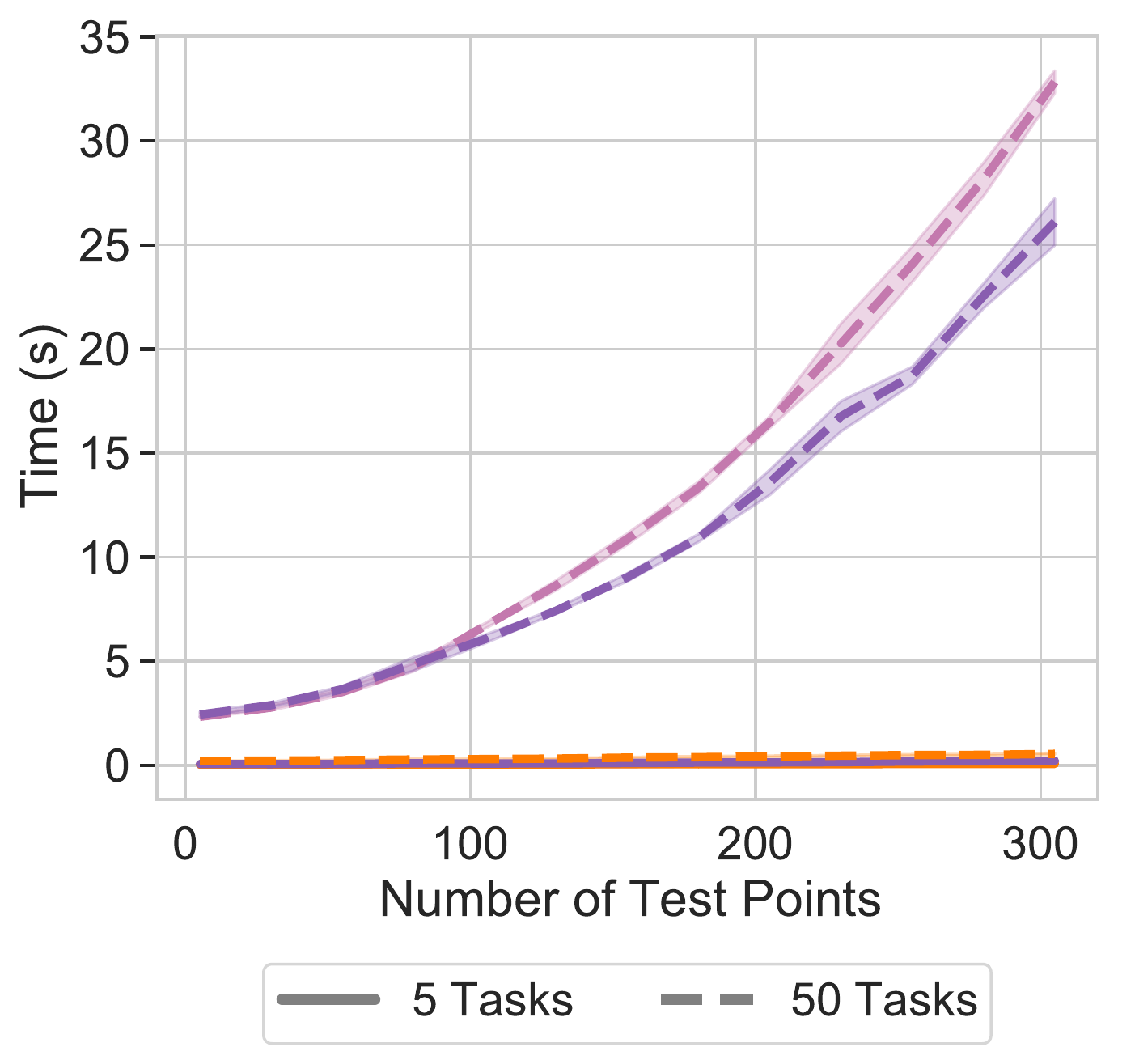}
		\caption{Test points, CPU}
		\label{fig:mt_speedup_test_cpu_love}
	\end{subfigure}
	\begin{subfigure}{0.24\textwidth}
		\centering
		\includegraphics[width=\linewidth]{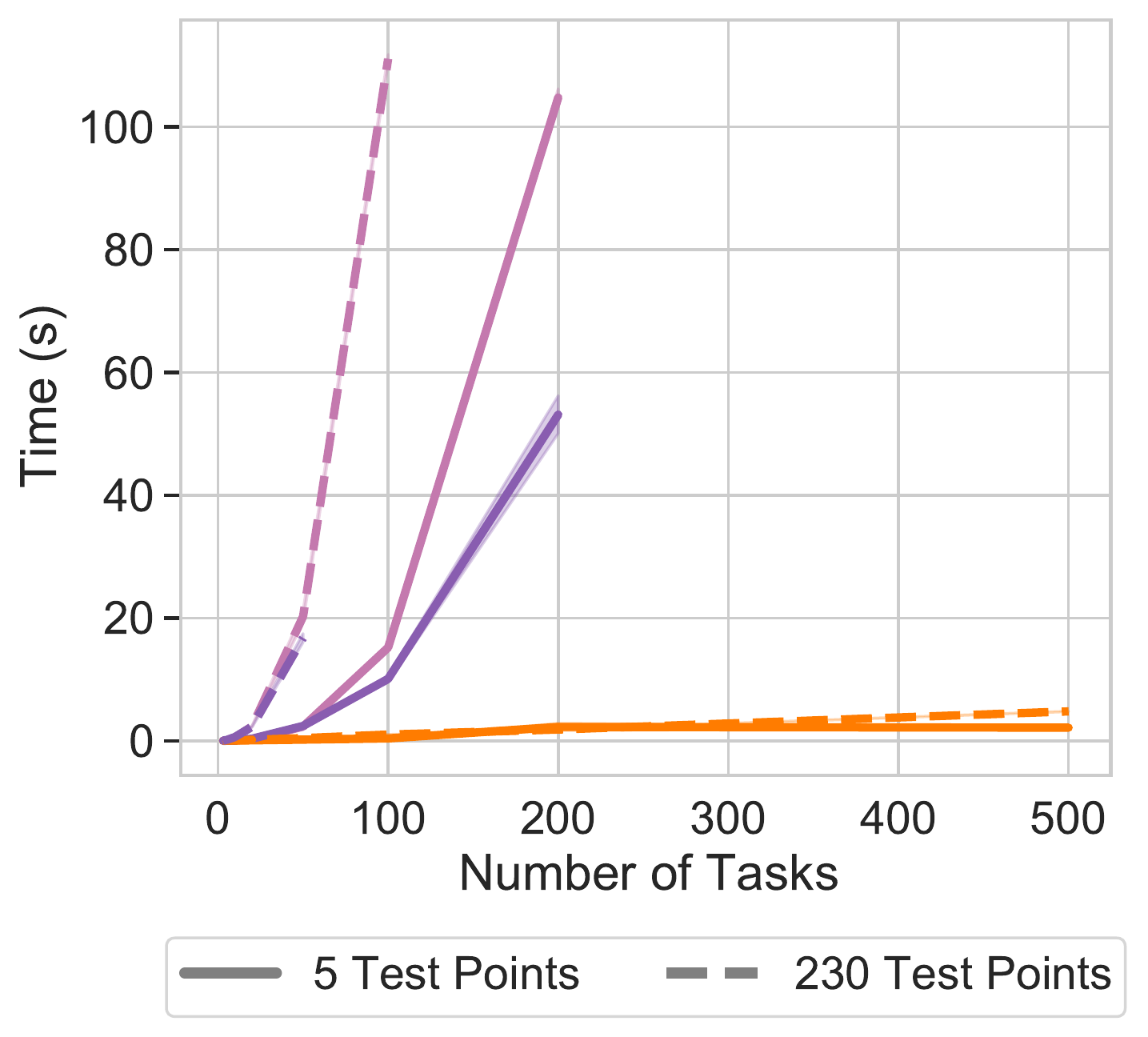}
		\caption{Tasks, CPU}
		\label{fig:mt_speedup_tasks_cpu_love}
	\end{subfigure}
	
	\caption{
		Timings for distributional sampling using LOVE cached predictive covariances with Hadamard (LCEM) and Kronecker (LCEM + Kronecker) variants of MTGPs as well as our sampling based on Matheron's rule (LCEM + Matheron) as the number of test points vary for fixed tasks \textbf{(a,c)} and as the number of tasks vary for fixed test points \textbf{(b,d)} on a single Tesla V100 GPU \textbf{(a,b)} and on a single CPU \textbf{(c,d)}.
		The multiplicative scaling of the number of tasks and data points creates significant timing and memory overhead, causing the Kronecker and Hadamard implementations to run out of memory for all but the smallest numbers of tasks and data points.
	}
	\label{fig:mt_speedup_love}  
\end{figure*}

\subsection{Multi-Task BO and Contextual BO with LCE-M Contextual Kernel}\label{app:cbo}
We now consider contextual BO (CBO), an extension of multi-task BO, following \citet{feng_high-dimensional_2020} and using their embedding-based kernels, the LCE-M model.
In CBO, the objective is to maximize some function over all observed contexts; given experimental conditions and a query, we want to choose the best action across all of the possible settings (the average in our experiments).
\citet{feng_high-dimensional_2020} considered contextual BO (CBO), and proposed the LCE-M model, an MTGP using an embedding-based kernel. LCE-M models each context as a task; all contexts are observed for any given input, and the multi-task kernel is then $k_n(x,x')k_t(i, j) = k_n(x,x')k_t(E(c), E(c')),$ where $E$ is a nonlinear embedding and $c$ are the contexts. 

We consider both multi-task and contextual versions of this problem. 
For the multi-task setting, we do not actually perform contextual optimization (selecting a different candidate for each context), but find a single action that maximizes the average outcome across all observed contexts. 
For both settings, we would intuitively expect that  using context-level observations can improve modelling and thus optimization performance. 
We use \texttt{qEI} (batch expected improvement with MC acquisition, $q=2$, \citet{balandat_botorch_2020}) on the objective and compare the (Hadamard) LCE-M model, a Kronecker variant, and one based on Matheron MTGP sampling.

\begin{figure}[t!]
	\centering
	\begin{subfigure}{0.32\linewidth}
		\includegraphics[width=\textwidth]{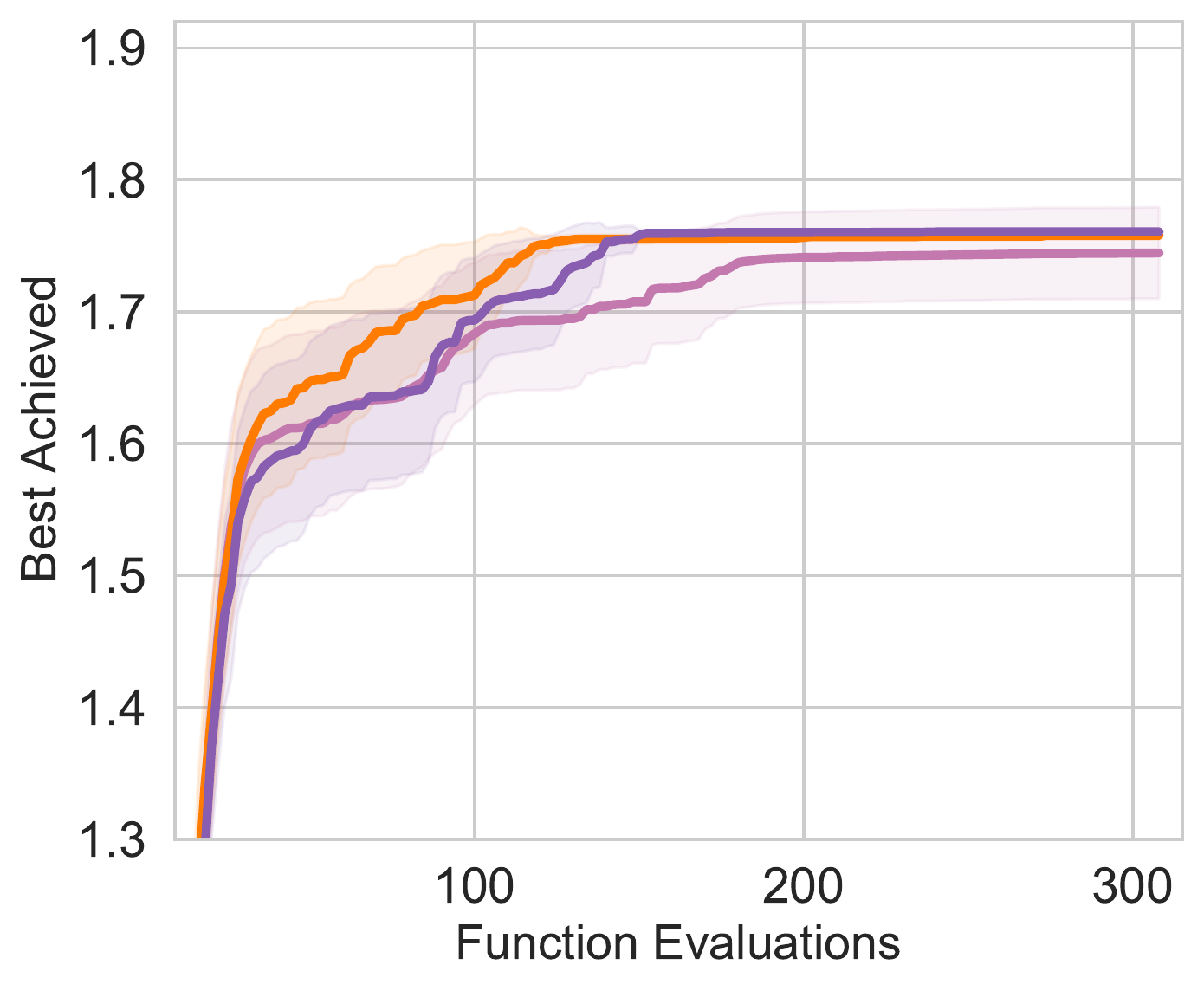}
		\caption{$5$ contexts}
	\end{subfigure}
		\begin{subfigure}{0.32\textwidth}
		\centering
		\includegraphics[width=\linewidth]{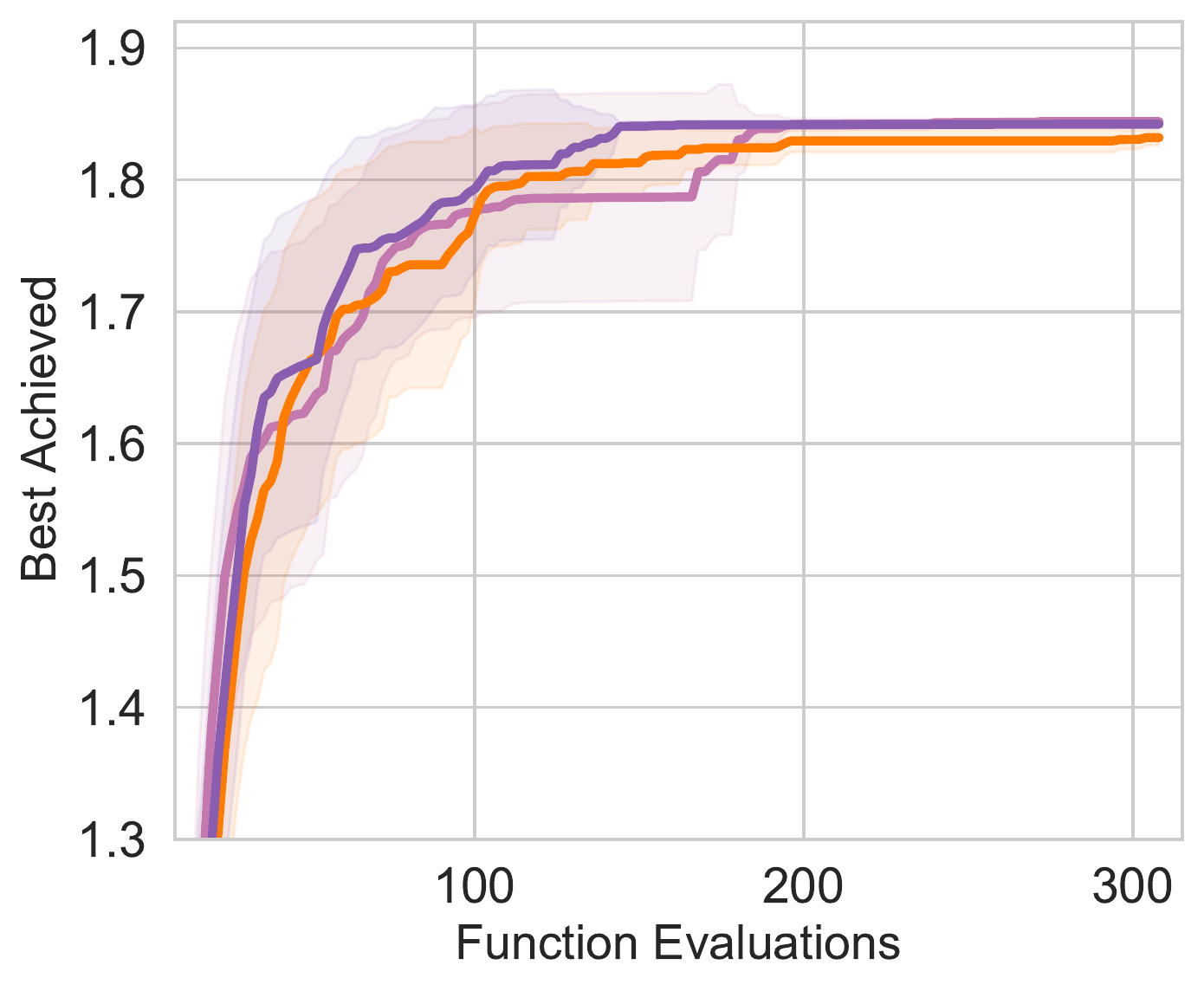}
		\caption{$10$ contexts}
		\label{fig:cbo_contexts_10}
	\end{subfigure}
	\begin{subfigure}{0.32\textwidth}
		\centering
		\includegraphics[width=\linewidth]{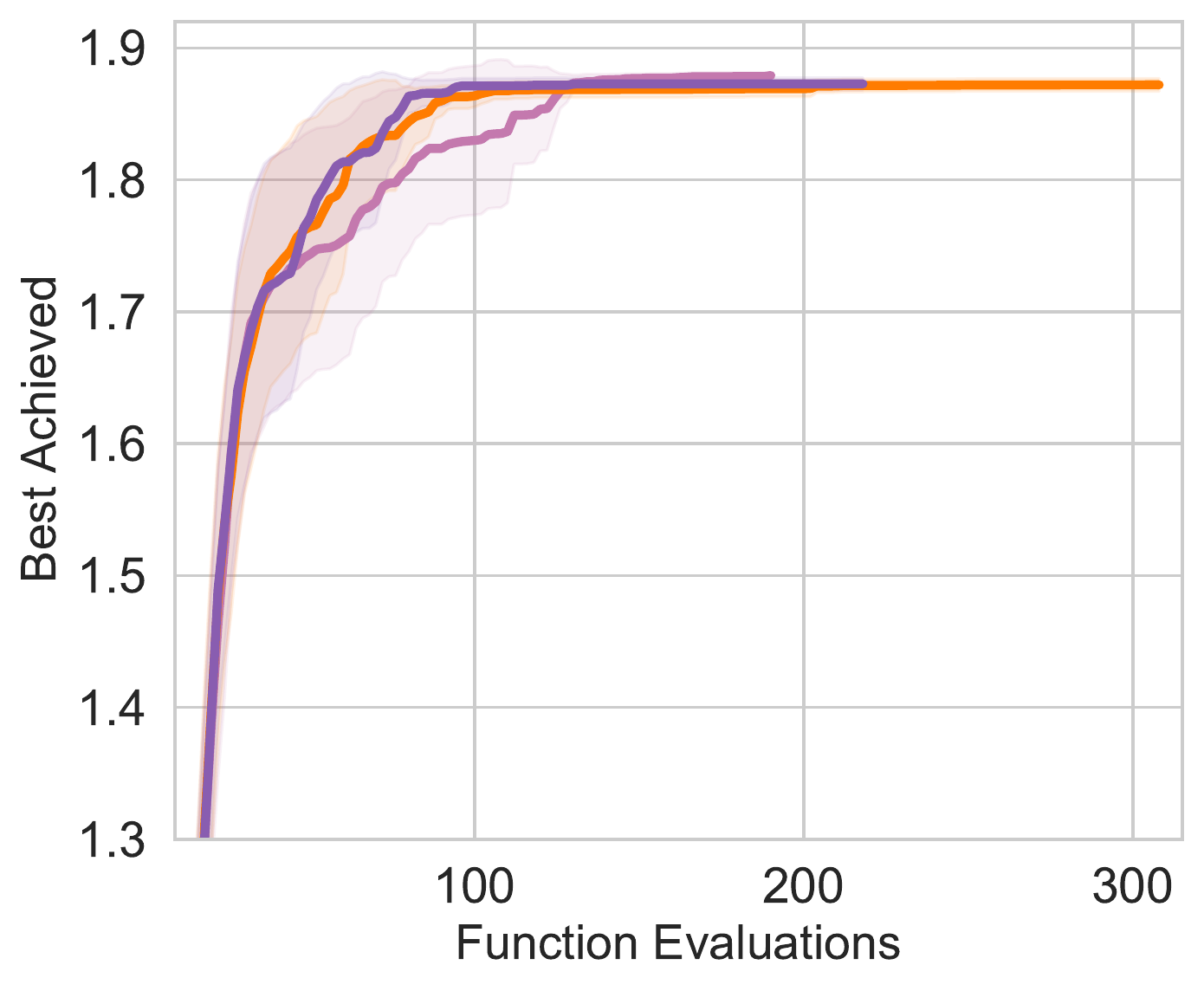}
		\caption{$20$ contexts}
		\label{fig:cbo_contexts_20}
	\end{subfigure}
	\begin{subfigure}{0.32\textwidth}
		\centering
		\includegraphics[width=\linewidth]{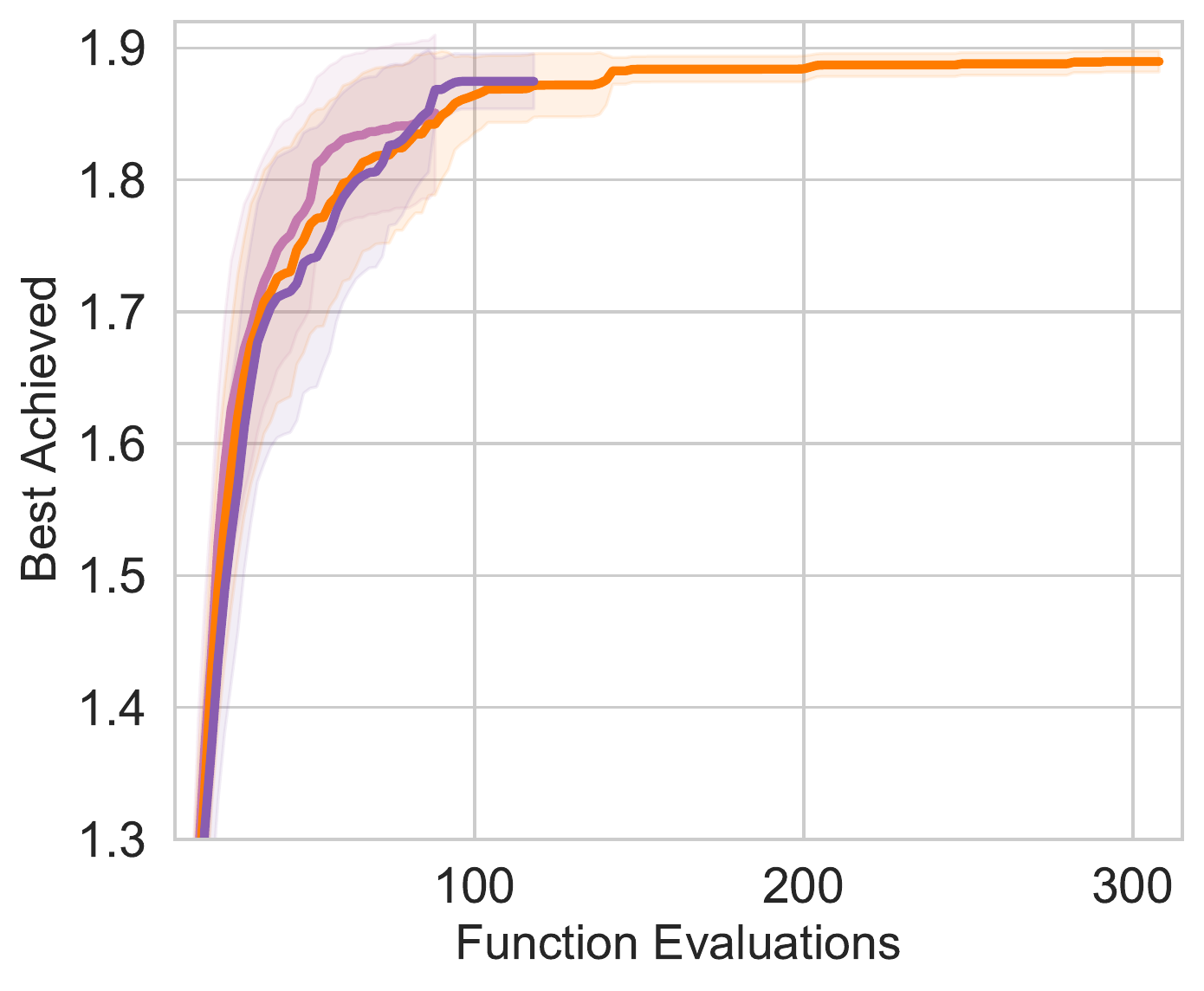}
		\caption{$50$ contexts}
		\label{fig:cbo_contexts_50}
	\end{subfigure}
	\begin{subfigure}{0.32\linewidth}
		\includegraphics[width=\textwidth]{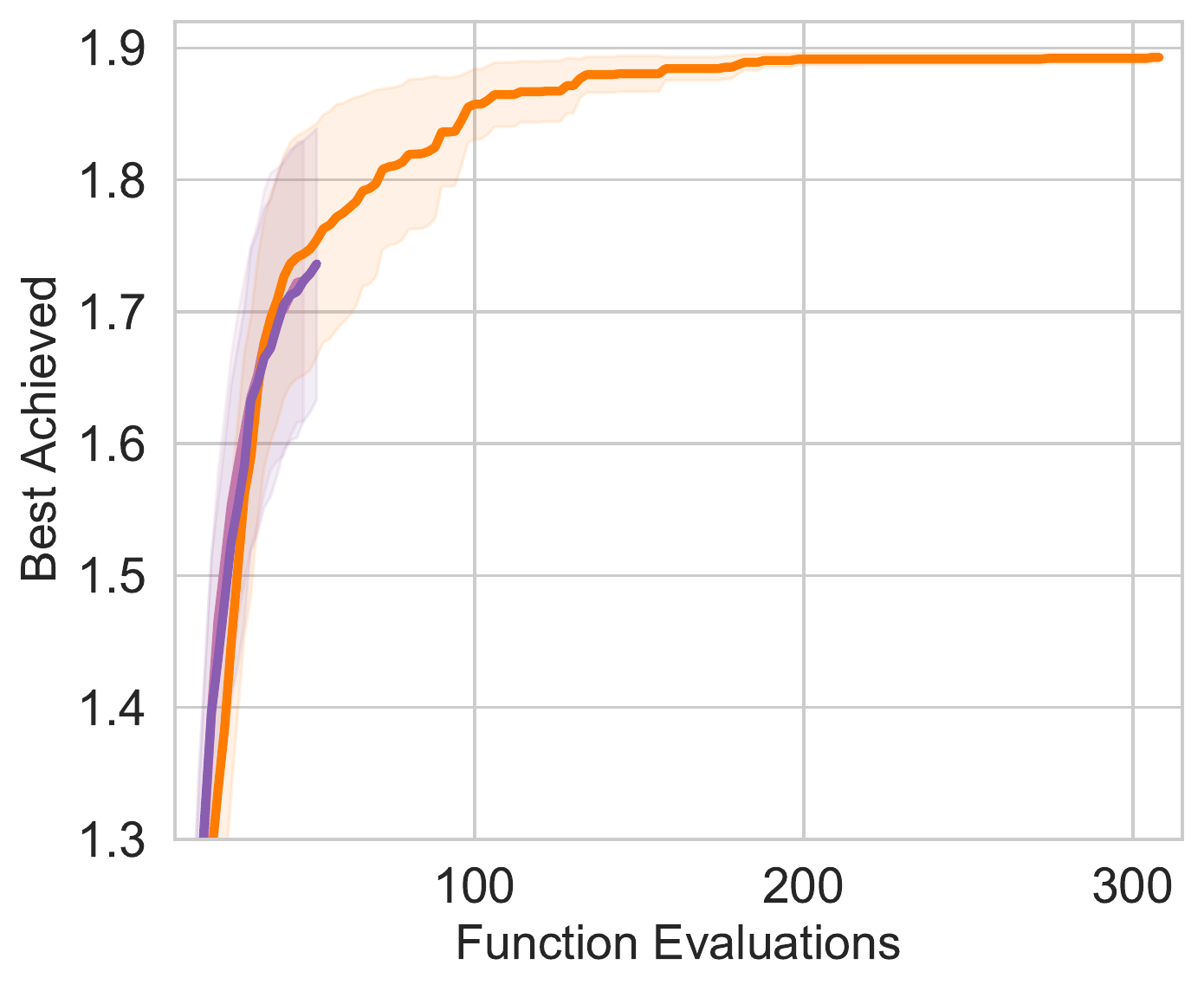}
		\caption{$100$ contexts}
		\label{fig:cbo_contexts_100}
	\end{subfigure}
	\begin{subfigure}{0.32\textwidth}
		\centering
		\includegraphics[width=\linewidth]{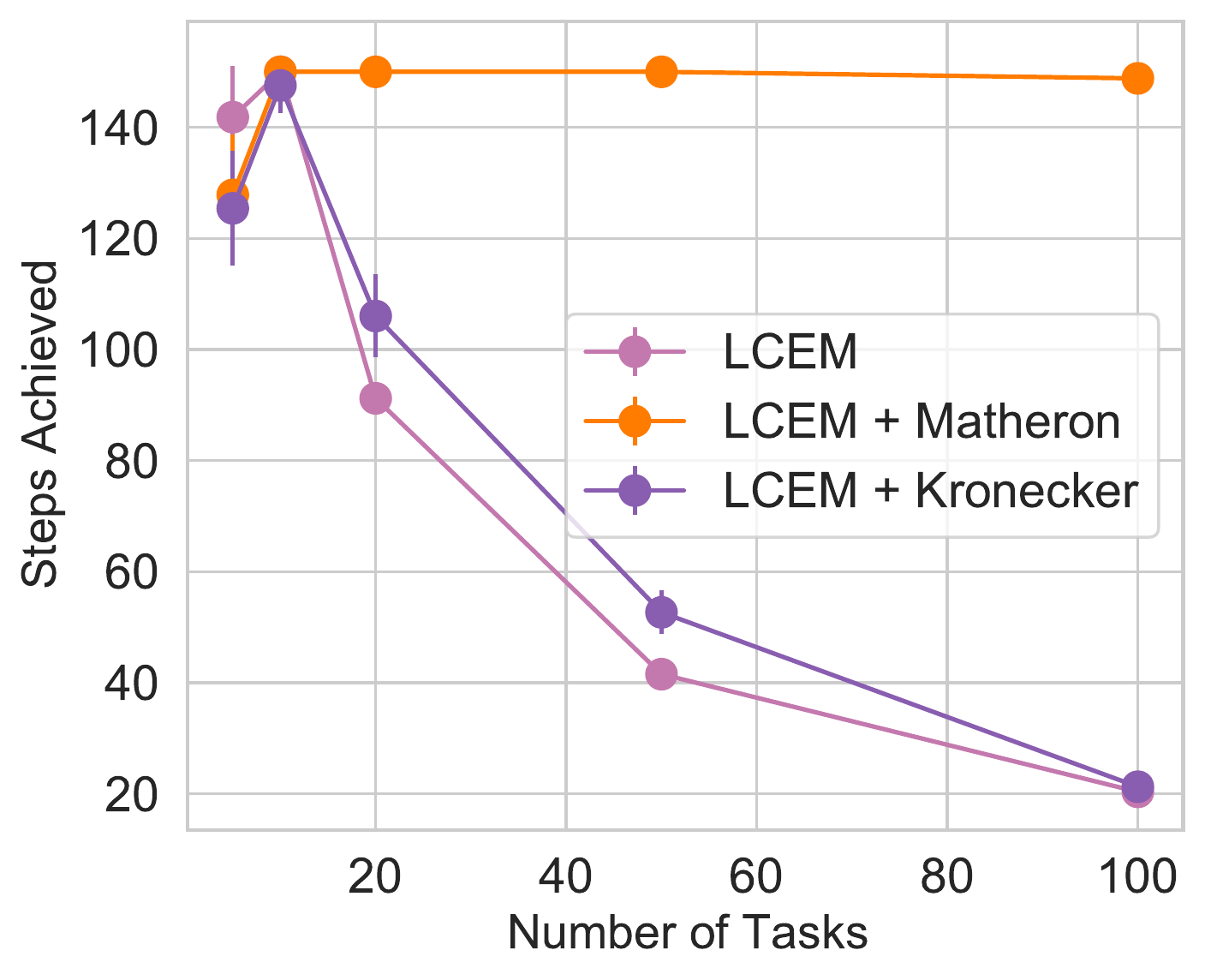}
		\caption{Steps achieved}
		\label{fig:mtbo_contexts_steps}
	\end{subfigure}

			\caption{Multi-task Bayesian Optimization with on Hartmann5D with LCE-M kernel using distributional sampling with Hadamard (LCEM) and  Kronecker (LCEM + Kronecker) models, and our Matheron-based posterior sampling (LCE-M + Matheron). For five contexts \textbf{(a)}, optimization performance is essentially identical as expected; for $100$ contexts \textbf{(b)}, only the LCEM + Matheron's rule model reaches a maximum as the others run out of memory.
			Results over $40$ trials for $10, 20,$ and $50$ contexts on Hartmann-$6$ translated into a contextual problem. We also show the number of average steps achieved in \textbf{(d)} where only LCEM + Matheron's rule is able to complete all $150$ steps. 
			\label{fig:cbo_contexts}
	}

	\label{fig:mtbo}
\end{figure}

\paragraph{Multi-task Bayesian Optimization: }Figure~\ref{fig:mtbo} shows results on the multi-task version of this problem for $5, 10, 20, 50, 100$ contexts. We can see that, as expected, the optimization performance is identical for the three sampling methods on five contexts; the $50$ and $100$ context case shows that our Matheron-based sampling achieves better performance overall as the other methods run out of memory.
In Figure \ref{fig:mtbo_contexts_steps}, we also show the average and steps achieved (confidence bars are two standard errors of the mean) as the number of tasks (contexts) increases.
This clearly shows the impact of the increased memory usage for both the LCEM and LCEM + Kronecker implementations, as they run out of memory after only a few steps, rather than being able to reach all $150$ optimization steps.

\paragraph{Contextual Bayesian Optimization:}

\begin{figure}[h!]
	\centering
	\begin{subfigure}{0.32\textwidth}
		\centering
		\includegraphics[width=\linewidth]{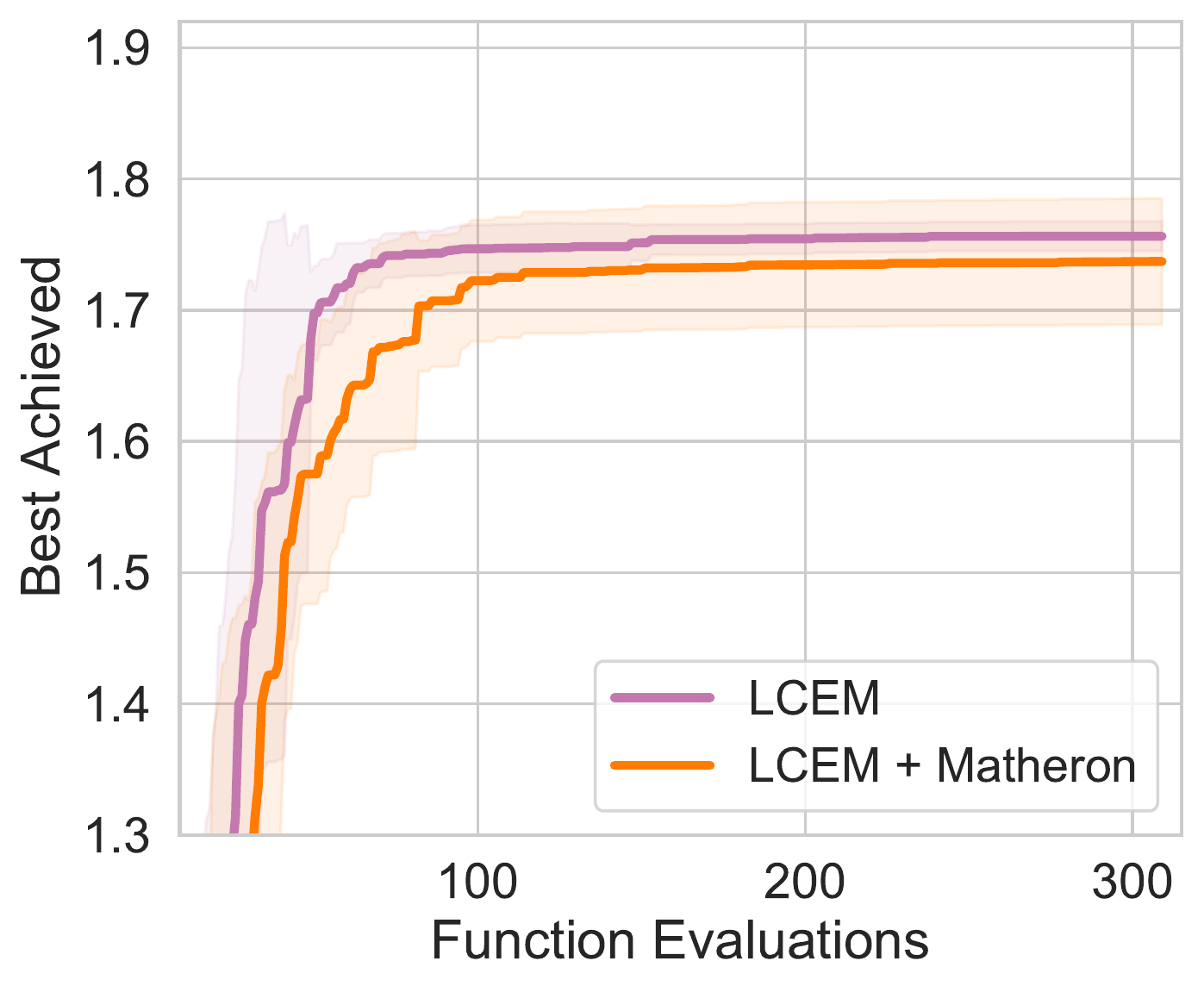}
		\caption{$50$ contexts}
		\label{fig:cbo_cbo_contexts_5}
	\end{subfigure}
	\begin{subfigure}{0.32\textwidth}
		\centering
		\includegraphics[width=\linewidth]{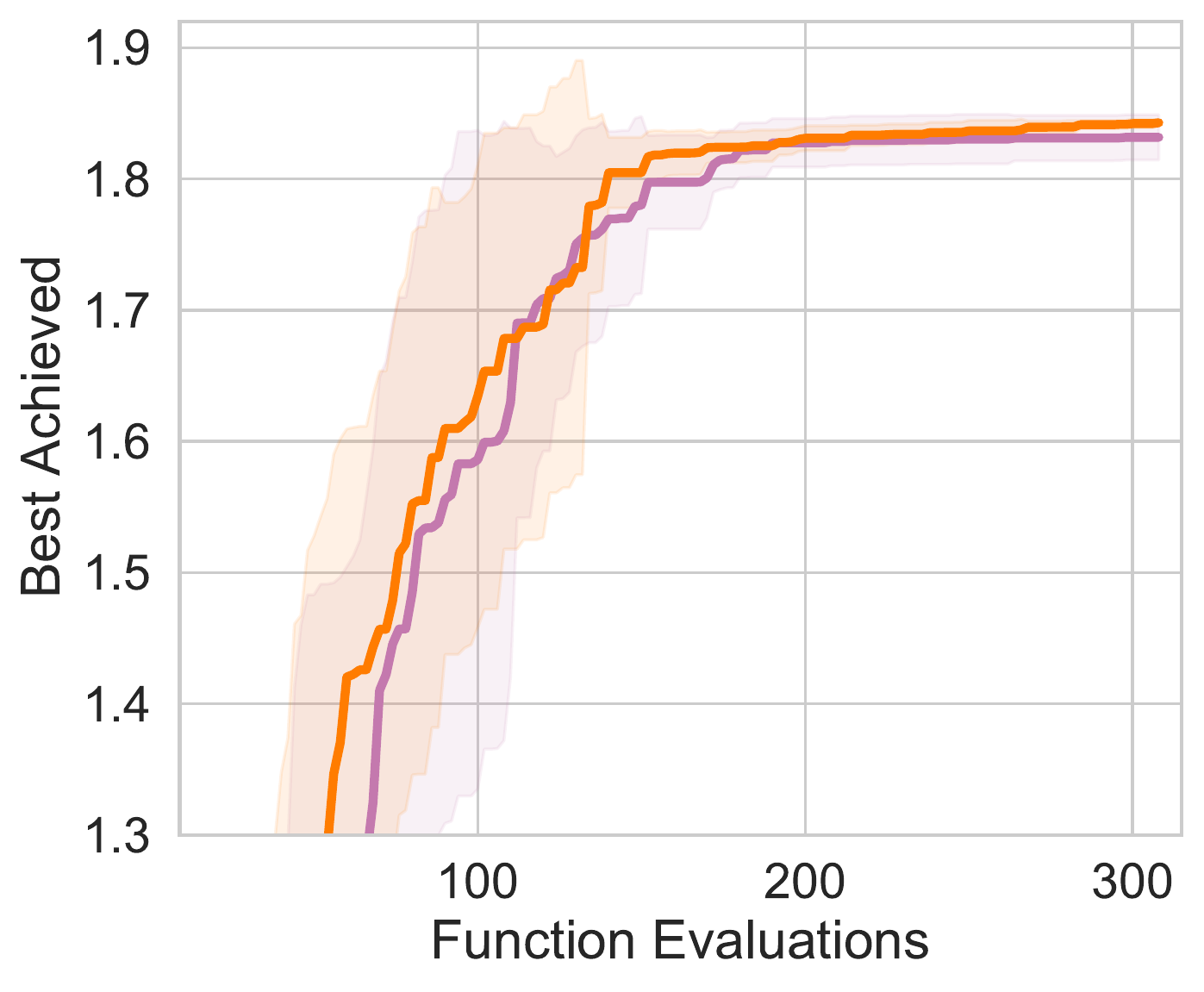}
		\caption{$10$ contexts}
		\label{fig:cbo_cbo_contexts_10}
	\end{subfigure}
	\begin{subfigure}{0.32\textwidth}
		\centering
		\includegraphics[width=\linewidth]{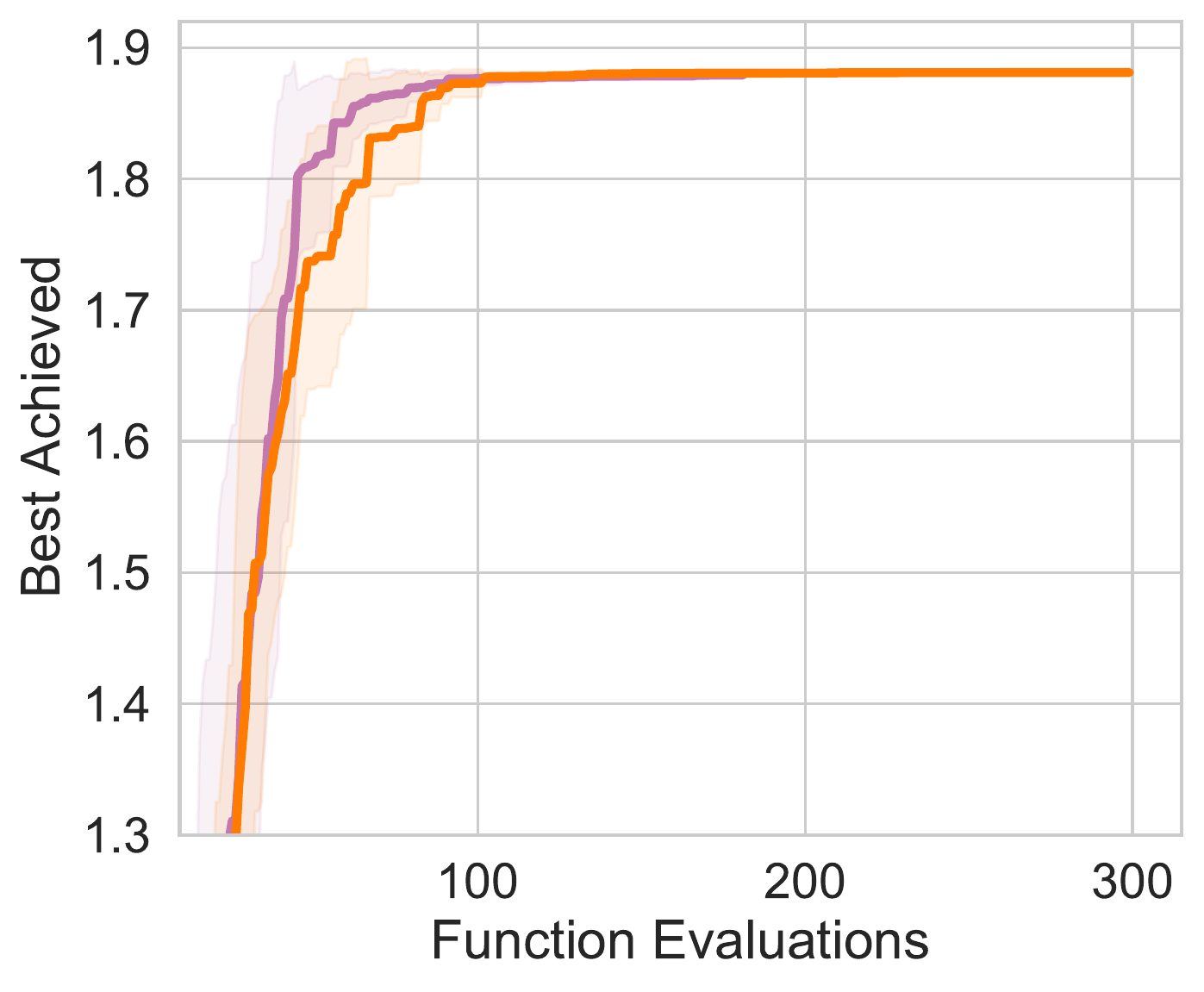}
		\caption{$20$ contexts}
		\label{fig:cbo_cbo_contexts_20}
	\end{subfigure}
	\begin{subfigure}{0.32\textwidth}
		\centering
		\includegraphics[width=\linewidth]{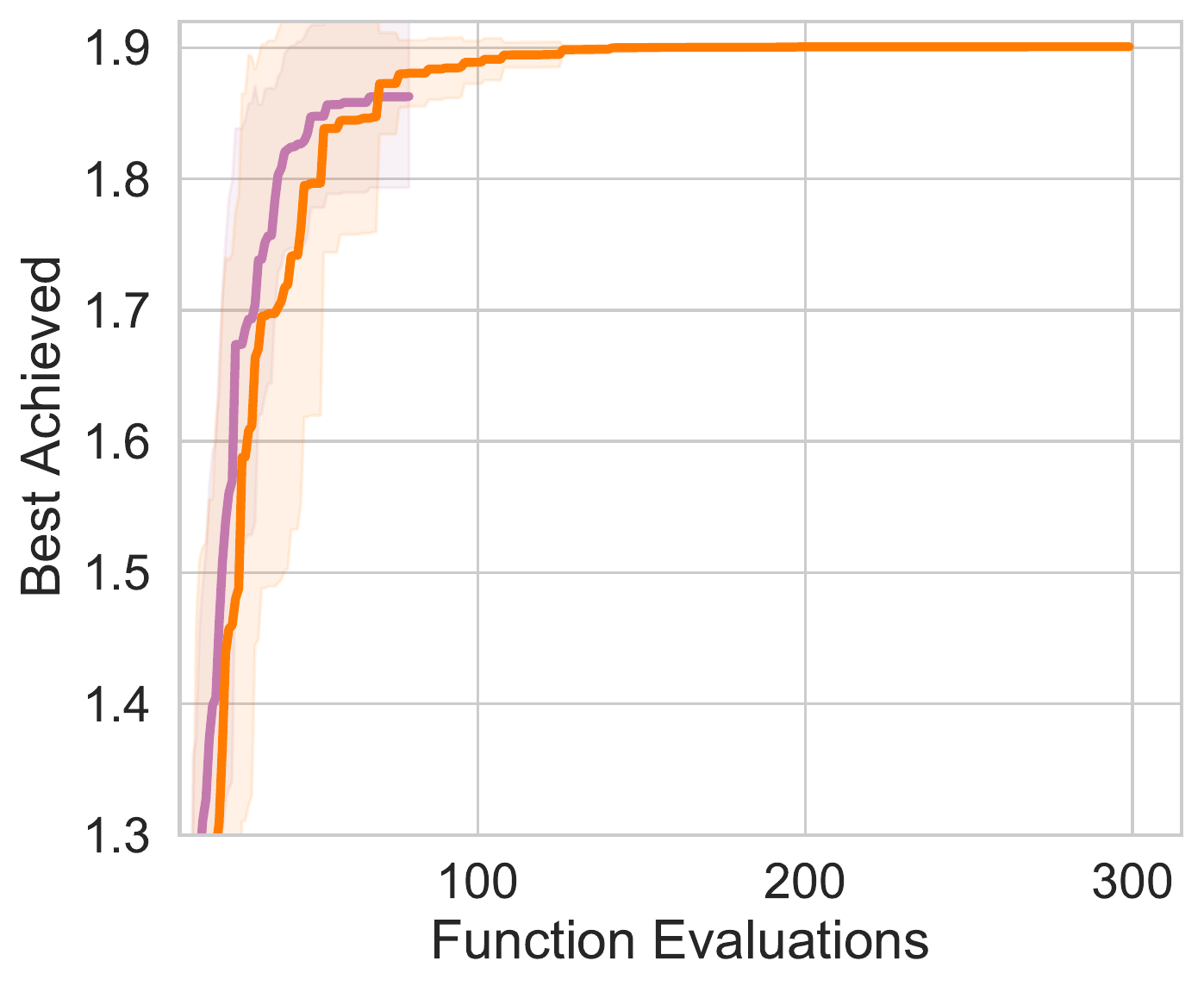}
		\caption{$50$ contexts}
		\label{fig:cbo_cbo_contexts_50}
	\end{subfigure}
	\begin{subfigure}{0.32\textwidth}
		\centering
		\includegraphics[width=\linewidth]{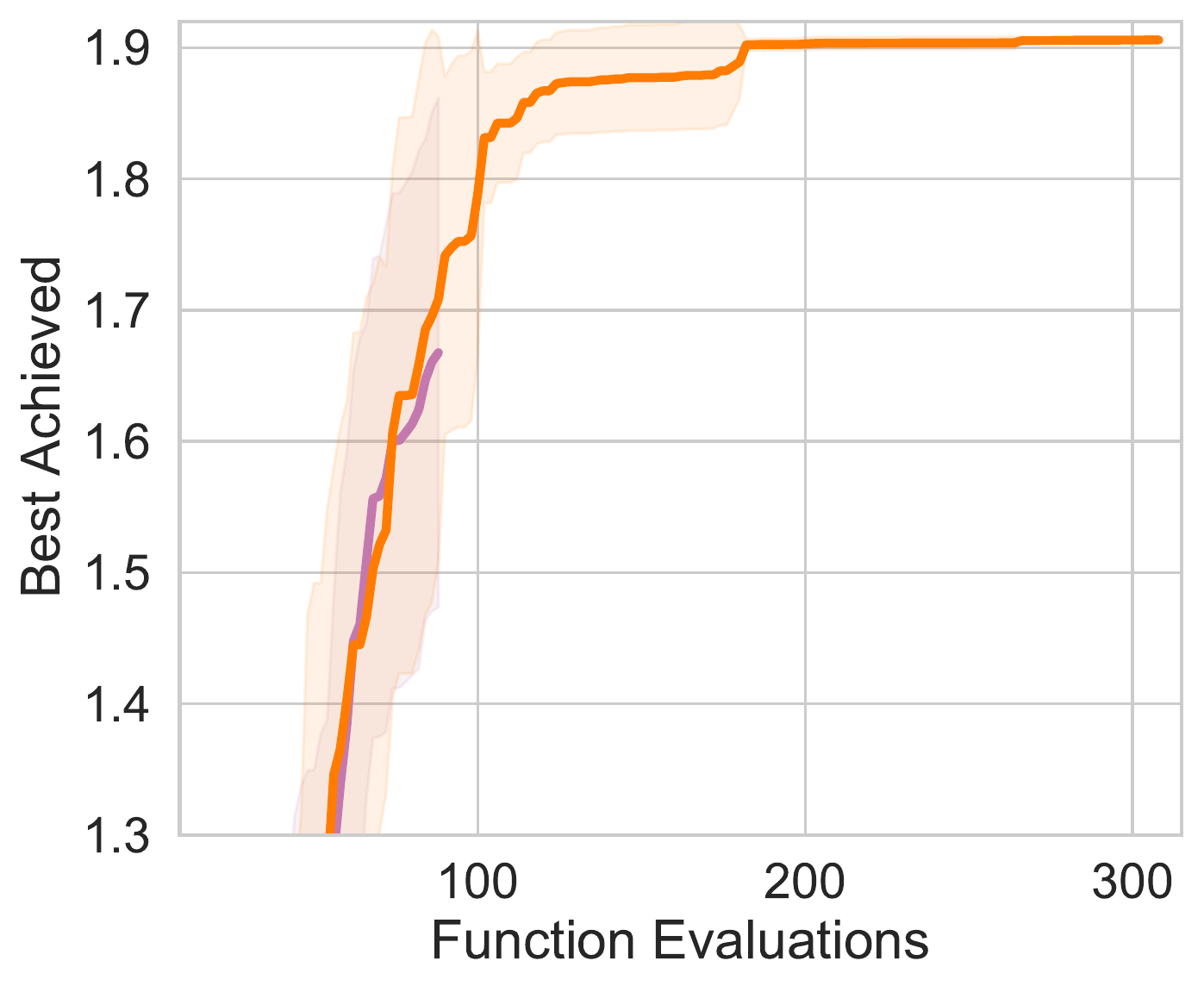}
		\caption{$100$ contexts}
		\label{fig:cbo_cbo_contexts_100}
	\end{subfigure}
	\begin{subfigure}{0.32\textwidth}
		\centering
		\includegraphics[width=\linewidth]{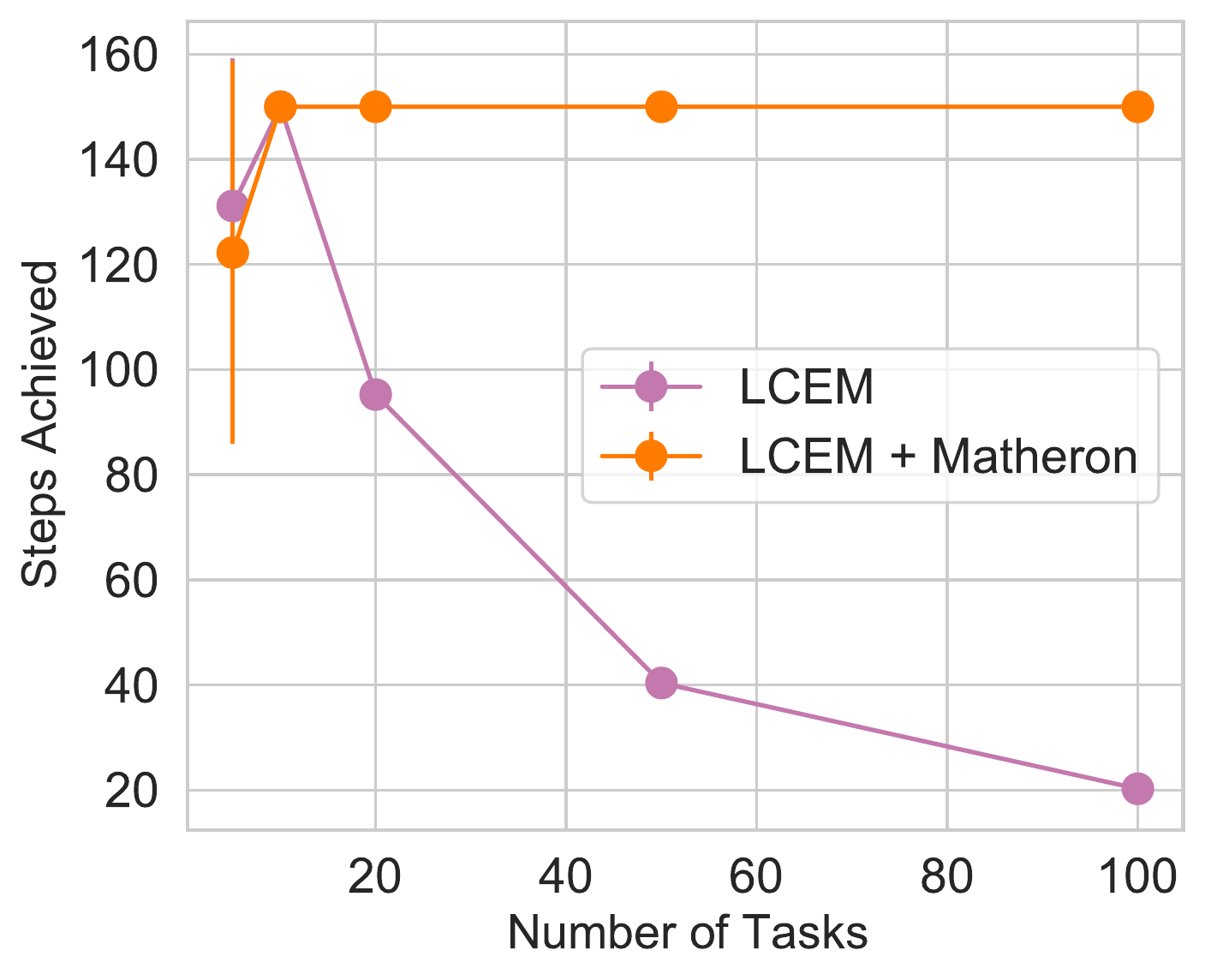}
		\caption{Steps achieved}
		\label{fig:cbo_cbo_contexts_steps}
	\end{subfigure}
	\caption{Results over $8$ trials for $10, 20,$ and $50$ contexts on Hartmann-$6$ translated into a contextual problem. We also show the number of average steps achieved in \textbf{(d)} where only LCEM + Matheron's rule is able to average $150$ steps (the maximum that we ran each trial for) of Bayesian Optimization.}
	\label{fig:cbo_cbo_contexts}
\end{figure}

In Figure \ref{fig:cbo_cbo_contexts}, we display the results of setting up the Hartmann-$5D$ problem as a contextual Bayesian Optimization. 
Here, we observe each context at every iteration, but need to choose a policy to for a randomly chosen context.
We then observe all contexts' observations for that observed policy, and plot the best average reward achieved across all contexts.
Again, the findings from the MTGP experiment carry over --- which is that optimization performance is nearly identical, but that the Matheron's rule implementation can achieve many more steps at a higher number of contexts, as shown in Figure \ref{fig:cbo_cbo_contexts_steps}.

\paragraph{Experimental Details:}
We fit the MTGPs with a constant diagonal likelihood with a $\text{Gamma}(1.1, 0.05)$ prior on the noise, ARD Matern kernels on the data with $\nu = 2.5,$ lengthscale priors of $\text{Gamma}(3.0, 6.0),$ and a $\text{Gamma}(2.0, 0.15)$ prior on the output dimension using Adam for $250$ steps.
The LCEM kernel uses similar priors and a one dimensional embedding, following \citet{feng_high-dimensional_2020}.
The LCEM implementation was Hadamard based and exactly from \url{https://github.com/pytorch/botorch/blob/master/botorch/models/contextual_multioutput.py}.
For all, we used $10$ initial points, ran $150$ steps of BO, normalized the inputs to $[0,1]^d$ and standardized the responses.
The simulated Hartmann-$5D$ function comes from \url{https://github.com/facebookresearch/ContextualBO/} (MIT License) \citep{feng_high-dimensional_2020}.
Here, we use a single $16$GB V100 GPU (\texttt{p3.8xlarge} AWS instance) and repeat the experiments $43$ times for $5$ tasks and $40$ times for $100$ tasks, showing the mean and two standard errors of the mean.
For BO loops, we used $128$ MC samples, $q=2$ batch size, $256$ initialization samples with a batch limit of $5$, $10$ restarts for the optimization loop and $200$ iterations of L-BFGS-B.
All other options were botorch defaults including the LKJ prior with $\eta = 2.0$ over the inter-task covariance matrix \citep{lewandowski2009generating}.

\subsection{Constrained Multi-Objective Bayesian Optimization}

For C2DTLZ2, we initialized with $25$ random points, while with OSY we initialized with $14$ random points, chosen via rejection sampling to find at least $7$ feasible points.
For both, we then optimized with $128$ MC samples, $10$ random restarts, $512$ base samples, a batch limit of $5$, an initialization batch limit of $20$ and for up to $200$ iterations of L-BFGS-B.
We used a batch size of $q=2$ for C2DTLZ2 optimizing for $200$ steps and a batch size of $q=10$ for OSY, optimizing for $30$ steps.
We used $16$ random seeds for OSY and $24$ for C2DTLZ2 and plot the mean and two standard errors of the mean.
For both, we used the default reference points as $(1.1, 1.1)$ and $(-75, 75)$ for the EHVI approaches.

The C2DTLZ2 function comes from BoTorch, while the OSY function is a reimplementation of \url{https://github.com/msu-coinlab/pymoo/blob/master/pymoo/problems/multi/osy.py} (Apache 2.0 License).
For these, we used $24$GB Nvidia RTX GPUs on an internal server.

\subsection{Scalable Constrained Bayesian Optimization}
In Figure \ref{fig:lunar_lander_time}, we show the wall clock time for fitting the batched GPs and the MTGPs on the lunar lander problem with $m = 50$ constraints. 
Here, the MTGPs are faster because they are somewhat more memory efficient; note that several runs for both reached convergence during optimization very quickly after reaching BoTorch's default Lanczos threshold ($n=800$) thus decaying the model fitting times.
We also show the time required to sample all tasks, again finding that the Thompson sampling time is much slower for the batched GPs.
Shown are means and log normal confidence intervals around the mean (hence the asymmetry).
In Table \ref{tab:lunar_feasibility}, we show the number of steps and the proportion of succeeded trials required to reach feasibility, finding that the MTGPs also reduced the number of steps required to achieve feasibility and thus improved the number of feasible runs.
For steps to feasibility, we again show means and log normal CIs around the mean.

\begin{figure}[h!]
	\begin{subfigure}{0.49\textwidth}
		\centering
		\includegraphics[width=\linewidth]{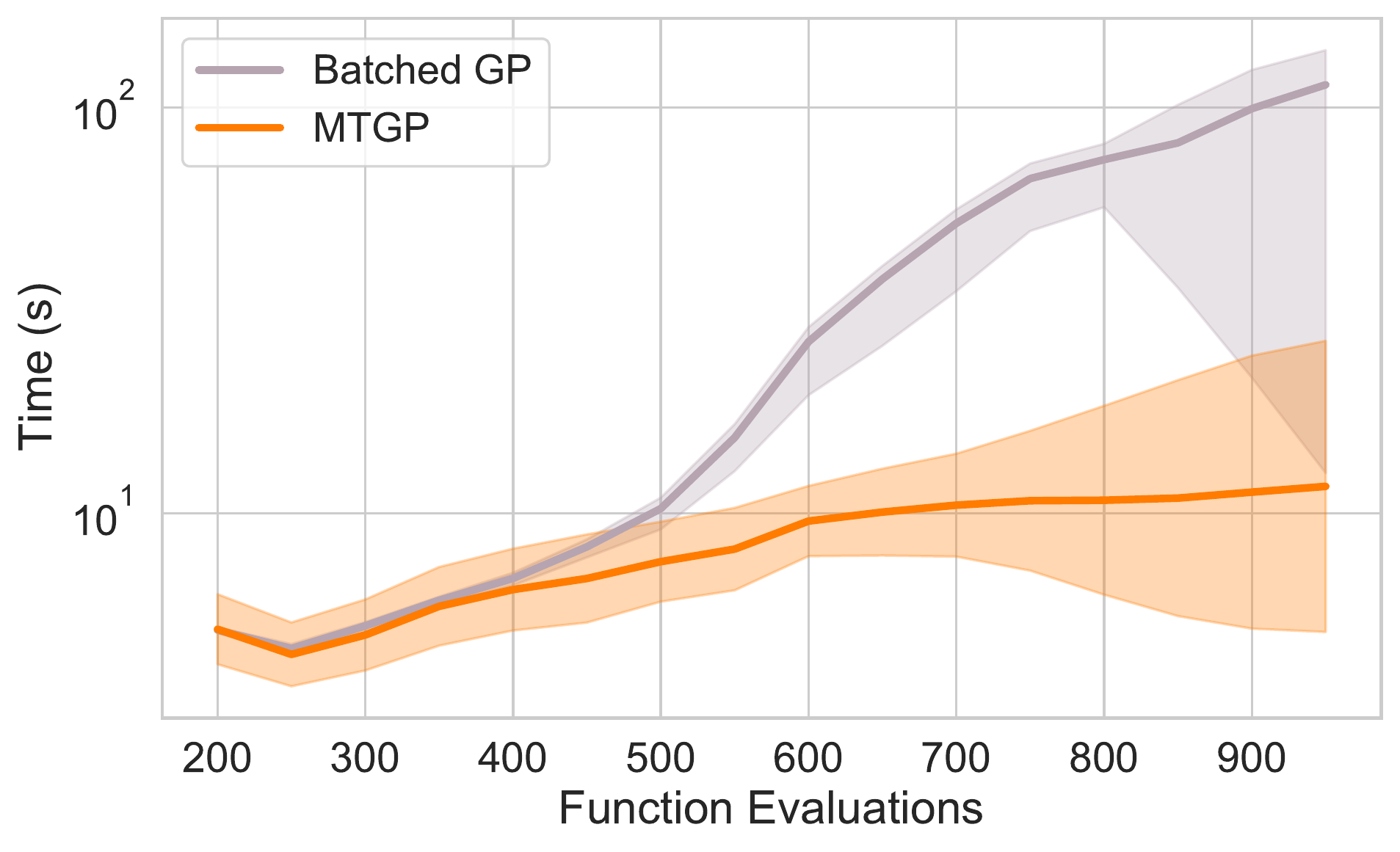}
		\caption{Fitting time}
		\label{fig:lunar_lander_fitting}
	\end{subfigure}
	\begin{subfigure}{0.49\textwidth}
		\centering
		\includegraphics[width=\linewidth]{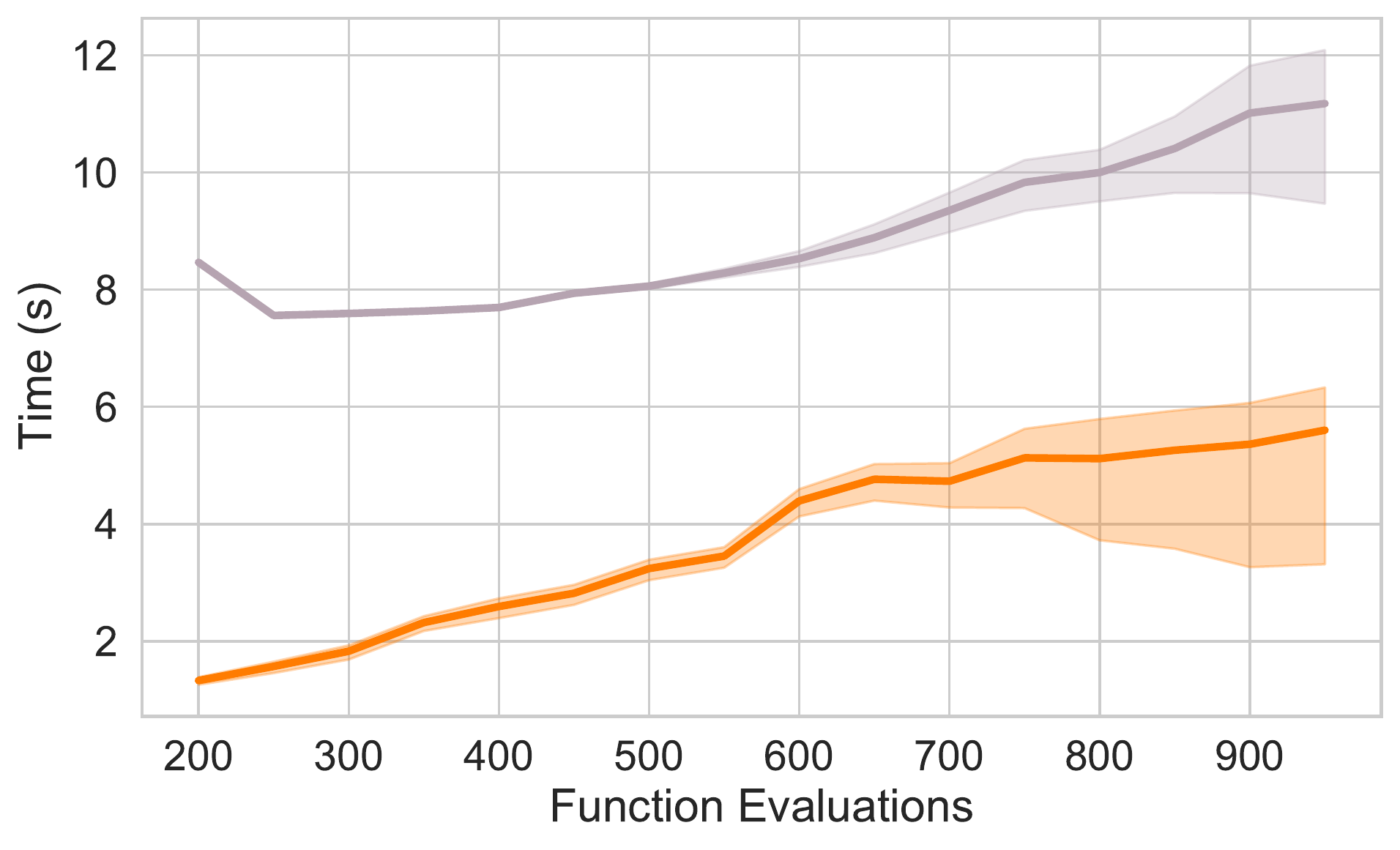}
		\caption{Thompson sampling time}
		\label{fig:lunar_lander_sampling}
	\end{subfigure}   
	\caption{\textbf{(a)} Wall clock time for model fitting with multi-task and batched Gaussian processes as a function of the number of function queries. \textbf{(b)} Time for Thompson sampling as a function of the number of function queries. In both cases, the multi-task Gaussian process is faster; training due to using conjugate gradients and Kronecker MVMs while Thompson sampling is faster due to the Matheron's rule approach that we use.}
	\label{fig:lunar_lander_time}
\end{figure}

\begin{table}[h!]
	\centering
	\caption{Number of function evaluations to achieve feasibility on the lunar lander (LL) and the MOPTA08 optimization problems, given that feasibility was reached, as well as the proportion of runs that achieved feasibility. Using a multi-task GP in SCBO achieves feasible outcomes with fewer function evaluations; on the $m=100$ constraint lunar lander, the batch GPs ran out of memory before reaching feasibility.}
	\begin{tabular}{c|c |c}
		\toprule
		Method & Time to Feasibility & Proportion \\\bottomrule
		& LL, $m=50$ &  \\\midrule
		MTGP SCBO & $314, (229, 333)$ & $26/30, (0.73, 1.0)$ \\
		Batch SCBO & $401, (273, 432)$ & $24/30, (0.64, 0.96)$\\
		PCA GP SCBO & $324, (239, 348)$ & $25/30, (0.68, 0.97)$\\\toprule
		& LL, $m=100$ & \\\midrule
		MTGP SCBO & $349, (260, 390)$ & $17/30 ,(0.33, 0.8)$\\
		Batch SCBO & --- & $0 / 30$ \\\toprule        
		& MOPTA08 &  \\\midrule
		MTGP SCBO & $292, (250, 327)$ & $9/9$ \\
		Batch SCBO & $415, (318, 493)$ & $8/8$ \\\bottomrule
	\end{tabular}
	\label{tab:lunar_feasibility}
\end{table}

Here, we followed the parameterizations and other implementation details from \citet{eriksson2020scalable} and used $30$ random seeds for the lunar lander problems and $9$ on the MOPTA08 problem ($8$ for batch GPs due to memory issues).
Here, we used a single $24$GB Titan RTX GPU for all experiments (part of an internal server), and used KeOPS \citep{feydy2020fast} for the batched GPs.
We used a full rank ICM kernel with a LKJ prior $\eta = 2.0$ \citep{lewandowski2009generating} and a smoothed box prior on the standard deviation of $(e^{-6}, e^{1.25}).$ for the multi-task GPs and diagonal Gaussian noise with a $\text{Horseshoe}(0.1)$ prior, constraining the diagonal noise to $[10^{-6}, 4.0].$

The executable for MOPTA08 is available at \url{https://www.miguelanjos.com/jones-benchmark} (no license provided).
The lunar lander problem uses \url{https://github.com/openai/gym/blob/master/gym/envs/box2d/lunar_lander.py} \citep{openaigym} (MIT License). 
The SCBO code from \citet{eriksson2020scalable} is currently unreleased; we implemented our own version.

\subsection{Composite Bayesian Optimization Experiments}

In all experiments with the HOGP, we used diagonal Gaussian likelihoods with $\text{Gamma}(1.1, 0.05)$ priors on the standard deviation, and Matern 2.5 kernels with a lengthscale prior of $\text{Gamma}(3., 6.).$
For the standard version, we randomly initialized latent parameters to be standard normal, while for the HOGP + GP models, we randomly sampled the latents from a Matern 2.5 kernel with lengthscale $1$ and input values as the indices, using the kernel as the covariance for a zero mean multivariate normal prior.
For all experiments we used \texttt{qEI} with a batch size of $1.$

For the environmental problem, we followed the implementations of \citet{balandat_botorch_2020,astudillo_bayesian_2019}, and used $8$ random restarts, $256$ MC samples, and $512$ base samples, a batch limit of $4,$ and an initialization batch limit of $8.$
These experiments were performed $50$ times on $16$GB V100 GPUs (part of an internal cluster).
The bounds are $(7, 0.02, 0.01, 30.01)$ and $(13.0, 0.12, 3., 30.295).$

For the PDE problem, we followed the example implementation given at \url{https://py-pde.readthedocs.io/en/latest/examples_gallery/pde_brusselator_expression.html#sphx-glr-examples-gallery-pde-brusselator-expression-py} (MIT License). For the metrics, we computed the weighted variance and minimized that function.
Non-finite outputs were set to $1e5.$
We up-weighted the weights on the first two rows and columns for each output to have weights $10x$ that of the rest of the inputs.
These were run $20$ times with $5$ initial points, $50$ optimization steps, using $64$ MC samples and $128$ raw samples with $4$ optimization restarts.
These were run on CPUs with $64$GB of memory (part of an internal cluster).
An example output, as well as solutions found by EI (objective value $0.1088$) and  composite EI using the HOGP model (objective value $0.0087$) are shown in Figure \ref{fig:pde_outputs}. 

\begin{figure}[t!]
	\centering
	\begin{subfigure}{0.32\textwidth}
		\includegraphics[width=\linewidth]{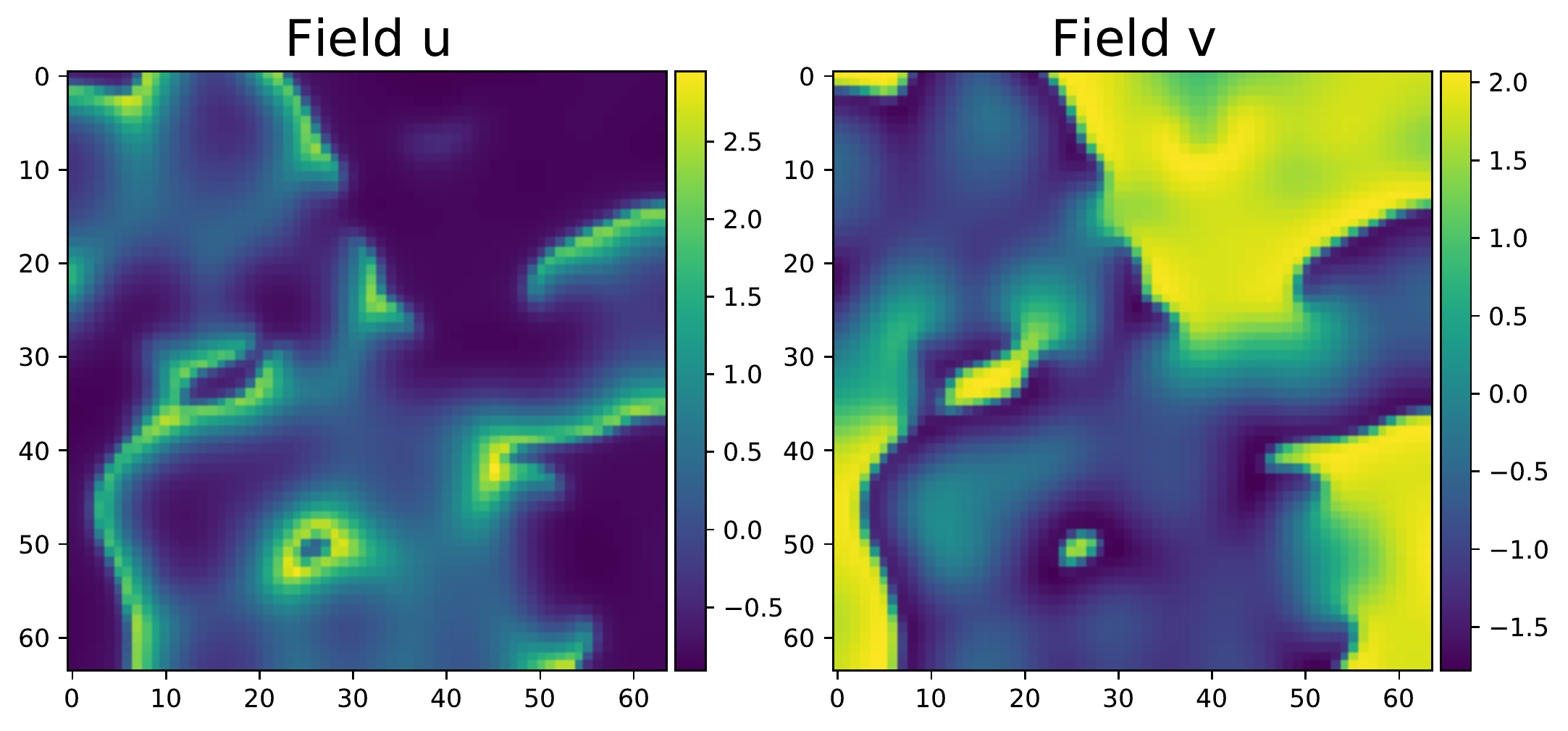}
		\caption{PDE output, random}
	\end{subfigure}
	\begin{subfigure}{0.32\textwidth}
		\includegraphics[width=\linewidth]{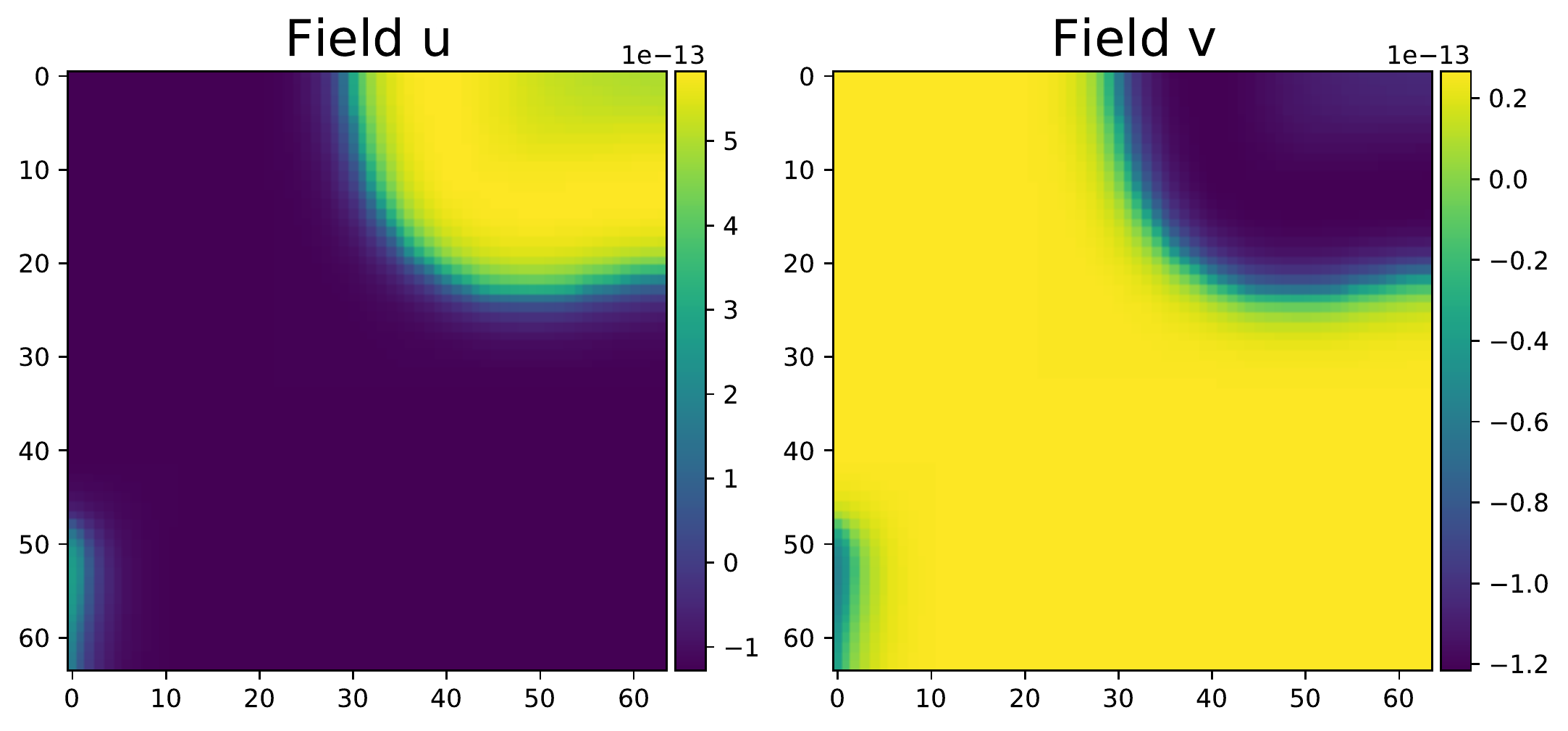}
		\caption{PDE output, EI}
	\end{subfigure}
	\begin{subfigure}{0.32\textwidth}
		\includegraphics[width=\linewidth]{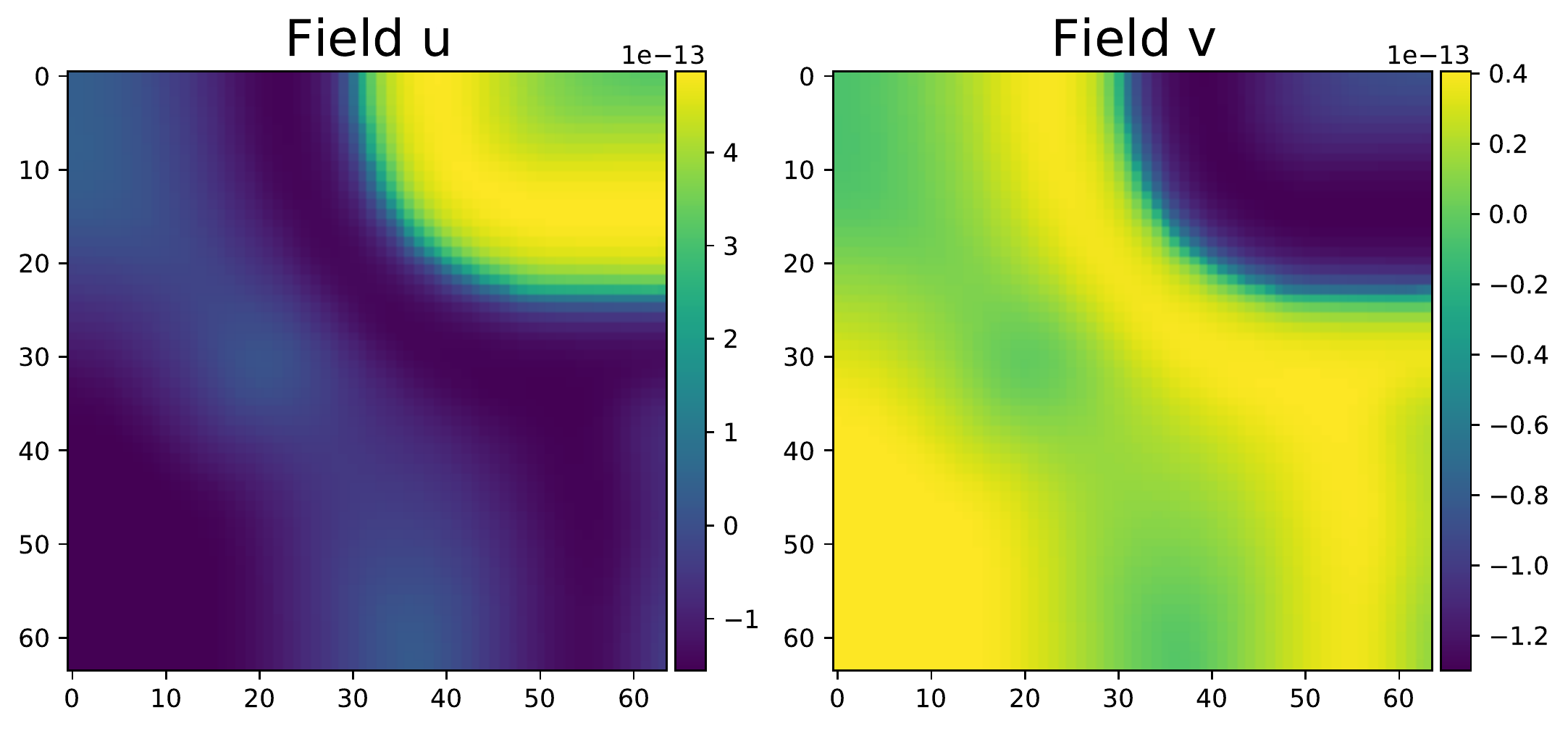}
		\caption{PDE output, EI + HOGP + GP}
	\end{subfigure}
	\caption{\textbf{(a)} Example solution from the Brusselator problem. \textbf{(b)} PDE solution as found via EI on the metric itself. \textbf{(c)} PDE solution as found via composite BO using EI with the HOGP model, which displays less variance than the EI solution. Running BO to minimize the variance is able to easily find settings of parameters within the bounds that push the reactivity to zero; the variance is reduced overall by using the HOGP + GP model. All solutions are de-meaned.}
	\label{fig:pde_outputs}
\end{figure}

On the radio frequency coverage problem, we initialized with $20$ points, downsampled the two $241 \times 241$ outputs to $50 \times 50$ for simplicity, ran the experiments over $20$ random seeds and for $150$ steps. We used $32$ MC samples, $64$ raw samples with a batch limit of $4$ and an initialization batch limit of $16.$
These were run on either $V100s$ with $32$GB of memory or $RTX8000s$ with $48$GB of memory on a shared computing cluster so we cannot tell which one was used.
The problem is $30$ dimensional and the first $15$ dimensions are $(0.0, 10)^{15}$ with the second $15$ coming as $(30.0, 50.0)^{15}.$
The first dimensions correspond to the downtilt angles of the transmitters (in $3D$ coordinates), while the second set corresponds to the power levels of the transmitters.

For metric specfication, we used the following equations, following \citet{dreifuerst2020optimizing}, where $R$ is the first output and $I$ is the second output:
\begin{align*}
	Cov_{\text{f, strong}} &= \sum_{i,j}^{50} \text{sigmoid}(-80 - R) \\
	Cov_{\text{g, weak, area}} &= \text{sigmoid}(R + 80) * \text{sigmoid}(I + 6 - R) \\
	Cov_{\text{g, weak}} &= \sum_{i,j}^{50} \text{sigmoid}(I * Cov_{\text{g, weak, area}} + 6 - R * Cov_{\text{g, weak, area}}) \\
	Obj &= 0.25 * Cov_{\text{f, strong}} + (1 - 0.75) * Cov_{\text{g, weak}}
\end{align*}
using the final line as the objective to maximize. $-80$ is the weak coverage threshold, while $6$ is the strong coverage threshold.
Representative coverage maps are shown in Figure \ref{fig:hogp_coverage_maps}, along side the maps of weak coverage and strong coverage, analogous to that of a random set of parameters in Figure \ref{fig:celltower_schematic}.

\begin{figure}[h!]
	\centering
	\begin{subfigure}{0.49\textwidth}
		\includegraphics[width=\linewidth]{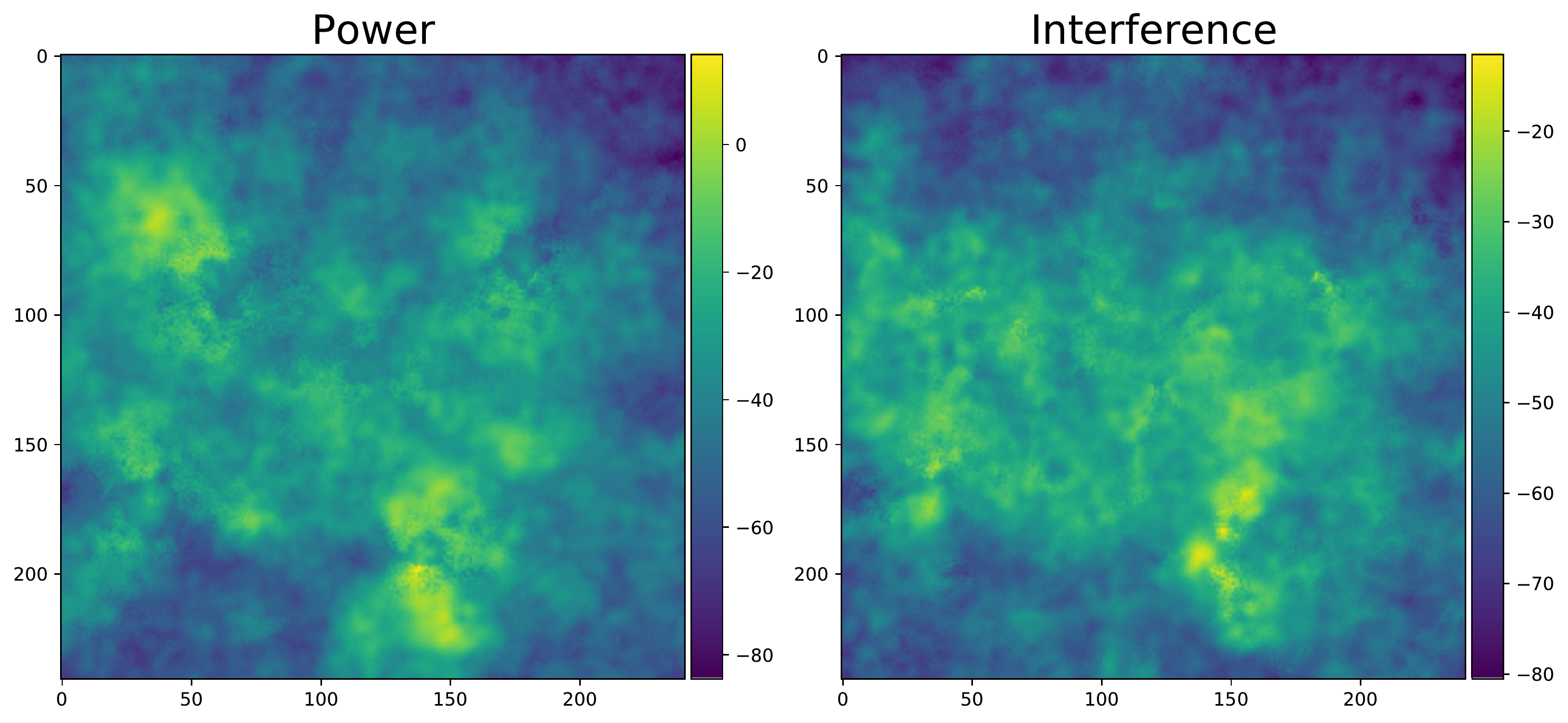}
		\caption{Coverage simulator, EI}
	\end{subfigure}
	\begin{subfigure}{0.49\textwidth}
		\includegraphics[width=\linewidth]{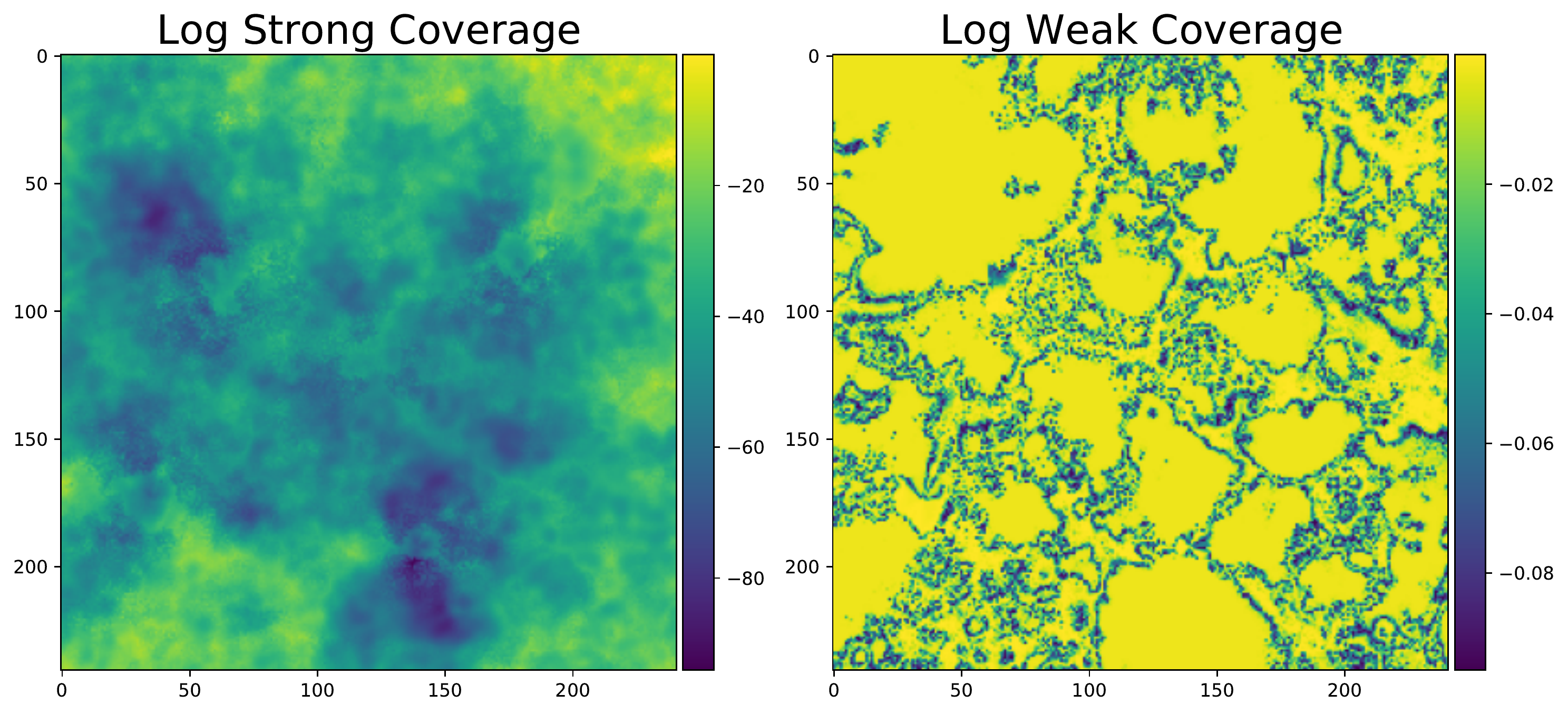}
		\caption{Coverage metrics, EI}
	\end{subfigure}
	\begin{subfigure}{0.49\textwidth}
		\includegraphics[width=\linewidth]{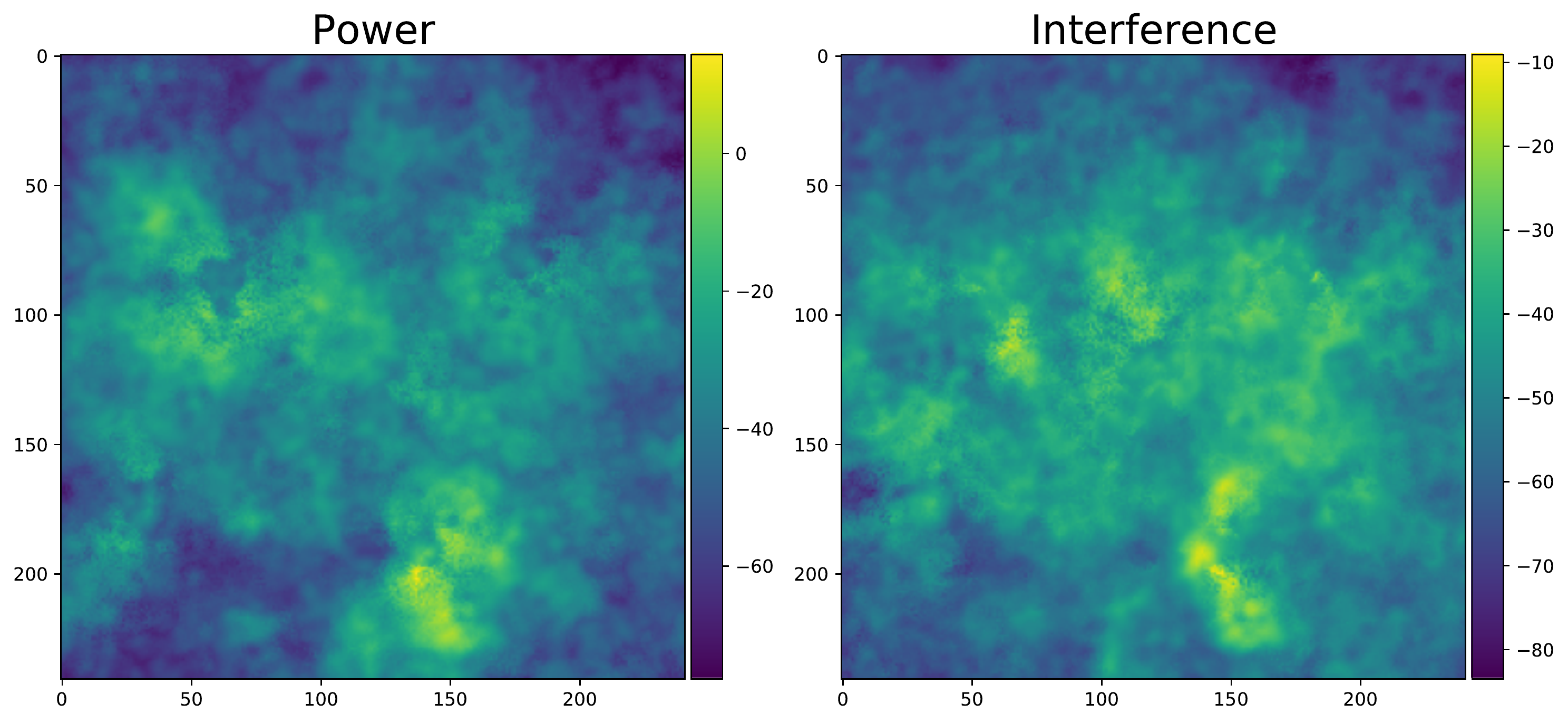}
		\caption{Coverage simulator, composite EI, HOGP + GP}
	\end{subfigure}
	\begin{subfigure}{0.49\textwidth}
		\includegraphics[width=\linewidth]{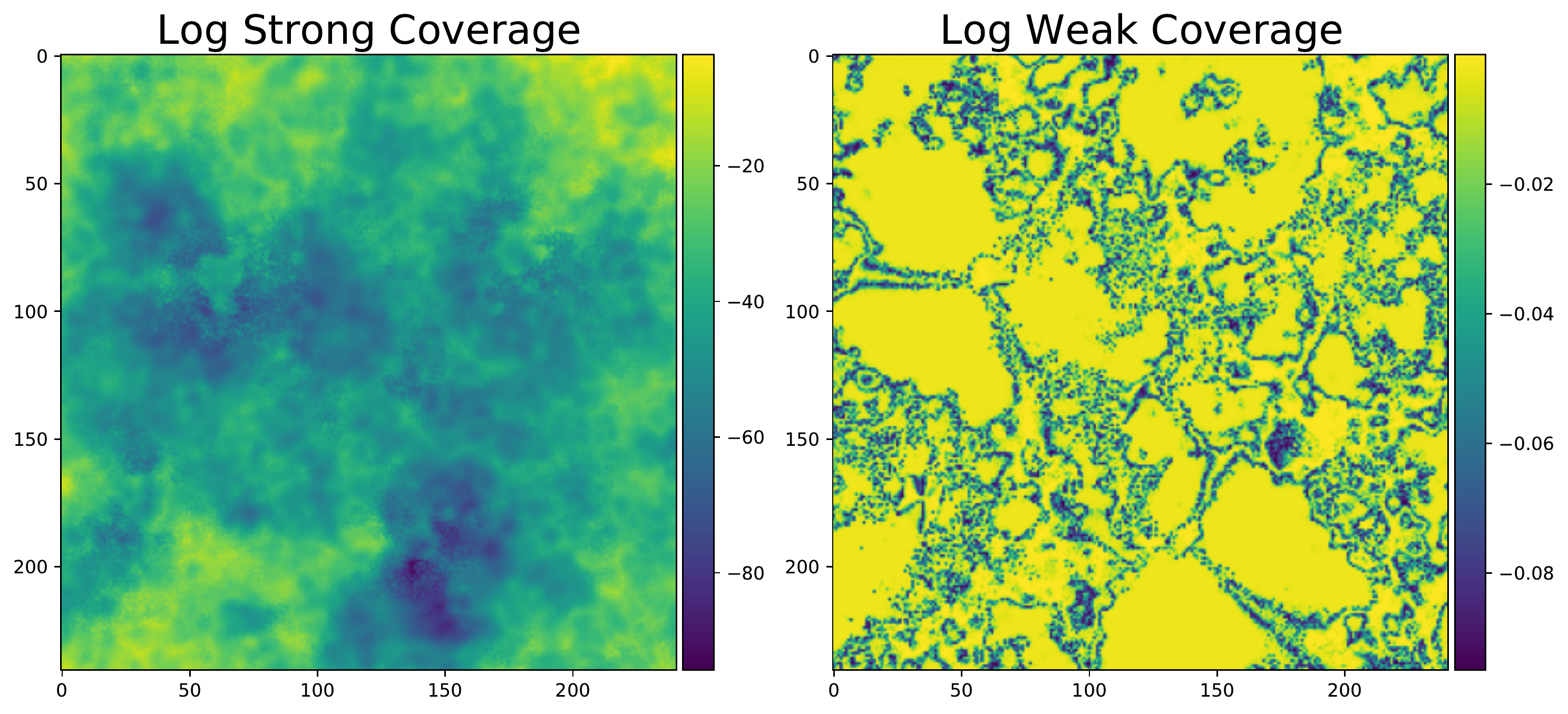}
		\caption{Coverage metrics, composite EI, HOGP + GP}
	\end{subfigure}
	\caption{\textbf{(a)} Coverage map obtained by optimizing EI on the aggregate metric, which yields the weak and strong coverage metrics (shown on log scale) \textbf{(b)}. \textbf{(c)} Coverage map obtained by optimizing EI using a composite objective with HOGP+GP, which yields the weak and strong coverage metrics (log scale) \textbf{(d)}. Composite BO with the HOGP yields distinctive differences between the best patterns found on the coverage metrics.}
	\label{fig:hogp_coverage_maps}
\end{figure}

This code was provided to us on request by the authors of \citet{dreifuerst2020optimizing}.

On the optics problem, we used the simulator of \citet{sorokin2020interferobot}, initialized with $20$ samples and ran for $115$ steps with $64$ MC samples, $64$ raw samples for initialization with a batch limit of $1.$ 
We used the same computing infrastructure as on the coverage problem above. 
to convert the problem of optimizing visibility into a BO problem rather than a reinforcement learning one, we reset the simulator to $(1e-4, 1e-4, 1e-4, 1e-4)$ each time we queried the problem and optimized the log visibility.

\begin{figure}[h!]
	\begin{subfigure}{\textwidth}
		\includegraphics[width=\linewidth]{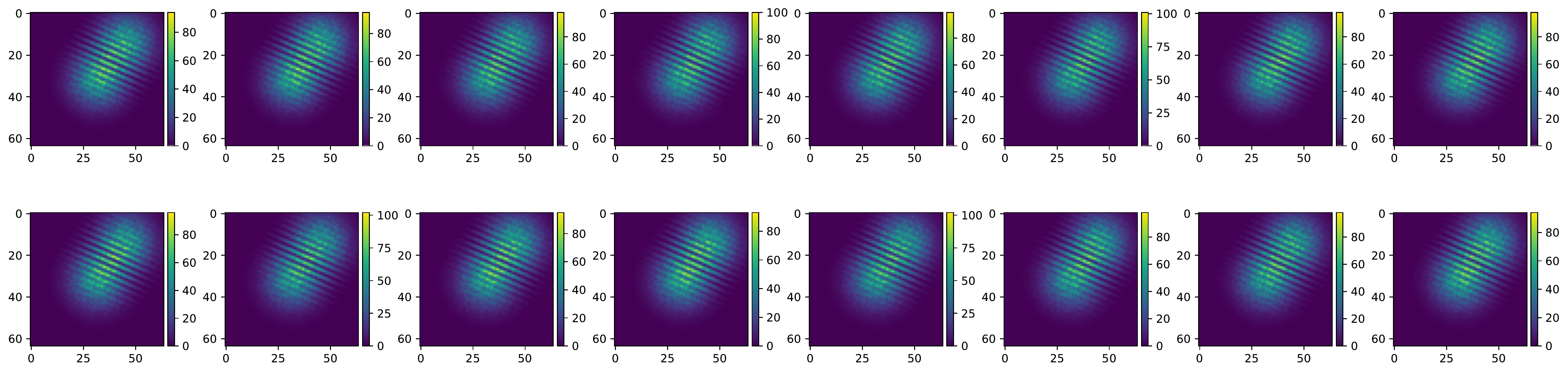}
		\caption{Random solution (visibility $\approx 3e^{-5}$)}
	\end{subfigure}
	\begin{subfigure}{\textwidth}
		\includegraphics[width=\linewidth]{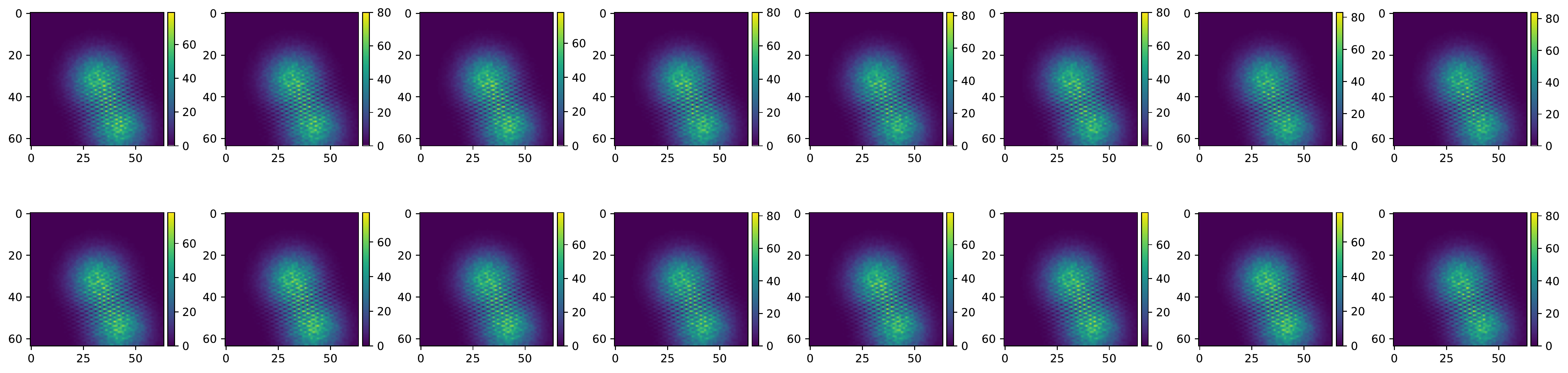}
		\caption{Solution found from EI on the metric (visibility $\approx 0.44)$)}
	\end{subfigure}
	\begin{subfigure}{\textwidth}
		\includegraphics[width=\linewidth]{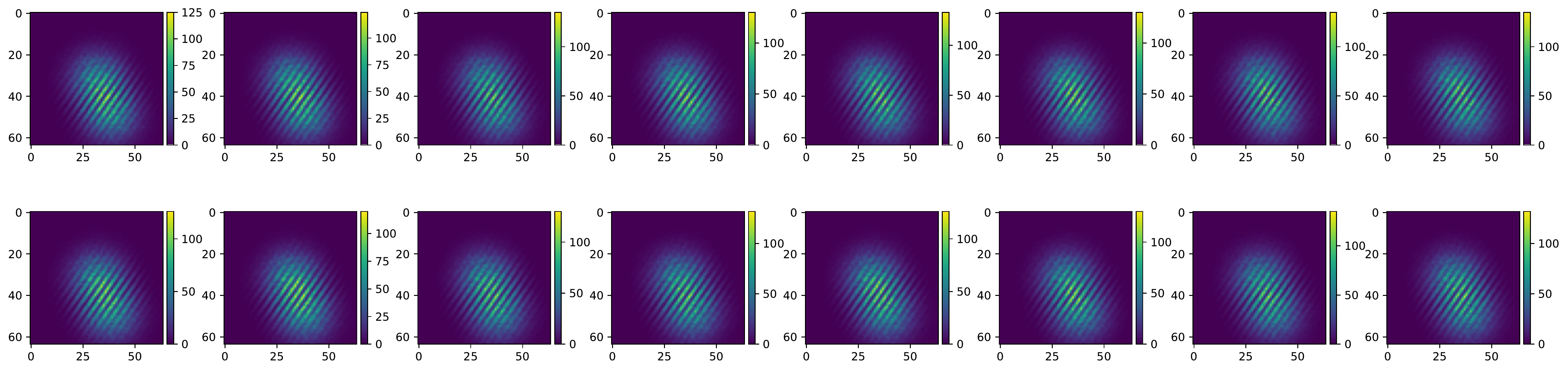}
		\caption{Solution found from composite EI with the HOGP (visibility $\approx 0.94$)}
	\end{subfigure}
	\caption{Example simulator outputs for the optics problem. \textbf{(a)} a random movement produces very unaligned lights, while \textbf{(b)} EI on the metric itself somewhat aligns the two light sources. \textbf{(c)} Running composite BO with the HOGP model produces much more aligned light sources that are presented as much brighter on the scales.}
	\label{fig:optics_outputs}
\end{figure}

To increase signal, we up-weighted the center of the image as 
\begin{align*}
	\text{Intensity}_t &:= \sum_{i,j} \exp\{-(i/64 - 0.5)^2 -(j/64 - 0.5)^2\} * I_t \\
	I_{\text{max}} &= \text{LogSumExp}(\text{Intensity}_t) \\
	I_{\text{min}} &= -\text{LogSumExp}(-\text{Intensity}_t) \\
	V &= (I_{\text{max}} - I_{\text{min}}) / (I_{\text{max}} + I_{\text{min}})
\end{align*}
and maximized the logarithm of the visibility (V), where $I_t$ is the $t$th output of the model (there are $16$ outputs, each is of shape $64 \times 64$).

We show several results from a single run in Figure \ref{fig:optics_outputs}, where we see that only EI on the HOGP is able to at least partially align the two sets of mirrors; a random solution and EI on the metric keep the light coming from the two mirrors apart.
The simulator itself comes from \url{https://github.com/dmitrySorokin/interferobotProject} (MIT License).

\end{document}